\theoremstyle{plain}
\theoremstyle{definition}
\theoremstyle{remark}
\theoremstyle{definition}
\newcommand{\X}{\mathbf{X}}
\newcommand\indep{\protect\mathpalette{\protect\independenT}{\perp}}
\def\independenT#1#2{\mathrel{\rlap{$#1#2$}\mkern2mu{#1#2}}}
\newcommand\notindependent{\!\perp\!\!\!\!\not\perp\!}
\newcommand{\smallcomment}[1]{\small{#1}}
\icmltitlerunning{Detecting and Identifying Selection Structure in Sequential Data}
\begin{document}

\twocolumn[
\icmltitle{Detecting and Identifying Selection Structure in Sequential Data}



\icmlsetsymbol{equal}{*}

\begin{icmlauthorlist}
\icmlauthor{Yujia Zheng}{1}
\icmlauthor{Zeyu Tang}{1}
\icmlauthor{Yiwen Qiu}{1}
\icmlauthor{Bernhard Schölkopf}{2}
\icmlauthor{Kun Zhang}{1,3}
\end{icmlauthorlist}

\icmlaffiliation{1}{Carnegie Mellon University}
\icmlaffiliation{2}{Max Planck Institute for Intelligent Systems}
\icmlaffiliation{3}{MBZUAI}


\icmlkeywords{Machine Learning, ICML}

\vskip 0.3in
]



\printAffiliationsAndNotice{}  

\begin{abstract}
    \looseness=-1
   We argue that the selective inclusion of data points based on latent objectives is common in practical situations, such as music sequences. Since this selection process often distorts statistical analysis, previous work primarily views it as a bias to be corrected and proposes various methods to mitigate its effect. However, while controlling this bias is crucial, selection also offers an opportunity to provide a deeper insight into the hidden generation process, as it is a fundamental mechanism underlying what we observe. In particular, overlooking selection in sequential data can lead to an incomplete or overcomplicated inductive bias in modeling, such as assuming a universal autoregressive structure for all dependencies. Therefore, rather than merely viewing it as a bias, we explore the causal structure of selection in sequential data to delve deeper into the complete causal process. Specifically, we show that selection structure is identifiable without any parametric assumptions or interventional experiments. Moreover, even in cases where selection variables coexist with latent confounders, we still establish the nonparametric identifiability under appropriate structural conditions. Meanwhile, we also propose a provably correct algorithm to detect and identify selection structures as well as other types of dependencies. The framework has been validated empirically on both synthetic data and real-world music. 
\end{abstract}
\section{Introduction}

\looseness=-1
Selection arises from the preferential inclusion of certain data points based on latent variables that are dependent on some of the observed variables \citep{heckman1979sample}. This selection process not only introduces bias into statistical inferences but is also a fundamental aspect of the data-generating process in various applications. For instance, in composing music, composers are guided by specific artistic goals or themes, leading them to selectively choose certain patterns of musical combinations (as combinations of basic elements) from their mind, thereby introducing dependencies among the basic elements in the music sequences \citep{schoenberg1967musiccomp}. These intentional but unmeasured selections, together with the contextual information, shape the structure of the compositions. A comprehensive understanding of the selection structure is essential for uncovering the underlying causal process and making use of it.

In sequential data, the understanding of selection plays a vital role. One essential question is whether selection leaves unique data dependence patterns that cannot be well explained by direct causal relations or latent confounding. Interestingly, as we will see in this paper, the answer is yes. Consequently, overlooking selection in such data can result in the introduction of incomplete or overcomplicated dependence models for the data. For instance, due to the sequential nature of the data, an autoregressive structure is usually assumed for the entire sequence \citep{radford2018improving}, where each variable is considered primarily a function of its predecessors, perhaps with attention, and hence might be overcomplicated. Even when latent variables are incorporated to account for more complex dependencies, treating them solely as confounders amounts to a restricted generative view--as we will show, latent confounders and latent selection are distinguishable based on observational data, indicating their different footprints in the data. These perspectives neglect the potential influence of latent selection structures, such as the possibility that some dependencies among observed variables in the data sequence are due to a latent sampling criterion acting as the selection variable, rather than direct causal relations between variables. Confusing selection with latent confounding or direct causal relations can distort our understanding of the generative process. Therefore, discovering and integrating selection structure may be essential in the modeling of sequential data, implementing a more veridical inductive bias.

\looseness=-1
Previous research related to selection mainly focuses on controlling the impact of selection in data for inference or discovery \citep{spirtes1995causal, hernan2004structural, zhang2008completeness, bareinboim2014recovering, zhang2016identifiability, CorreaTianBareinboim2019, forre2020causal, versteeg2022local, chen2024modeling}. These studies propose various conditions to mitigate the distortions caused by selection, aiming to achieve reliable statistical analysis even in the presence of selection bias, which holds significant practical value. At the same time, while these works constitute significant progress, they primarily view selection as a bias to be alleviated or a distortion to be overcome, rather than as an integral component of the data generation or selection process that requires further structural understanding. The FCI algorithm \citep{spirtes1995causal, zhang2008completeness} aims to discover ancestral relations up to an equivalence class in the presence of latent confounders and selection bias, but significant ambiguities remain for the selection structure. To truly discover the mechanism underlying the data, it is natural to explore deeper into the structure of selection, such as differentiating between dependencies caused by selection and those arising from confounding or direct causal relations. Recent work by \citet{kaltenpoth2023identify} introduces methods to detect whether or not selection bias is an issue for the considered population under specific parametric assumptions. Nevertheless, the results do not indicate any structure of the selection process, leaving a gap in comprehensively understanding the role of selection.

\looseness=-1
In this paper, we explore the selection structure in sequential data to gain insights into underlying mechanisms through the additional lens of selection. We first prove the identifiability of the selection structure without any parametric assumptions and interventional experiments (Theorem~\ref{thm:1}). Specifically, we can detect and identify different types of dependencies, including selection, direct causal relations, or both, in sequential data. Furthermore, even in cases with latent confounders, which add significant complexity, we prove that the selection structure can still be identified under additional graphical conditions (Theorem~\ref{thm:2}). Meanwhile, we also propose a provably correct algorithm to identify selection structure as well as other types of dependencies. We thus establish one of the first frameworks for uncovering the general structure of selection in sequential data from only observational data. The theoretical claims as well as the scalability of the algorithm have been validated empirically in various settings. Results on real-world music not only align with existing knowledge but also hint at new discoveries.

\begin{figure}
    \centering
    \vspace{0.5em}
    \includegraphics[height=10em]{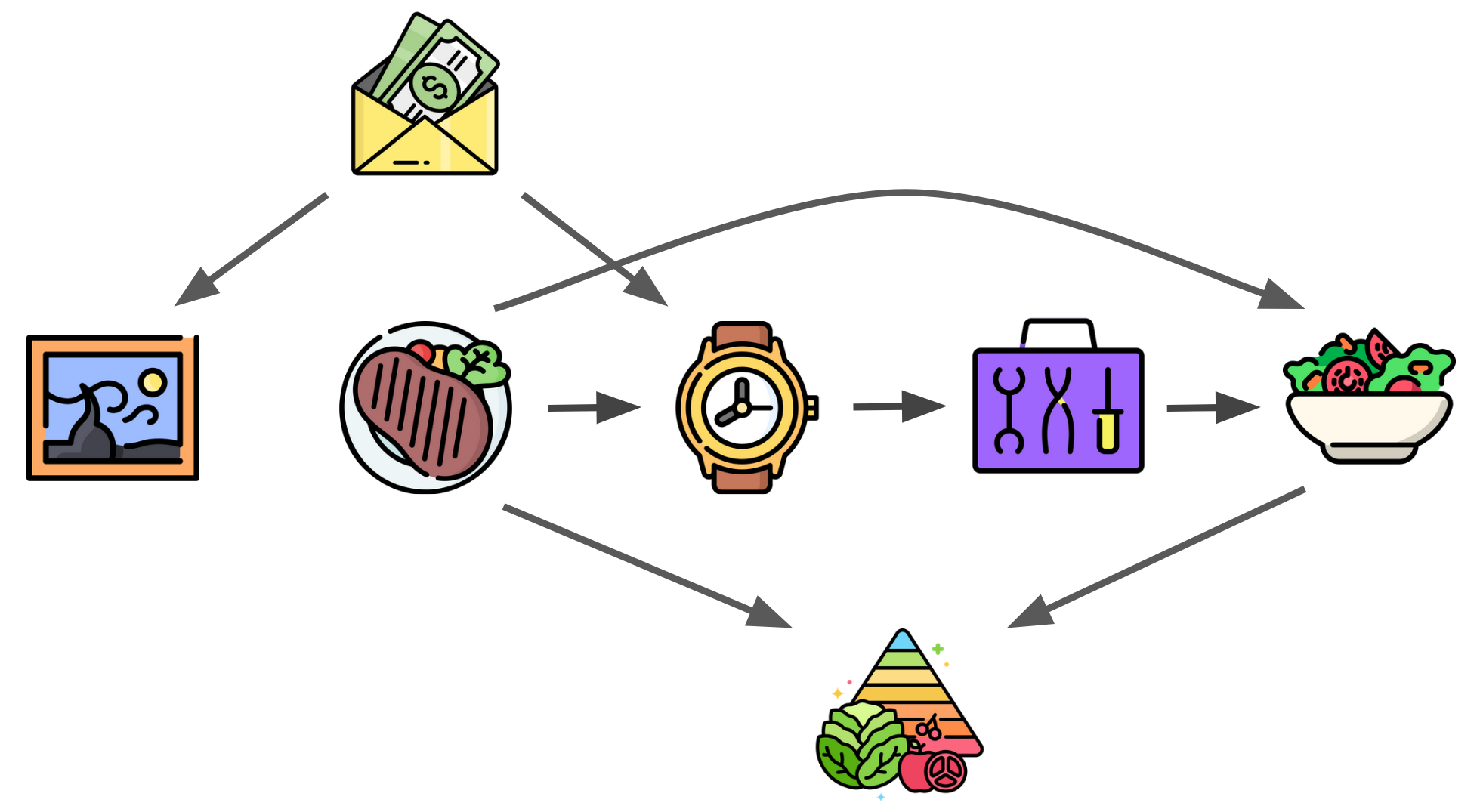}
    \caption{
    \textbf{Example of the selection structure} in the presence of latent confounders and direct relations. The figure depicts a sequence in a user's shopping history with a healthy lifestyle \includegraphics[width=2.5mm]{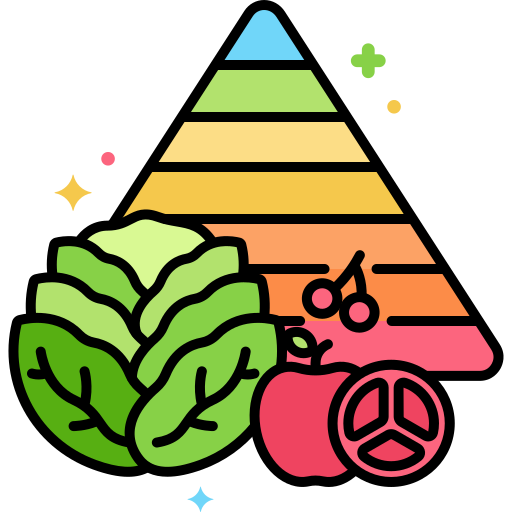}. Initially, the user buys an expensive painting \includegraphics[width=2.5mm]{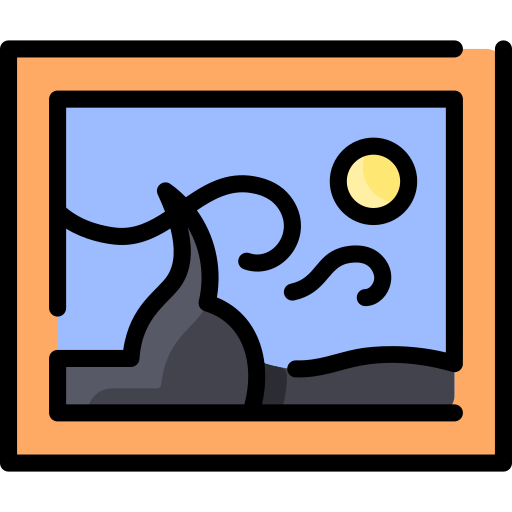} from a gallery. Then the user visits a mall and first enjoys a steak meal \includegraphics[width=2.5mm]{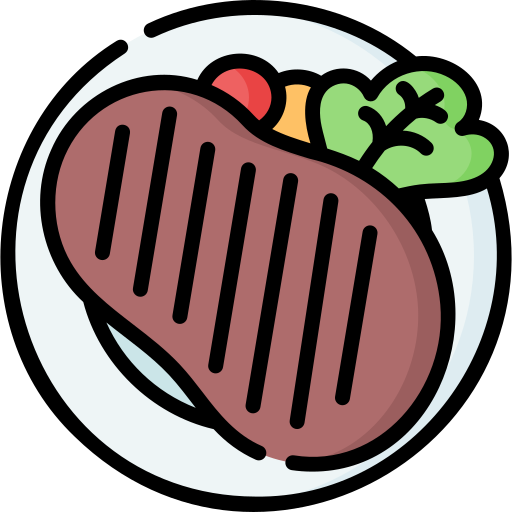} at a restaurant. After that, the user spots a nearby watch store and then buys a pricey watch \includegraphics[width=2.5mm]{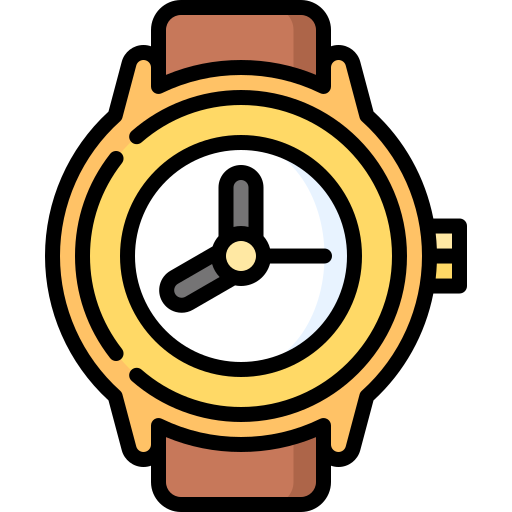} with its accompanying toolkit \includegraphics[width=2.5mm]{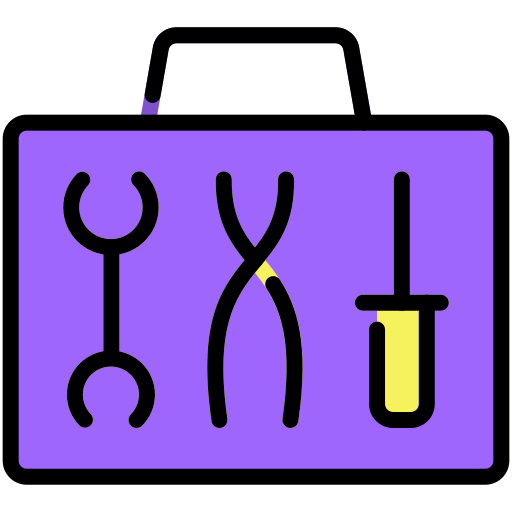} at the mall. Then, to adhere to a balanced diet, the user purchases salads \includegraphics[width=2.5mm]{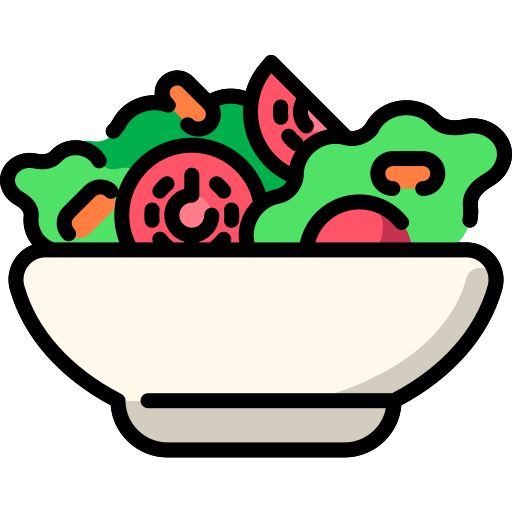} from another nearby store within the mall. This data sequence was selected for a study investigating the daily habits of individuals identified as leading a healthy lifestyle. Therefore, the healthy lifestyle \includegraphics[width=2.5mm]{figures/icon/health.png} acts as a \textbf{selection variable}, of which one of the contributing factors is a balanced diet, exemplified by choices such as steak \includegraphics[width=2.5mm]{figures/icon/steak.png} and salad \includegraphics[width=2.5mm]{figures/icon/salad.png}. Concurrently, the user's financial status \includegraphics[width=2.5mm]{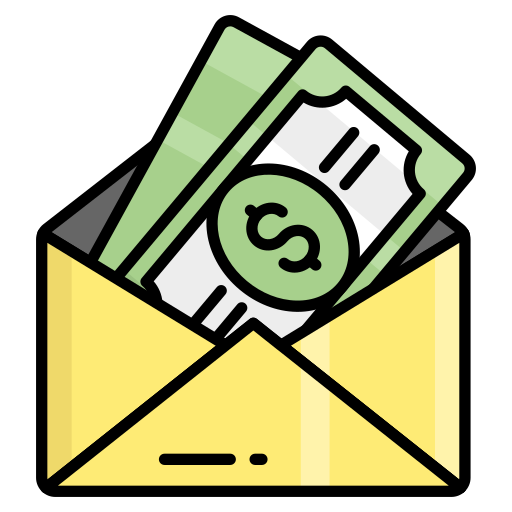} is a latent confounder (a common cause) for the user to buy both the painting \includegraphics[width=2.5mm]{figures/icon/painting.png} and the watch \includegraphics[width=2.5mm]{figures/icon/watch.png}. While there is no direct causal relation between having the steak \includegraphics[width=2.5mm]{figures/icon/steak.png} and the painting \includegraphics[width=2.5mm]{figures/icon/painting.png}, buying salad \includegraphics[width=2.5mm]{figures/icon/salad.png} is directly caused by having the steak \includegraphics[width=2.5mm]{figures/icon/steak.png}.
    \vspace{-0.9em}}
    \label{fig:example}
\end{figure}

\vspace{-0.3em}
\section{Preliminaries}
\label{sec:pre}

The data $\mathbf{X} = \{X_1, X_2, \cdots, X_N\}$ consists of observed variables where each $X_i$ represents an individual one. A crucial aspect of this data is its sequential structure, characterized by a property that if there exists any causal relation $X_i \rightarrow X_j$, where $X_i, X_j \in \mathbf{X}$, it holds that $i < j$. This structure is especially pertinent in temporal data, encapsulating the principle that future events cannot influence the past. The underlying causal structure is defined by a directed acyclic graph $\mathcal{G} = \{\mathbf{V}, \mathbf{E}\}$, where $\mathbf{V} = \{\mathbf{X}, \mathbf{S}, \mathbf{C}\}$ encapsulates all the observed variables $\mathbf{X}$, latent selection variables $\mathbf{S}$, and latent confounders $\mathbf{C}$. Each directed edge represents a direct causal relation.

\looseness=-1
The selection is viewed as follows: Let $S_k \in \mathbf{S}$ denotes a selection variable, which is part of the data-generating process. We can only observe the data points for which the selection criterion is met, i.e., $S_k=1$. As a consequence, the data distribution is actually the conditional one $P(\mathbf{X}|\mathbf{S})$. Moreover, selection variables depend on observed variables and introduce dependence between them. For example, in $X_i \rightarrow S_k \leftarrow X_j$, there is always a dependence between $X_i$ and $X_j$ since $S_k$ is always being conditioned on. To differentiate selection variables from others, we illustrate them using double-circle nodes.
Since our task is to distinguish different types of dependencies in sequential data, we assume that all selection variables depend on more than one observed variable. In addition, we denote a latent confounder as $C_k \in \mathbf{C}$, which is a hidden common cause of two (or more) observed variables. We allow the coexistence of latent confounders and selection variables, provided that there are no direct causal relations between these latent confounders and selection variables. Moreover, we define different types of dependencies between variables as follows, with an illustrative example including all of them in Figure \ref{fig:example}.

\begin{restatable}{definition}{selection_pair}(Selection Pair)
    A pair $(X_i, X_j)$ is a \textit{selection pair}, denoted as ${(X_i, X_j)}_\mathrm{s}$, if there is a selection variable $S_k$ such that $X_i \rightarrow S_k \leftarrow X_j$.
\end{restatable}

\begin{restatable}{definition}{direct_relation}(Direct Relation)
    A pair $(X_i, X_j)$ is a \textit{direct relation}, denoted as ${(X_i, X_j)}_\mathrm{x}$, if there exists a directed edge $X_i \rightarrow X_j$  in the causal graph. It is a higher-order direct relation if $j > i+1$.
\end{restatable}

\begin{restatable}{definition}{confounded_pair}(Confounded Pair)
    A pair $(X_i, X_j)$ is a \textit{confounded pair}, denoted as ${(X_i, X_j)}_\mathrm{c}$, if there is a latent confounder $C_k$ such that $X_i \leftarrow C_k \rightarrow X_j$.
\end{restatable}

Given only the observed variables $\mathbf{X}$, our goal is to identify these different types of dependencies. Specifically, we aim to determine the specific type for all \textit{dependent pairs} among measured variables in the sequential data. We denote a pair $(X_i, X_j)$ as a \textit{dependent pair} $(X_i, X_j)_\mathrm{d}$, if it is a selection pair, direct relation, or confounded pair, i.e., $d \in \{\mathrm{s}, \mathrm{x}, \mathrm{c}\}$.
Note that there could be multiple selection variables or latent confounders affecting a pair of variables, and we represent them as a single entity (e.g., $S_k$ or $C_k$) for simplicity.

\vspace{-0.3em}

\section{Identifiability of Selection Structure}
\label{sec:theory}

\begin{figure}[t]
    \centering
    \includegraphics[height=5.1em]{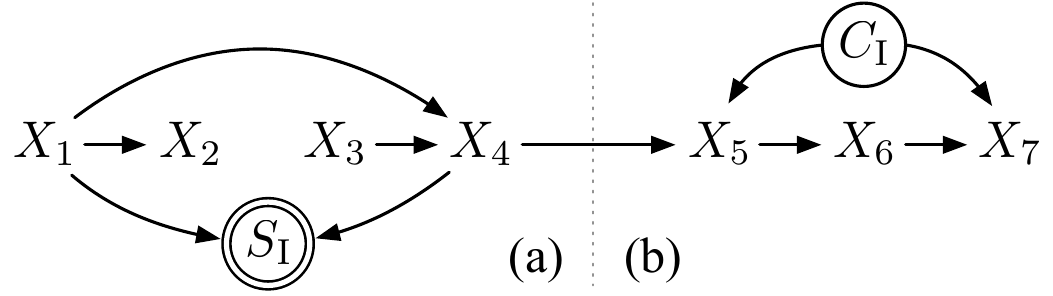}
     \caption{A running example for our identifiability theory.}
    \label{fig:running_example}
    \vspace{-0.6em}
\end{figure}

Is the structure identifiable when selection coexists with other types of dependencies, as illustrated in Figure \ref{fig:running_example}?

\looseness=-1
To answer this, we first establish the identifiability of selection structure without any parametric assumptions or interventional experiments (Section \ref{sec:thm_1}). Then we extend the identifiability result to cases with latent confounders (Section \ref{sec:thm_2}). After that, we illustrate alternative representations of the selection structures (Section \ref{sec:sparsity}). Finally, we propose a provably correct algorithm to identify selection structure as well as other types of dependencies (Section \ref{sec:algorithm}).

\vspace{-0.3em}
\subsection{Identifiability Without Latent Confounders}
\label{sec:thm_1}

We first show that the selection structure is identifiable in scenarios without latent confounders. For instance, in part (a) of Figure \ref{fig:running_example}, we need to distinguish the selection pair ($X_1 \rightarrow S_I \leftarrow X_4$) from the direct relation ($X_1 \rightarrow X_4$). Under some structural conditions, we establish the nonparametric identifiability in sequential data by producing Theorem \ref{thm:1}, which involves the following structural conditions:

\begin{restatable}{condition}{con1} (Structural Conditions).
\label{con:1}
\vspace{-0.5em}
\begin{enumerate}[label=\roman*.,ref=\roman*]
    \item For any ${(X_i, X_j)}_\mathrm{s}$, $(X_{j-1}, X_j)_\mathrm{d}$ exists with $j-1 \neq i$, but not $(X_i, X_{j+1})_\mathrm{d}$, and $X_j$ is not the effect of multiple higher-order direct relations. No $(X_i, X_{i+1})_\mathrm{x}$ if $j=i+2$. 
    \item For any ${(X_i, X_j)}_{\mathrm{d} \neq \mathrm{s}}$, $(X_{j-1}, X_j)_\mathrm{d}$ exists, and if $X_j$ is also the cause of a selection variable, $(X_i, X_{j-1})_\mathrm{d}$ does not exist. No $(X_i, X_{j+1})_\mathrm{d}$ if $j=i+2$.
\end{enumerate}
\end{restatable}

\begin{restatable}{theorem}{thmI}
\label{thm:1}
Let the observed data be a large enough sample generated by a model defined in Section \ref{sec:pre}. In addition to the faithfulness assumption and Markov condition, suppose the following assumptions hold:
\vspace{-0.5em}
\begin{enumerate}[label=\roman*.,ref=\roman*]
  \item (No future influencing the past): For any edge between observed variables, e.g., $X_i \rightarrow X_j$, $i < j$.
  \label{assum:t1a1}
  \item (No latent confounders): $\mathbf{C} = \emptyset$.
  \label{assum:t1a2}
  \item (Exclude degenerate structures): Structural conditions \ref{con:1} are satisfied.
  \label{assum:t1a3}
\end{enumerate}
  \vspace{-0.5em}
Then all selection pairs and direct relations in the causal graph $\mathcal{G}$ are identifiable.
\end{restatable}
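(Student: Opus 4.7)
The plan is to exploit the one structural asymmetry between a direct edge and a selection pair: because every $S_k \in \mathbf{S}$ is implicitly conditioned on in the observed distribution $P(\mathbf{X}\mid \mathbf{S})$, a selection pair $X_i \to S_k \leftarrow X_j$ acts as an always-active collider path, whereas a direct edge $X_i \to X_j$ does not. This difference cannot be read off from the pair $(X_i, X_j)$ in isolation, since both make the variables dependent and no observed subset blocks either. It does, however, leave distinct fingerprints on the dependencies that $X_i$ and $X_j$ share with their sequential neighbors. With assumption~\ref{assum:t1a1} pinning down edge orientations and assumption~\ref{assum:t1a2} eliminating confounder-induced paths, the Markov condition and faithfulness reduce the claim to a finite enumeration that matches observed d-separation patterns to structural configurations.

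I would first recover the set of dependent pairs by testing, for each $(X_i, X_j)$, whether some subset of $\mathbf{X}\setminus \{X_i, X_j\}$ renders them independent in the observed distribution. By Condition~\ref{con:1}(i), the requirement $j-1 \neq i$ fails whenever $j = i+1$, so no adjacent pair can be a selection pair and every adjacent dependent pair is immediately labeled a direct relation. For a higher-order dependent pair $(X_i, X_j)$ with $j \geq i+2$, I would probe the predecessor $X_{j-1}$: under the selection hypothesis, Condition~\ref{con:1}(i) guarantees the anchoring dependency $(X_{j-1}, X_j)_\mathrm{d}$, so composing the open collider $X_i \to S_k \leftarrow X_j$ with the $X_{j-1}$-to-$X_j$ connection yields an active path and forces $X_i \not\indep X_{j-1}$, while conditioning on $X_j$ closes the non-collider at $X_j$ and restores $X_i \indep X_{j-1}\mid X_j$. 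Under the pure direct-relation hypothesis, the only path between $X_i$ and $X_{j-1}$ passes through the inactive collider $X_j$, so $X_i \indep X_{j-1}$ holds; the combined case $(X_i, X_j)_\mathrm{s}\wedge(X_i, X_j)_\mathrm{x}$ is flagged by the surviving dependence $X_i \not\indep X_{j-1}\mid X_j$ from the direct edge.

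The main obstacle will be to show that no alternative configuration forges the same observational fingerprint, and this is what the remaining clauses of Condition~\ref{con:1} are designed to prevent. The most dangerous scenario is a direct relation $X_i \to X_j$ together with $X_j$ being a parent of some selection variable $S_l$, because conditioning on $S_l$ opens the collider at $X_j$ and recreates $X_i \not\indep X_{j-1}$ just like a genuine selection pair; Condition~\ref{con:1}(ii) defuses this by banning $(X_i, X_{j-1})_\mathrm{d}$ in exactly that configuration. Similarly, the clause ``$X_j$ is not the effect of multiple higher-order direct relations'' in Condition~\ref{con:1}(i) prevents two incoming higher-order edges from jointly mimicking the selection pattern, the clause forbidding $(X_i, X_{j+1})_\mathrm{d}$ blocks forward leakage, and the $j = i+2$ special clauses eliminate the corner case where $X_{j-1} = X_{i+1}$ is itself a direct child of $X_i$ and could imitate a selection pair via a two-edge chain. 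Assembling these exclusions, every possible pattern of conditional (in)dependence in the observed distribution corresponds to exactly one labeling of the pairs as selection, direct, or both, yielding the claimed identifiability.
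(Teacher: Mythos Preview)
Your core insight---that a selection pair makes $X_j$ a non-collider on the $X_i$-to-probe path while a direct edge makes it a collider---is exactly the paper's mechanism. The gap is in the tests you propose to implement it.

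The claim ``under the pure direct-relation hypothesis, the only path between $X_i$ and $X_{j-1}$ passes through the inactive collider $X_j$, so $X_i \indep X_{j-1}$ holds'' is false in general. Nothing in Condition~\ref{con:1} prevents an ordinary sequential chain $X_i \to X_{i+1} \to \cdots \to X_{j-1}$, so $X_i$ and $X_{j-1}$ are typically \emph{marginally} dependent regardless of whether $(X_i,X_j)$ is a selection pair or a direct relation, and your fingerprint collapses. The paper handles this by conditioning not just on $X_j$ but on $\mathbf{X}_{\text{Pre}(k)\setminus i}$ (all predecessors of the probe variable except $X_i$) together with a set $\mathbf{T}$ of variables already known to sit in selection pairs at larger indices; only with this enlarged conditioning set is the path through $X_j$ guaranteed to be the \emph{sole} remaining path, so that the collider test at $X_j$ is decisive. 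The bulk of the paper's proof is a path-by-path enumeration verifying that no alternative d-connecting route survives this conditioning, and that enumeration is where most clauses of Condition~\ref{con:1} are actually discharged---not in the four high-level scenarios you list.

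A second, related issue: you fix the probe to be $X_{j-1}$, but $(X_i, X_{j-1})$ may itself be a dependent pair (Condition~\ref{con:1} does not forbid this in general), in which case $X_i$ and $X_{j-1}$ stay dependent under every conditioning set and the test is uninformative. The paper's algorithm searches backward from $j-1$ for the largest $k$ with $(X_i,X_k)$ \emph{not} already flagged as dependent, and uses that $X_k$ as the probe; clauses like the ban on $(X_i,X_{j+1})_\mathrm{d}$ for a selection pair are what guarantee such a $k$ exists and is usable.
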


\textbf{Proof (Intuition and Sketch).} \ \
\looseness = -1
The complete proof is in Appendix \ref{sec:proof_thm_1}. In particular, we develop an algorithm, Algorithm \ref{alg:short_alg} (with its full version in Appendix \ref{sec:complete_algorithm}), and prove that it can identify all selection pairs and direct relations under the assumptions. We first show that Stage One of the algorithm (lines \ref{ls3}-\ref{ls5}) detects all dependencies but not their specific types. This is achieved by conditional independence tests (CI tests) given a set of variables that blocks all other potential paths between $X_i$ and $X_j$. Subsequently, Stage Two (lines \ref{ls6}-\ref{ls8}) determines the specific type of dependence, whether it be a selection pair, a direct relation, or both. 

Consider the two examples in Figure \ref{fig:thm_1}, since the latent selection variable is always conditioned on, the only structural difference between the direct relation $(X_i, X_j)_\mathrm{x}$ (Figure \ref{fig:thm_1}(b)) and the selection pair $(X_i, X_j)_\mathrm{s}$ (Figure \ref{fig:thm_1}(a)) lies in whether $X_j$ is a collider on the path $\{X_i \cdots X_j \leftarrow \cdots X_k\}$. Thus, by blocking all other paths between $X_i$ and $X_k$, we can identify whether $X_j$ is a collider on the path by the designed CI tests. Specifically, $X_i$ is independent of $X_k$ conditioning on $X_j$ and some other variables if we have $(X_i, X_j)_\mathrm{s}$ but not $(X_i, X_j)_\mathrm{x}$. Thus, we are able to distinguish selection pairs from direct relations. For a pair of variables that constitute both a selection pair and a direct relation (e.g., Figure \ref{fig:running_example}(a)), we can identify them since they are dependent on each other, regardless of whether we condition on $X_j$ or not. At the same time, given the flexibility of the potential structures, there are many different cases that need to be considered, and we include them all in Appendix \ref{sec:proof_thm_1}.

\begin{figure}[t]
\centering
\begin{subfigure}{.5\columnwidth}
    \centering
    \includegraphics[height=3em]{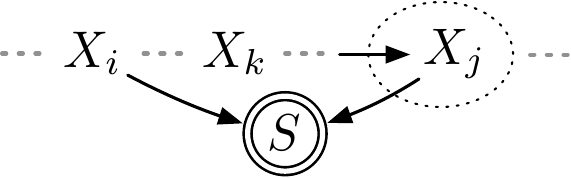}
    \caption{}
\end{subfigure}%
\begin{subfigure}{.5\columnwidth}
    \centering
    \includegraphics[height=3em]{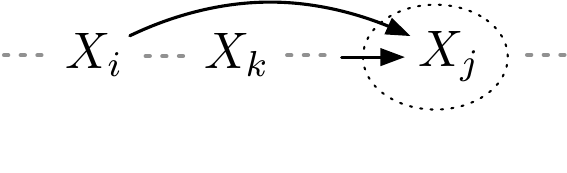}
    \caption{}
\end{subfigure}
\vspace{-2.2em}
\caption{Illustration of the intuition for the proof of Theorem \ref{thm:1}. In general, we distinguish the selection pair (a) from the direct relation (b) based on whether $X_j$ is a collider on the path.}
\label{fig:thm_1}
\end{figure}

\begin{figure}[t]
\centering
\begin{subfigure}{.5\columnwidth}
    \centering
    \includegraphics[height=3.4em]{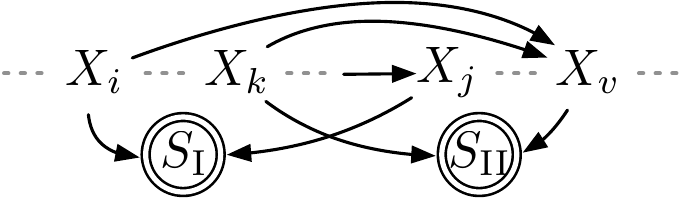}
    \caption{}
\end{subfigure}%
\begin{subfigure}{.5\columnwidth}
    \centering
    \includegraphics[height=3.4em]{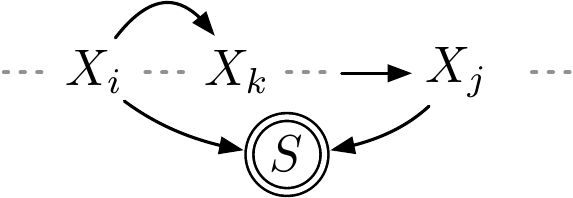}
    \caption{}
\end{subfigure}
\vspace{-2.2em}
\caption{Examples of the degenerate structures in Theorem \ref{thm:1}.}
\label{fig:con_1}
\vspace{-1.2em}
\end{figure}

\textbf{Discussion on Assumptions.} \ \
The first two assumptions are from the considered setting. Assumption \ref{thm:1}-\ref{assum:t1a1} corresponds to the sequential structure of the data that is common in various scenarios, including but not limited to temporal data. Note that we do not assume a simple chain structure between variables as there could be arbitrary disconnected pairs in the sequence. Assumption \ref{thm:1}-\ref{assum:t1a2} ensures that the only latent variables are selection variables.


However, since all selection variables are unobserved and we are only given the selected sub-population, uncovering any information regarding the latent selection is clearly non-trivial. For instance, \citet{kaltenpoth2023identify} employ parametric assumptions solely to detect the presence of selection bias in the entire dataset without any further structural information. In contrast, we aim to not only detect the existence but also identify the structure of selection, and thus some additional assumptions are necessary. Instead of introducing parametric constraints, we consider structural conditions to exclude several degenerate cases. 

\looseness=-1
The intuition of these structural conditions (Condition \ref{con:1} in Assumption \ref{thm:1}-\ref{assum:t1a3}) stems naturally from the difference between different types of dependencies. As outlined in the proof sketch, the main focus is on determining whether $X_j$ acts as a collider on the path $\{X_i \cdots X_j \leftarrow \cdots X_k\}$ (see Figure \ref{fig:thm_1}). For this, it is crucial that $X_{j-1}$ is adjacent to $X_j$, indicated by the existence of $(X_{j-1}, X_j)_\mathrm{d}$. Without this adjacency, $X_j$ cannot function as a collider. Additionally, we also need to block all other paths between $X_i$ and $X_k$. Achieving this requires a carefully chosen conditioning set for d-separations \citep{pearl2000models}, accompanied by assumptions to exclude degenerate cases where the inclusion or exclusion of a variable in the conditioning set invariably leads to a d-connected path. For instance, if an observed variable $X_v$ ($v \neq j$) is caused by both $X_i$ and $X_k$ and is part of a selection pair $(X_i, X_v)_\mathrm{s}$ (e.g., Figure \ref{fig:con_1}(a)), a d-connected path between $X_i$ and $X_k$ will persist regardless of conditioning on $X_v$. Moreover, it is necessary to ensure the existence of such an $X_k$ where $(X_i, X_k)$ does not form a dependent pair (e.g., Figure \ref{fig:con_1}(b)); otherwise, they will always be d-connected. Together with the algorithm (Algorithm \ref{alg:short_alg}), some structural conditions are needed to address cases like these. As a result, we introduce Condition \ref{con:1} for the nonparametric identifiability of the selection structure given only observed data.

\textbf{Implications.} \ \
\looseness=-1
By establishing the nonparametric identifiability of the selection structure, we demonstrate the feasibility of going beyond the conventional generative view to examine specific selection processes in sequential data. This offers additional insights, potentially revealing overlooked aspects of certain inductive biases. For example, if the dependence between two variables is found to be due to selection rather than a direct causal relation, modeling all observed variables with an autoregressive structure could introduce significant bias. Consider a scenario in financial market analysis, where an algorithm examines stock return movements over time. In this sequence, $X_i$ represents the stock return at time $i$, and $X_j$ represents the stock price at a later time $j$ (where $j > i$). A common assumption might be a direct relation, suggesting that $X_i$ influences $X_j$. However, if the dataset only includes days where both $X_i$ and $X_j$ exceed a certain threshold – perhaps due to an internal data selection policy focusing on days with significant market activity – this introduces a selection structure. If selection is overlooked in data analysis, we might incorrectly interpret $X_j$ as a variable directly influenced by $X_i$, and consequently model $X_j$ as being generated by $X_i$ along with other factors. 

\looseness=-1
Furthermore, the knowledge of latent selection structures also has the potential to guide various machine learning tasks by providing more complete causal insights into the underlying mechanisms. For tasks that inherently require a trustworthy modeling process, such as interpretability, including selection structure is naturally beneficial, as it forms a crucial part of the overall causal process. Simultaneously, even for tasks that may not seem directly to benefit from a causal structure, such as pretraining, understanding the selection structure can still be helpful. In most tasks, striking a balance between inductive bias and data is essential. The revealed selection structure contributes to an inductive bias that more accurately reflects the hidden causal processes, thus enhancing the efficiency of data utilization. While a model trained with a less comprehensive inductive bias, such as a universal autoregressive structure, can still be effective, it often requires a substantial amount of data and incurs significant training costs. Incorporating potential selection structures makes it possible to adopt a more appropriate inductive bias, achieving our goals in a more cost-effective manner.

\vspace{-0.3em}
\subsection{Identifiability With Latent Confounders}
\label{sec:thm_2}

In the previous section, we established the identifiability of the selection structure in sequential data (Theorem \ref{thm:1}). While this provides a way to delve deeper into latent selection, the setting of the absence of latent confounders might not hold in certain scenarios, given that latent confounders often exist. For example, we can identify part (a) of Figure \ref{fig:running_example} but not part (b). Motivated by that, we extend our theory and demonstrate the identifiability of the selection structure even in the presence of latent confounders.

\begin{restatable}{condition}{con2}
\label{con:2}
For any $(X_i, X_j)_\mathrm{c}$, there exists no $(X_{i-1}, X_{j-1})_\mathrm{d}$, $(X_{i-1}, X_{j})_\mathrm{s}$, $(X_{i-1}, X_{i+1})_\mathrm{x}$, $(X_i, X_j)_\mathrm{s}$, $(X_{i+1}, X_{j-1})_\mathrm{d}$, and $(X_{i+1}, X_{j+1})_\mathrm{x}$. Neither $X_i$ nor $X_j$ is in another confounded pair or higher-order direct relation.
\end{restatable}

\begin{restatable}{theorem}{thmII}
\label{thm:2}
Let the observed data be a large enough sample generated by a model defined in Section \ref{sec:pre}. In addition to the faithfulness assumption and Markov condition, suppose the following assumptions hold:
\vspace{-0.5em}
\begin{enumerate}[label=\roman*.,ref=\roman*]
  \item (No future influencing the past): For any edge between observed variables, e.g., $X_i \rightarrow X_j$, $i < j$.
  \label{assum:t2a1}
  \item (Exclude degenerate structures): Structural conditions \ref{con:1} and \ref{con:2} are satisfied.
  \label{assum:t2a2}
\end{enumerate}
  \vspace{-0.5em}
Then all selection pairs, direct relations, and confounded pairs in the causal graph $\mathcal{G}$ are identifiable.
\end{restatable}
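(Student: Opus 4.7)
The plan is to extend the algorithm and proof of Theorem \ref{thm:1} to accommodate latent confounders by adding a third classification category. The overall pipeline remains: Stage 1 detects every dependent pair via CI tests against conditioning sets that block all other paths, and Stage 2 classifies each detected pair. Stage 1's validity carries over almost unchanged because every selection, direct, or confounded pair still exhibits observable dependence. The main novelty is in Stage 2, where we must now distinguish three types instead of two.

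The key new diagnostic exploits the collider footprint of a latent confounder. For a confounded pair $(X_i, X_j)_\mathrm{c}$ with $X_i \leftarrow C_k \rightarrow X_j$, the node $X_i$ is a collider on the path $X_{i-1} \rightarrow X_i \leftarrow C_k \rightarrow X_j$ whenever the direct relation $(X_{i-1}, X_i)_\mathrm{x}$ exists, and a symmetric argument applies at $X_{j-1}$ via Condition \ref{con:1}(ii). Conditioning on $X_i$ opens this path, so $X_{i-1}$ and $X_j$ become dependent given $X_i$ together with a set that blocks all other paths. In contrast, for a pure direct relation $X_i \rightarrow X_j$, the node $X_i$ sits as a non-collider on the analogous path, and conditioning on $X_i$ blocks it. This opposite behavior under conditioning on $X_i$ is what tells the two apart, and the symmetric probe at $X_{j-1}$ handles boundary cases. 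The selection-versus-direct test from Theorem \ref{thm:1} is reused for the remaining dichotomy, and Condition \ref{con:2} ensures that confounded pairs do not coexist with a selection pair on the same endpoints, so that Theorem \ref{thm:1}'s test is not disrupted by hidden confounding.

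The heavy lifting lies in the case analysis and the design of conditioning sets, which is also where the main obstacle sits. With three dependence types and possible combinations (e.g., a pair that is simultaneously direct and confounded), many local configurations must be enumerated; for each, we need a conditioning set that blocks all alternate paths while preserving exactly the diagnostic path, even in the presence of unobserved confounders, which create extra d-connections whenever a descendant of a confounder is conditioned on. Condition \ref{con:2} is engineered to eliminate precisely the configurations that would otherwise make such a set nonexistent, for instance ruling out $(X_{i-1}, X_{j-1})_\mathrm{d}$ and $(X_{i-1}, X_j)_\mathrm{s}$ that would open spurious d-connections from $X_{i-1}$ to $X_j$ independently of $X_i$, and forbidding additional confounders incident to $X_i$ or $X_j$ that would leave residual inducing paths across the neighborhood. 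The hardest step is orchestrating these probes and conditioning sets uniformly across all configurations, including those where direct and confounded relations overlap on the same pair; once this is done, the proof concludes by verifying the extended algorithm's soundness and completeness pair by pair under Conditions \ref{con:1} and \ref{con:2}.
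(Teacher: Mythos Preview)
Your core idea is right and matches the paper's: the confounder $C_k$ makes $X_i$ a collider on $X_{i-1}\rightarrow X_i\leftarrow C_k\rightarrow X_j$, and this collider footprint distinguishes confounded from direct. However, your organization and the direction of the test differ from the paper, and there is one underspecified point that matters.

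\textbf{Organizational difference.} The paper does \emph{not} fold confounder detection into Stage~2. Instead, Stage~2 is run exactly as in Theorem~\ref{thm:1}; confounded pairs are deliberately allowed to be misclassified as direct relations there, and a separate Stage~3 cleans up. The paper's Stage~3 test is the dual of yours: it \emph{removes} $X_i$ from the conditioning set (using $\mathbf{X}_{\text{Pre}(j)\setminus\{i-1,i\}}\cup\{X_{j+1}\}\cup\mathbf{T}$) and checks whether $X_{i-1}\indep X_j$; independence signals a confounded pair, dependence signals two genuine direct relations. Both test directions exploit the same collider, so your variant is not wrong, but the paper's pipeline is different.

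\textbf{The gap.} You assert that ``Stage~1's validity carries over almost unchanged,'' addressing only completeness. What you do not address is that Stage~1 now also produces \emph{spurious} dependent pairs: whenever $(X_i,X_j)_\mathrm{c}$ exists, the pair $(X_{i-1},X_j)$ is falsely detected in Stage~1, because $X_i\in\mathbf{X}_{\text{Pre}(j)\setminus\{i-1\}}$ is conditioned on and opens the collider path. The paper's proof handles this explicitly: after Stage~2, the signature of a confounded pair is precisely that \emph{both} $(X_{i-1},X_j)$ and $(X_i,X_j)$ appear as direct relations, and Stage~3 is triggered only on that pattern, simultaneously identifying $(X_i,X_j)_\mathrm{c}$ and deleting the spurious $(X_{i-1},X_j)$. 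Your two-stage plan classifies each Stage~1 pair individually into one of three types, but the spurious pair $(X_{i-1},X_j)$ belongs to none of them; you have no mechanism stated for recognizing and discarding it. Without that, the algorithm would output a false direct relation $X_{i-1}\rightarrow X_j$ alongside the (correctly detected) confounded pair. To close this, you either need to adopt the paper's pattern-based Stage~3, or make explicit how your per-pair test on $(X_{i-1},X_j)$ returns ``none of the above'' and prunes it.
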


\textbf{Proof (Intuition and Sketch).} \ \
\looseness=-1
The complete proof is in Appendix \ref{sec:proof_thm_2}. Similar to Theorem \ref{thm:1}, we prove that the algorithm (Algorithm \ref{alg:short_alg}) identifies the dependent pairs at Stage One and distinguishes selection pairs from others after Stage Two. However, due to the possibility of latent confounders, there exist some spurious direct relations corresponding to the unidentified confounded pairs after Stage Two. This is because, when conditioning on $X_i$ for $(X_i, X_j)_\mathrm{c}$ (e.g., Figure \ref{fig:thm_2}), $X_{i-1}$ and $X_j$ will be d-connected by the path $\{X_{i-1} \rightarrow X_i \leftarrow C \rightarrow X_j$. Because both the confounded pair and direct relation have an arrowhead pointing to $X_j$, if there exists $(X_i, X_j)_\mathrm{c}$, both $(X_i, X_j)$ and $(X_{i-1}, X_j)$ will be identified as direct relations after Stage Two according to Theorem \ref{thm:1}. Thus, we introduce Stage Three to identify spurious direct relations, specifically to distinguish Figure \ref{fig:thm_2}(a) from Figure \ref{fig:thm_2}(b), which is achieved by designed CI tests. Meanwhile, since the introduced latent confounders largely broaden the space of possible structures, we elaborate all potential cases in the complete proof (Appendix \ref{sec:proof_thm_2}).

\begin{figure}[t]
\centering
\begin{subfigure}{.5\columnwidth}
    \centering
    \includegraphics[height=2.6em]{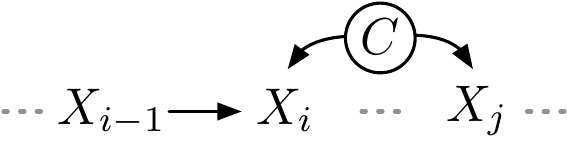}
    \caption{}
\end{subfigure}%
\begin{subfigure}{.5\columnwidth}
    \centering
    \includegraphics[height=2.6em]{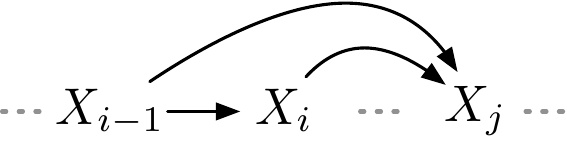}
    \caption{}
\end{subfigure}
\vspace{-2em}
\caption{Intuition for identifying confounded pairs in Theorem \ref{thm:2}. In general, we distinguish the confounded pair (a) from the spurious direct relations (b) by identifying the collider $X_i$.}
\label{fig:thm_2}
\end{figure}

\begin{figure}[t]
\centering
\begin{subfigure}{.5\columnwidth}
    \centering
    \includegraphics[height=4em]{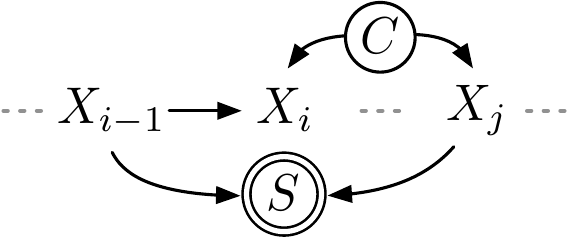}
    \caption{}
\end{subfigure}%
\begin{subfigure}{.5\columnwidth}
    \centering
    \includegraphics[height=4em]{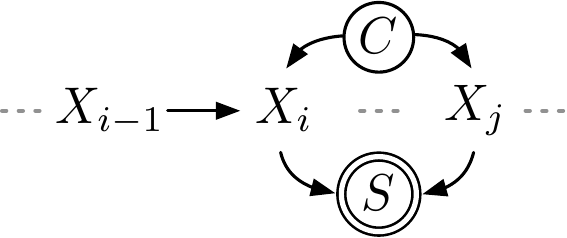}
    \caption{}
\end{subfigure}
\vspace{-2em}
\caption{Examples of the degenerate structures in Theorem \ref{thm:2}.}
\label{fig:con_2}
\vspace{-1.2em}
\end{figure}

\textbf{Discussion on Assumptions.} \ \
\looseness=-1
Compared to Theorem \ref{thm:1}, we remove the assumption of no latent confounders in Theorem \ref{thm:2}, thus increasing the applicability of our theory. However, this extension comes with a trade-off: the inclusion of latent confounders alongside latent selection variables significantly increases the complexity of the problem. To address this, it is necessary to have additional structural conditions (i.e., Conditions \ref{con:2} in Assumption \ref{thm:2}-\ref{assum:t2a2}).

\looseness=-1
As mentioned in the proof sketch, to identify the added latent confounders, we need to distinguish them from the spurious direct relations after Stage Two, i.e., differentiating Figures \ref{fig:thm_2}(a) with \ref{fig:thm_2}(b). Notably, in Figure \ref{fig:thm_2}(a), $X_i$ acts as a collider on the path $\{X_{i-1} \rightarrow X_{i} \leftarrow C \rightarrow X_{j}\}$, a structure absent in Figure \ref{fig:thm_2}(b). This allows us to detect confounded pairs by testing for additional independence created by this collider structure. A crucial requirement is that $X_{i-1}$ and $X_{i}$ do not form dependent pairs with $X_{j}$, as this would invariably lead to dependence between $X_{i-1}$ and $X_j$. For example, if $(X_{i-1}, X_{j})_\mathrm{d}$ exist, $X_{i-1}$ and $X_j$ remain d-connected due to the non-inclusion of $X_{i-1}$ in the conditioning set (Figure \ref{fig:con_2}(a)). This is also the case if $(X_{i}, X_{j})_{\mathrm{d} \neq \mathrm{c}}$ exists (Figure \ref{fig:con_2}(b)), since $X_i$ should not be in the conditioning set for testing the extra independence introduced by $X_i$ acting as a collider on the path. Moreover, we need to make sure that the only d-connected path between $X_{i-1}$ and $X_j$ is $\{X_{i-1} \rightarrow X_{i} \leftarrow C \rightarrow X_{j}\}$ in Figure \ref{fig:thm_2}(a). This necessitates structural conditions, similar to those in Theorem \ref{thm:1}, to ensure that alternative paths can be effectively blocked using the designed CI tests. Additionally, the introduction of latent confounders must not obstruct the identification of other dependency types. For instance, conditions regarding confounders are needed for the algorithm to be able to identify the selection structure, such as the existence of $X_k$ in Stage Two, as outlined in the previous discussion on Theorem \ref{thm:1}. In conjunction with our proposed algorithm (Algorithm \ref{alg:short_alg}), we introduce Condition \ref{con:2} as an added structural constraint to facilitate the extension of our theory to scenarios that include latent confounders.

\textbf{Implications.} \ \
Theorem \ref{thm:2} shows that, even in the case where latent confounders exist, we can still identify selection structures in a manner. This is exciting since both selection variables and confounders are unobserved, yet we still prove that it is actually possible to distinguish them based on only observed data. Notably, this identifiability is established without imposing any parametric constraints or necessitating interventional experiments. 

In real-world scenarios, especially in sequential data, latent confounders often coexist with selection processes. For instance, policy shifts, serving as selection variables, can influence which patients are admitted at different times, thereby impacting health metrics sequentially in longitudinal health studies. At the same time, unobserved health conditions, i.e., latent confounders, may also influence these metrics. Accurately acknowledging the presence of both selection variables and latent confounders is key to developing more nuanced and complete understanding of the data and its underlying dynamics. Compared to Theorem \ref{thm:1}, which establishes identifiability of selection structures in the absence of latent confounders, Theorem \ref{thm:2} broadens the scope by encompassing latent confounders, enhancing both theoretical significance and practical utility.

\vspace{-0.3em}
\subsection{Alternative Representations of Selection Structure}
\label{sec:sparsity}

We have demonstrated that selection structures are identifiable from observed data, both in the presence and absence of latent confounders. At the same time, in line with the principle of simplicity or Occam's razor \citep{zhang2013comparison}, we may consider merging multiple selection variables into a single entity when it simplifies the overall structure.

\begin{restatable}{remark}{remarkI}
\label{remark:1}
For a set of observed variables, where every variable pair is a selection pair, the conditional dependence structure is equivalent to one where all observed variables in this set are direct causes of a single selection variable.
\end{restatable}

\begin{figure}[t]
\centering
\includegraphics[height=7em]{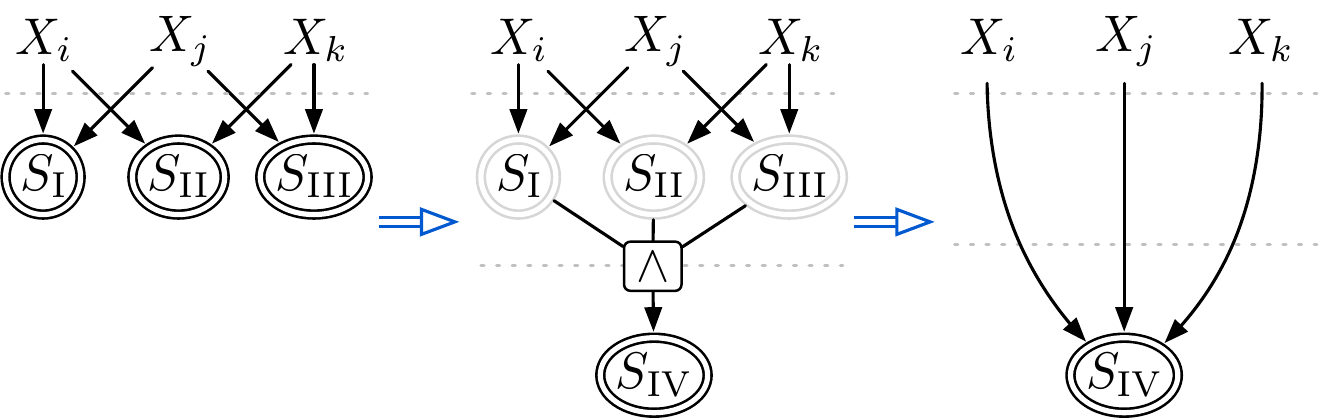}
\caption{Alternative representations of the selection structure.}
\label{fig:remark_1}
\vspace{-1.5em}
\end{figure}

This remark follows directly from the default conditioning of selection variables. If all variable pairs are selection pairs, conditioning on the selection variables d-connects each observed variable with others in the set. This is equivalent to a structure where all observed variables are d-connected by conditioning on a single selection variable acting as the common collider. In a scenario where every pair of variables within a set forms a selection pair, it is possible to construct a higher-level selection variable. This is achieved by merging all individual selection criteria using a logical AND conjunction, thereby consolidating multiple selection variables into one (e.g., Figure \ref{fig:remark_1}). For instance, in a health study assessing variables such as Heart Rate, Blood Pressure, and Cholesterol Levels, where each metric pair is affected by distinct factors like lifestyle, diet, and genetics, these factors can be unified into a singular selection variable. This composite variable, formed through the logical AND of all individual factors, simplifies the structure across the set of variables, reflecting a unified selection structure. Furthermore, if more than two observed variables depend on a single selection variable, we can always represent them in a pair-wise way, since the dependence patterns are exactly the same.


\newlength{\oldtextfloatsep}
\setlength{\oldtextfloatsep}{\textfloatsep}
\setlength{\textfloatsep}{-10em}
\begin{algorithm}[t]
    \caption{Identification of the Selection Structure (\textit{Sketch with full details in Appendix \ref{sec:complete_algorithm}}).}
    \label{alg:short_alg}
    \Input{Data $\mathbf{X}$}
    \Output{A causal graph $\mathcal{G}$}
    
    Initialize lists $\mathcal{L}$ and $\mathcal{R}$ as empty\;
    Initialize an empty edge set $\mathcal{E}$ and a set of vertices $\mathcal{V}$\;
    
    \tcp{\smallcomment{\textbf{Stage One}}}
    \ForEach{variable pair $(X_i, X_j)$}{ \label{ls3}
        Identify potential dependent pairs by conducting designed CI tests on $X_i$ and $X_j$\; \label{ls4}
        Update $\mathcal{L}$ and $\mathcal{R}$ with identified pairs\; \label{ls5}
    }
    
    \tcp{\smallcomment{\textbf{Stage Two}}}
    \ForEach{variable in $\mathcal{L}$ and $\mathcal{R}$}{ \label{ls6}
        Distinguish between selection pairs and direct relations by conducting designed CI tests\; \label{ls7}
        Update the edge set $\mathcal{E}$ accordingly\; \label{ls8}
    }
    
    \tcp{\smallcomment{\textbf{Stage Three} (only for Theorem \ref{thm:2})}}
    \ForEach{direct relation that may be confounded}{ \label{ls9}
        Identify confounded pairs while removing spurious dependencies\; \label{ls10}
        Adjust $\mathcal{E}$ based on confounded pairs\; \label{ls11}
    }
    \Return{$\mathcal{G} = \{\mathcal{V}, \mathcal{E}\}$}
\end{algorithm}
\setlength{\textfloatsep}{\oldtextfloatsep}

Additionally, structuring the selection process in this way may also help uncover a hierarchical nature of selection. This may reflect cognitive processes in decision-making. Take Figure \ref{fig:remark_1} for example, a composer crafting a musical piece might set a specific goal ($S_{\mathrm{IV}}$), such as style or topic, for the whole piece, then divide it into different lower-level goals ($S_{\mathrm{I}}$, $S_{\mathrm{II}}$, and $S_{\mathrm{III}}$) for individual passages.

\vspace{-0.3em}
\subsection{Identification Algorithm}
\label{sec:algorithm}

\looseness=-1
Alongside establishing identifiability, we propose an algorithm for the identification of selection structures, as well as direct relations and confounded pairs, using only observed data. A brief outline of it is presented as Algorithm \ref{alg:short_alg}, while comprehensive details can be found in Appendix \ref{sec:complete_algorithm} with an illustrative example in Appendix \ref{sec:example}. It should be noted that Stage Three of the algorithm is only for scenarios that include latent confounders, as addressed in Theorem \ref{thm:2}. Consequently, this stage is not needed in situations where latent confounders are absent, such as Theorem \ref{thm:1}. Our algorithm achieves a relatively lower complexity of $O(N^2)$, where $N$ denotes the number of observed variables, by searching in a pair-wise manner and focusing solely on essential CI tests required for differentiating various cases. This stands in contrast to the often exponential complexity observed in other constraint-based structure learning algorithms (e.g., the complexity of PC or FCI in the worst case \citep{spirtes2000causation}). However, we do not claim that our method is more computationally efficient, as it targets a new and distinct task.


\begin{figure*}[t]
    \centering
    \captionsetup{format=hang}

    \begin{subfigure}{.24\textwidth}
        \centering
        \includegraphics[height=8em]{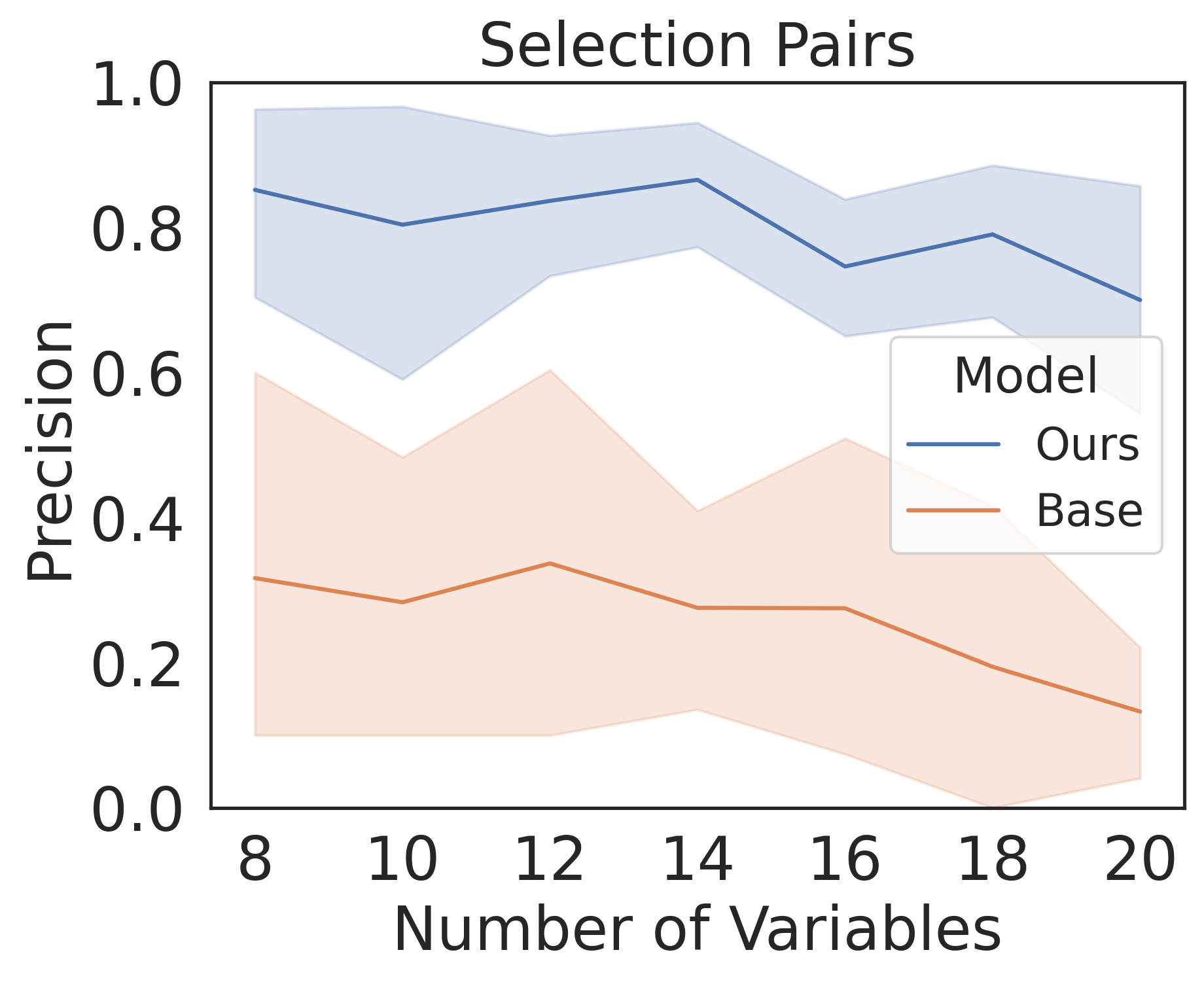}
        \label{fig:1}
    \end{subfigure}%
    \begin{subfigure}{.24\textwidth}
        \centering
        \includegraphics[height=8em]{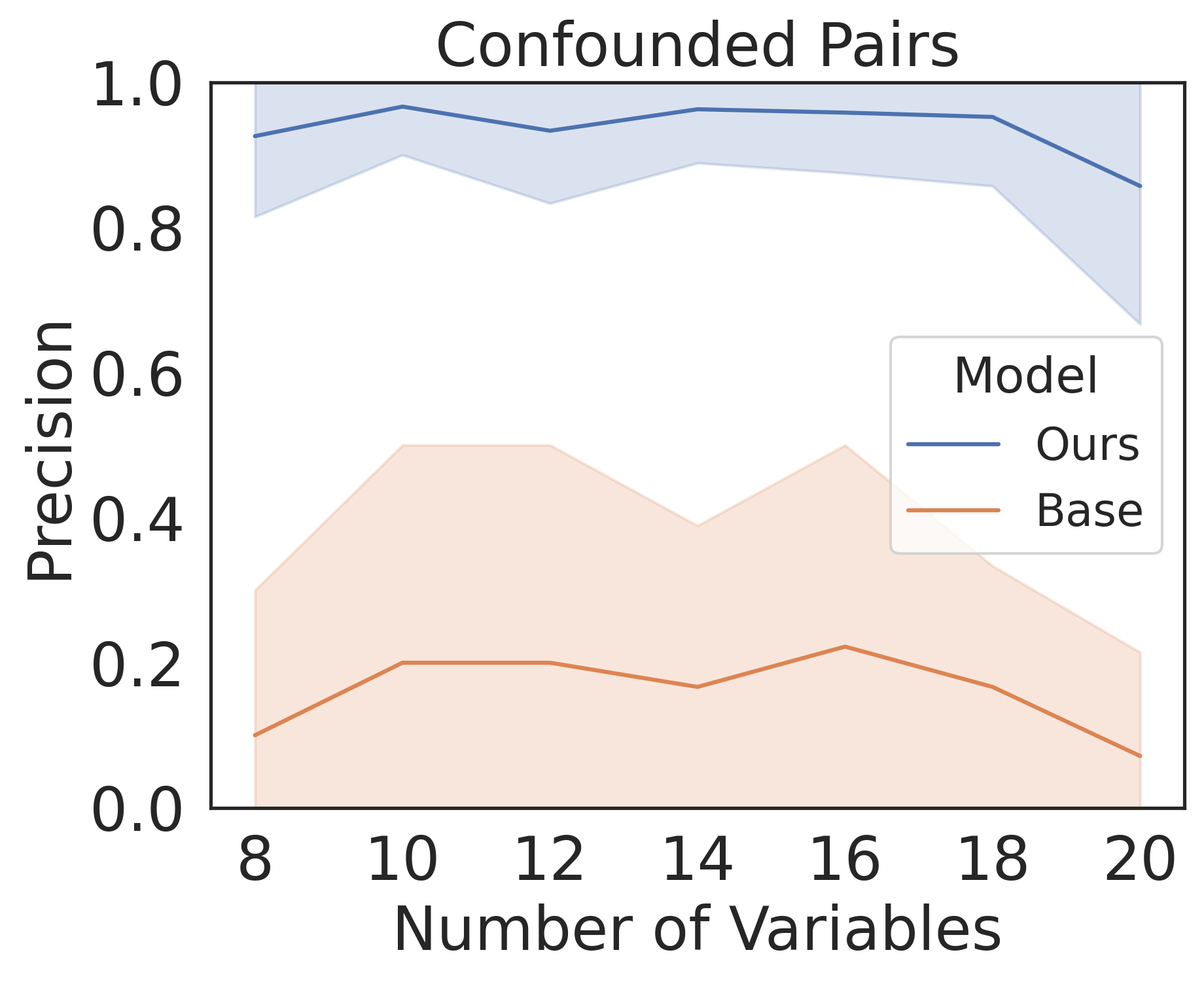}
    \end{subfigure}%
    \begin{subfigure}{.24\textwidth}
        \centering
        \includegraphics[height=8em]{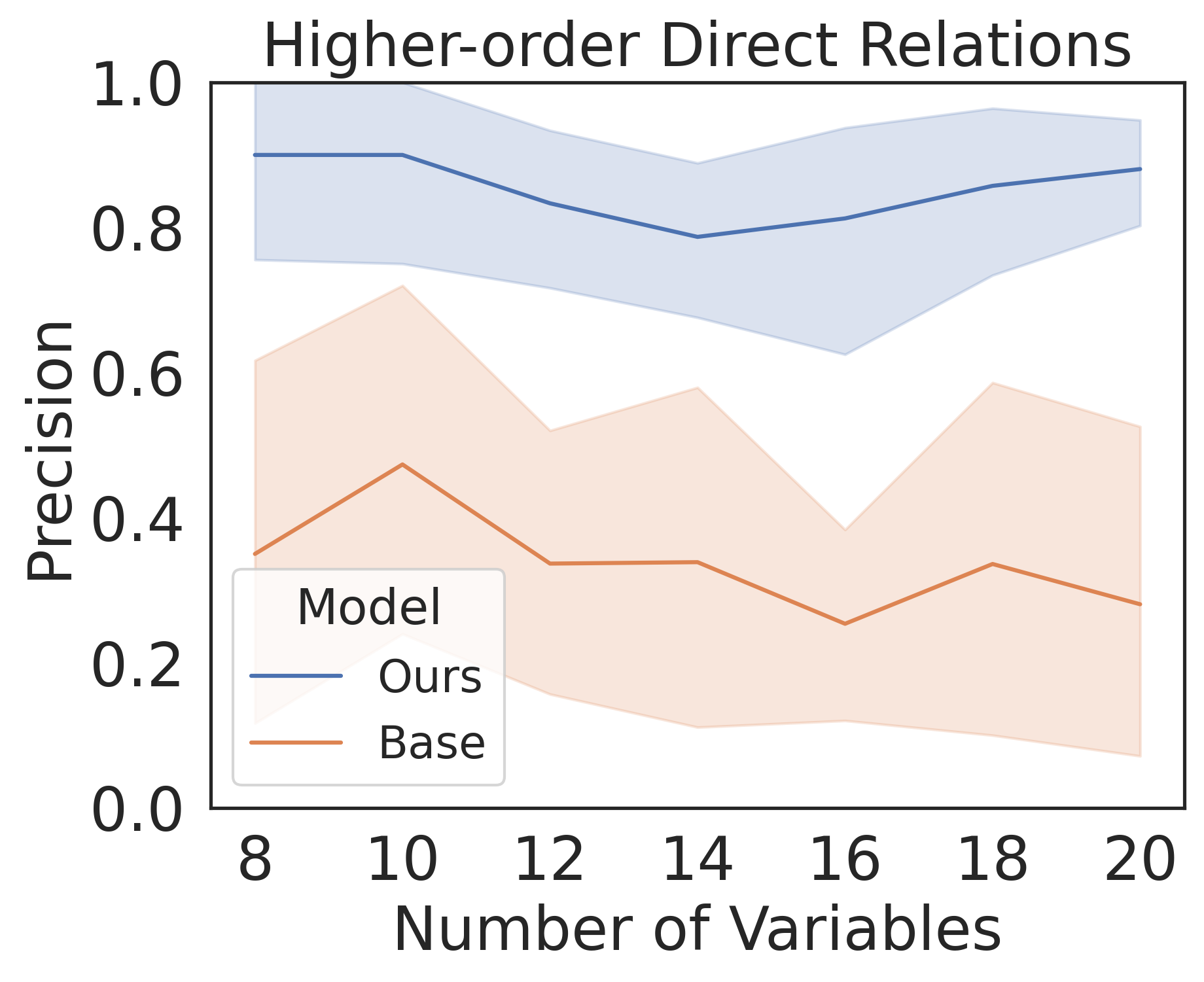}
    \end{subfigure}%
    \begin{subfigure}{.24\textwidth}
        \centering
        \includegraphics[height=8em]{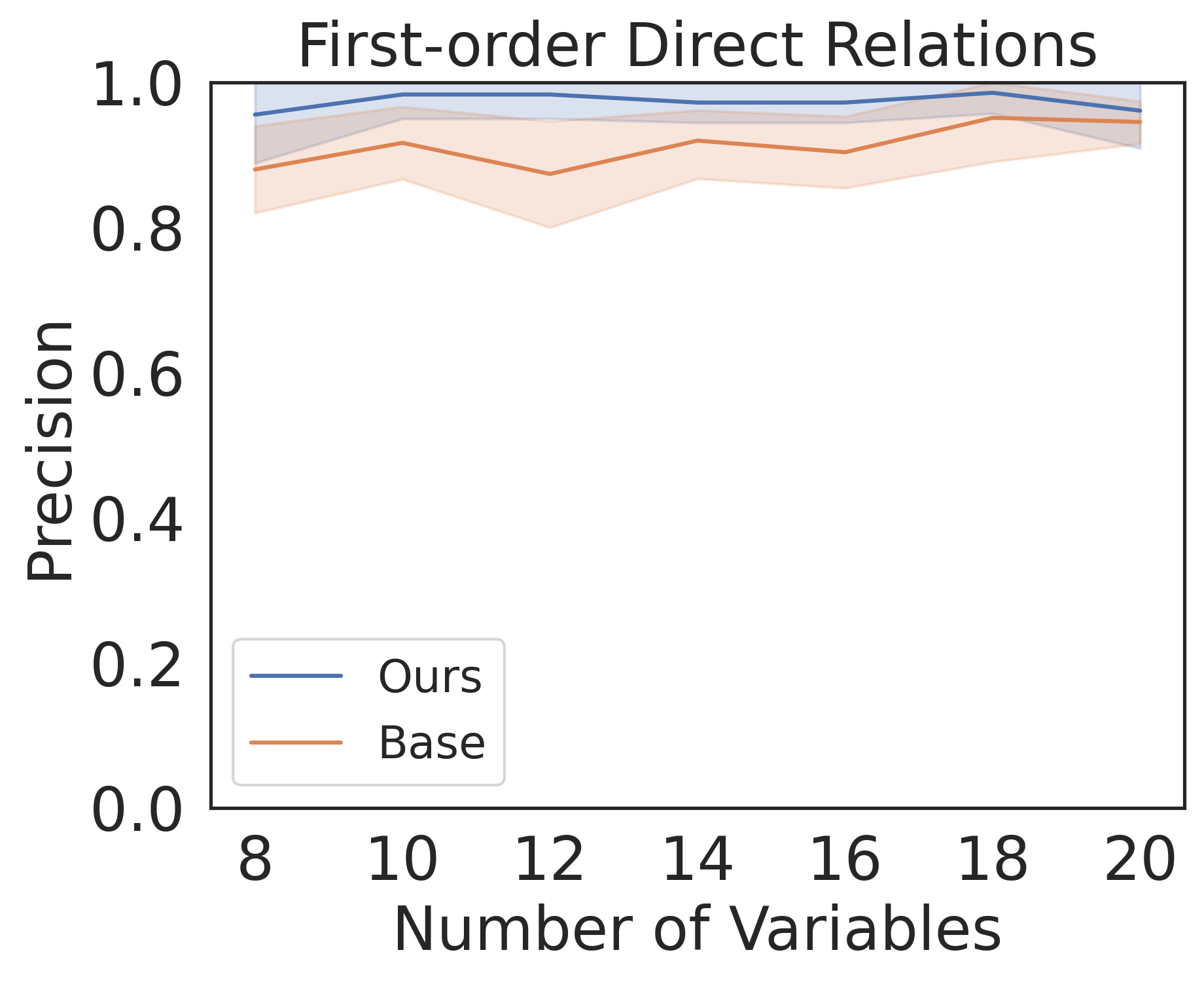}
    \end{subfigure}
    \vspace{0.5em}
    \begin{subfigure}{.24\textwidth}
        \centering
        \includegraphics[height=8em]{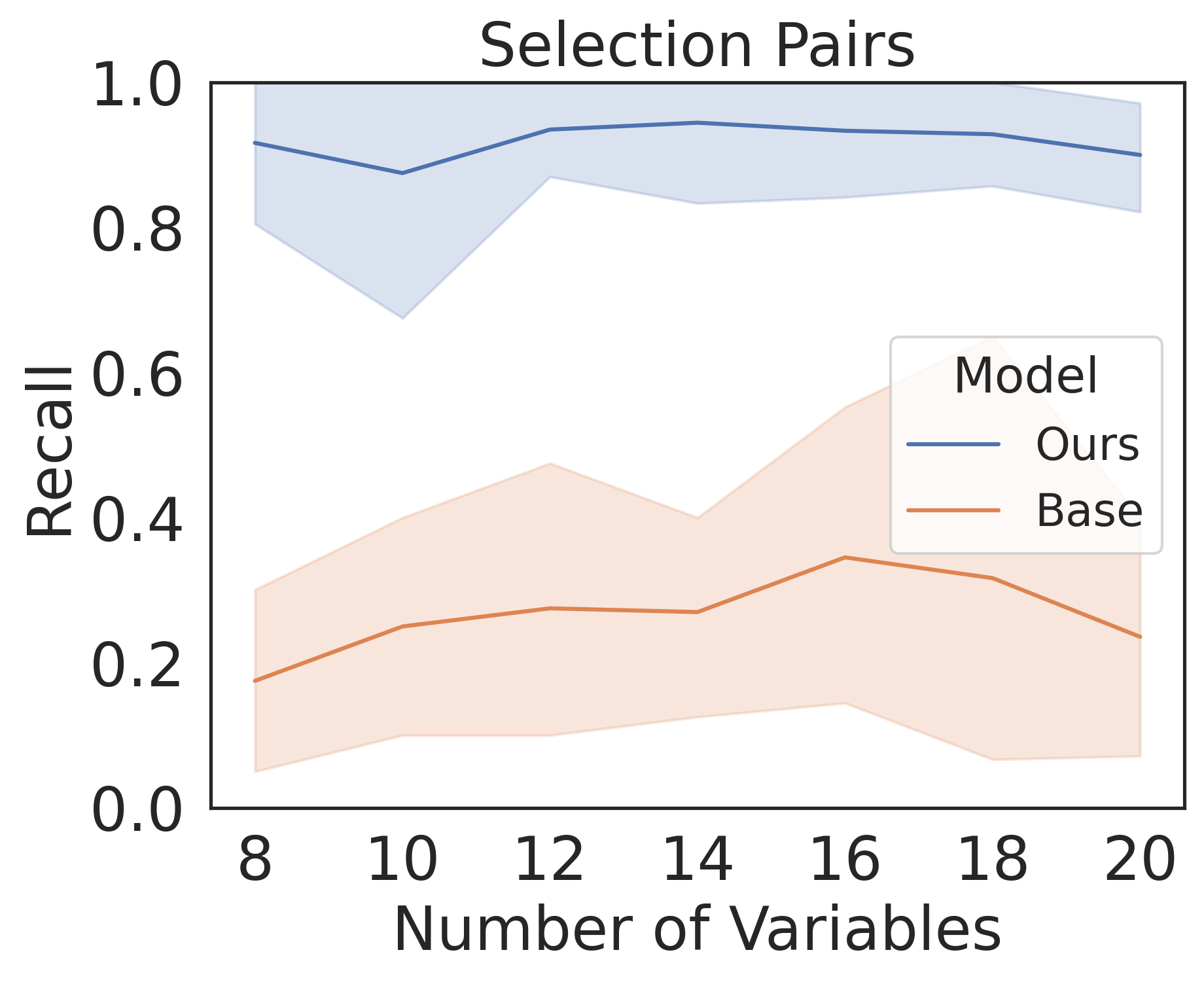}
    \end{subfigure}%
        \begin{subfigure}{.24\textwidth}
        \centering
        \includegraphics[height=8em]{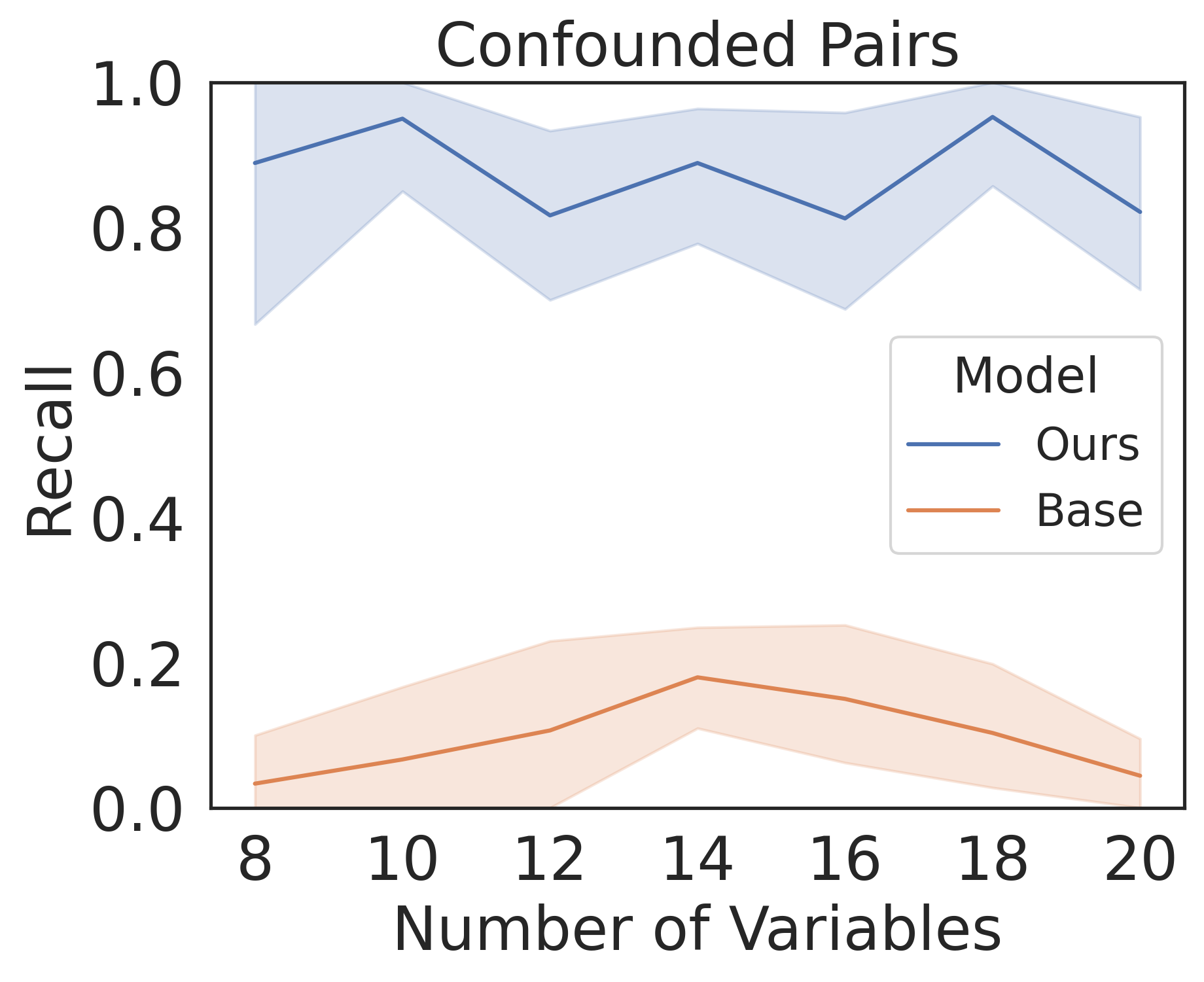}
    \end{subfigure}%
        \begin{subfigure}{.24\textwidth}
        \centering
        \includegraphics[height=8em]{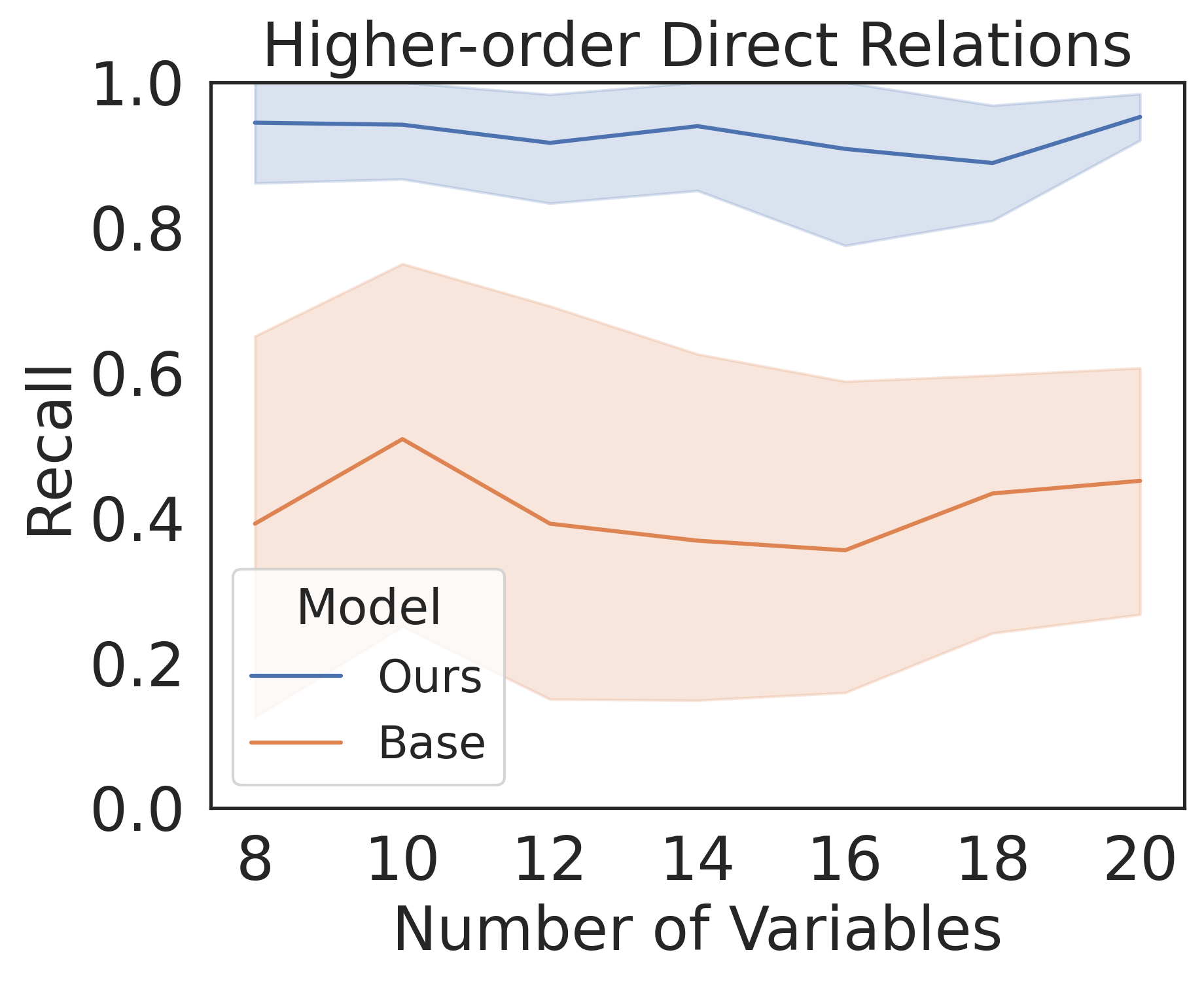}
    \end{subfigure}%
        \begin{subfigure}{.24\textwidth}
        \centering
        \includegraphics[height=8em]{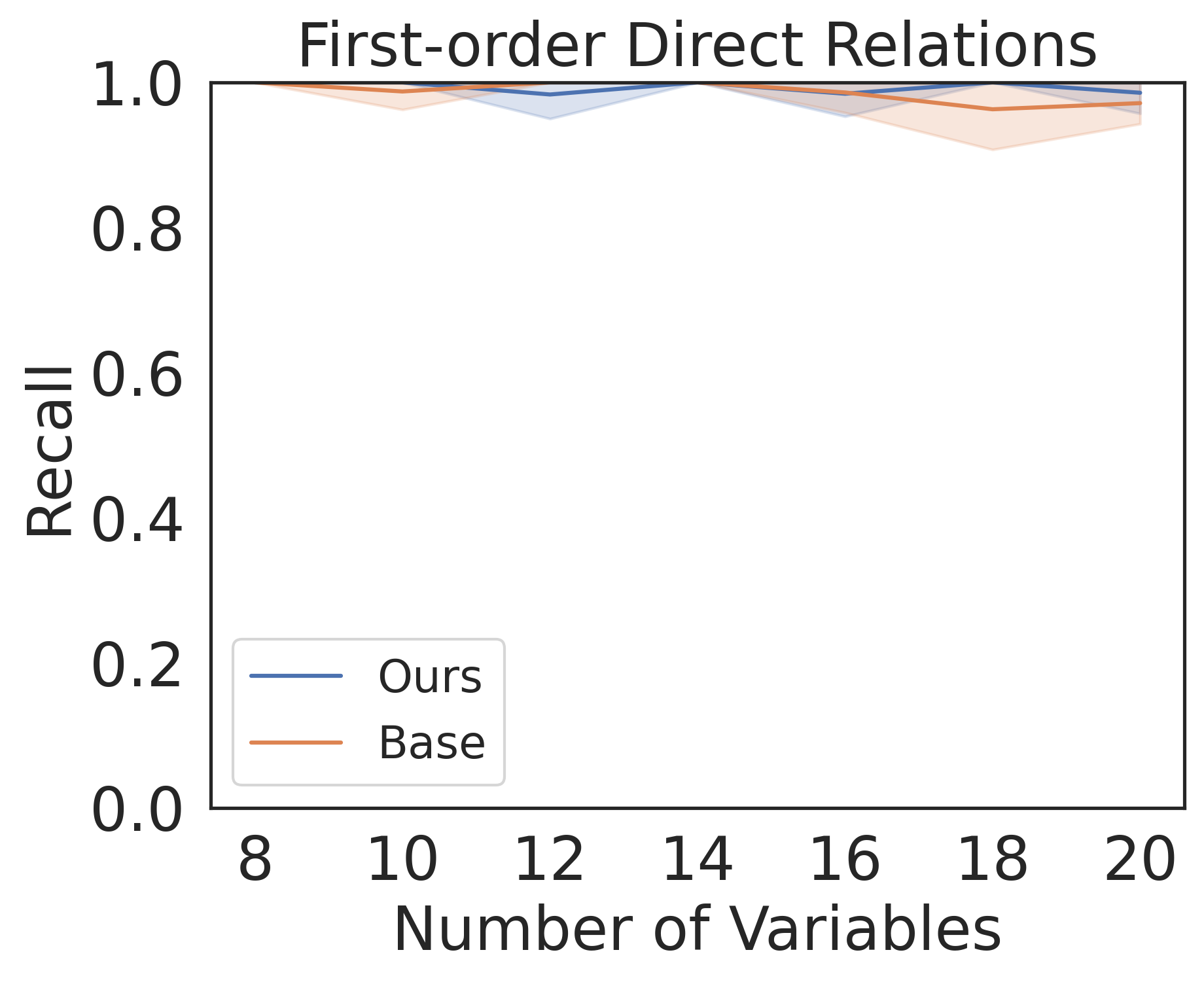}
    \end{subfigure}%

    \vspace{-0.3em}
    \caption{Comparison results on synthetic datasets.}
    \label{fig:comparison_results}
\end{figure*}

\vspace{-0.3em}
\section{Experiments}
In this section, we empirically validate our proposed theory and algorithm on both synthetic and real-world datasets.

\vspace{-0.3em}
\subsection{Synthetic Data}

\textbf{Setup.} \ \
The synthetic datasets are generated according to the selection process on the data generated by linear Gaussian Structural Causal Models (SCMs). We incorporate selection variables as part of the SCMs for preferential sampling, specifically, selecting samples from the unbiased population conditional on whether the values of selection variables exceed their mean. We sample $10,000$ data points from a sufficiently large unbiased population for simulations. For the ground-truth structure, we first randomly generate a basic chain structure, and then randomly drop half of the edges, after which the remaining edges are the first-order direct relations. Then, we randomly add selection pairs, higher-order direct relations, and/or confounded pairs. The number of each type of dependency is randomly sampled from $[0, N/2]$, where $N$ is the number of observed variables. None of the selection variables and latent confounders are included in the observed data. The implementation of the algorithm directly follows the full details elaborated in Appendix \ref{sec:complete_algorithm} with Fisher's z-test \citep{fisher1921014}.

\textbf{Identification.} \ \
To validate our theoretical claims and demonstrate the necessity of the proposed conditions, we conduct a comparative analysis. This involves contrasting our model, in which all assumptions are met, with a base model that does not satisfy structural conditions. We vary the number of observed variables from $\{8,10,\cdots,20\}$, and randomly repeat every experiment $10$ times. From Figure \ref{fig:comparison_results}, it is clear that our algorithm can identify selection pairs as well as other types of dependencies under the proposed conditions, which corroborates our identifiability theory. At the same time, both the base model and ours successfully identify almost all first-order direct relations. This is reasonable since the identification of these first-order direct relations is less complex. However, the base model's precision in identifying first-order direct relations is lower than its recall, particularly when compared to our model. This discrepancy arises because, in the absence of appropriate structural conditions, the algorithm may erroneously classify some confounded pairs as first-order direct relations, leading to an increase in false positives. This observation further underscores the necessity of the proposed conditions.

\begin{figure}[t]
    \centering
    \vspace{0.3em}
    \includegraphics[width=0.8\linewidth]{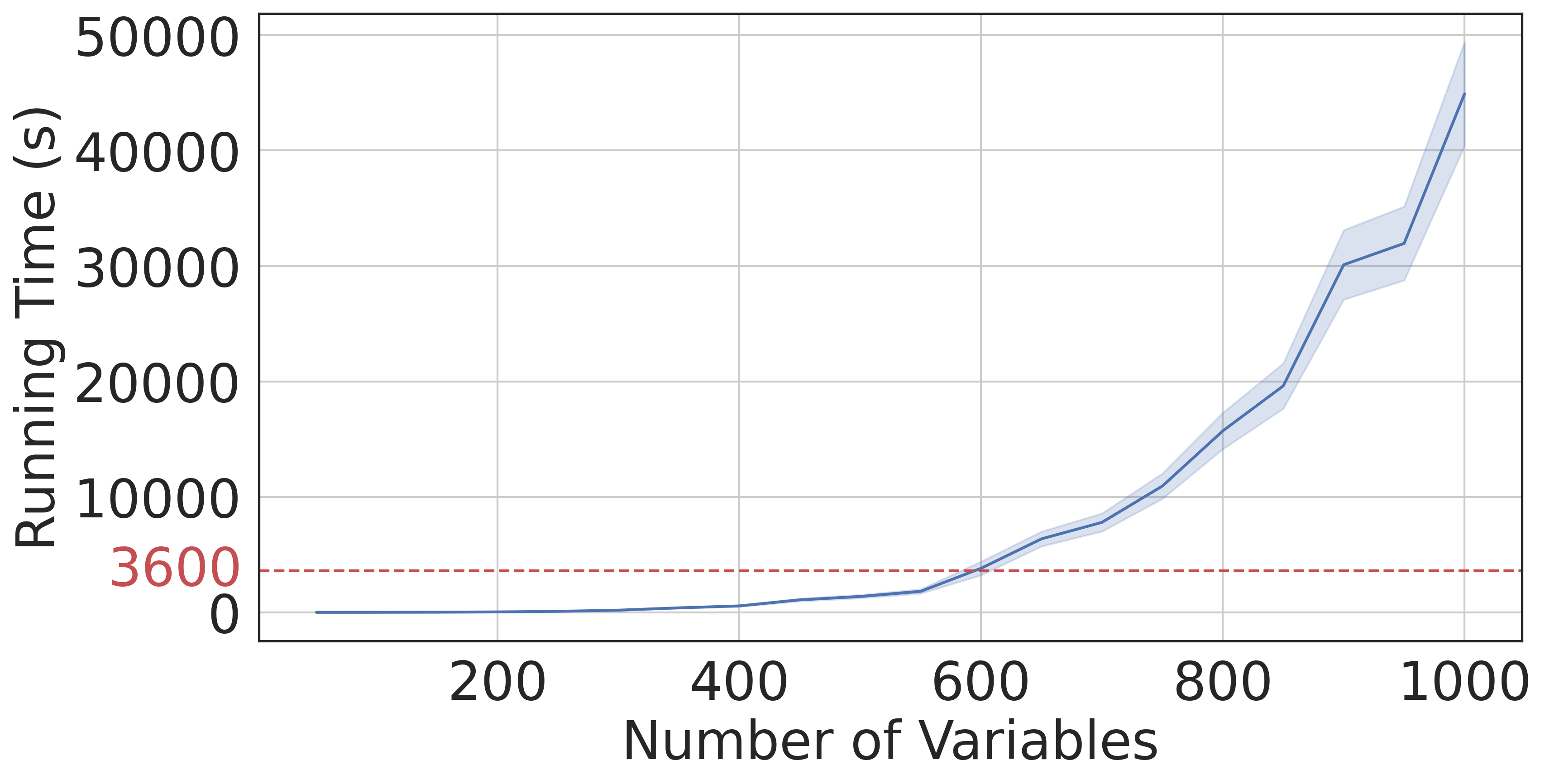}
    \vspace{-0.7em}
    \caption{Running time for large graphs. The algorithm tackles a graph with roughly $600$ variables within one hour (red dotted line).}
    \label{fig:scale}
    \vspace{-1.5em}
\end{figure}

\textbf{Scalability.} \ \
\looseness=-1
We also evaluate the scalability of our algorithm with $\{50, 100, \ldots, 1000\}$ observed variables. Since the number of selection pairs does not influence the number of CI tests needed for the algorithm, we limit it to up to $10$ for sufficient samples after selection. All experiments are from $3$ random trials with only CPUs and $12$ GB of memory. From Figure \ref{fig:scale}, we observe that our algorithm can easily scale to hundreds of observed variables, handling about $600$ variables in an hour (red dotted line), which underscores the scalability of our approach with low complexity of $O(N^2)$.

\subsection{Real-World Data}

\begin{figure*}[t]
    \centering
    \captionsetup{format=hang}
    \begin{subfigure}{.33\textwidth}
        \centering
        \includegraphics[height=6em]{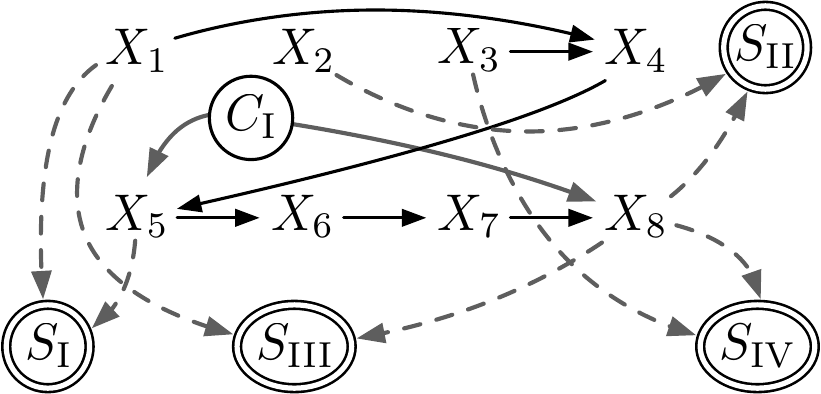}
        \caption{\emph{Hotel California} guitar solo motif}
        \label{fig:hc_guitar_1}
    \end{subfigure}%
    \begin{subfigure}{.33\textwidth}
        \centering
        \includegraphics[height=6em]{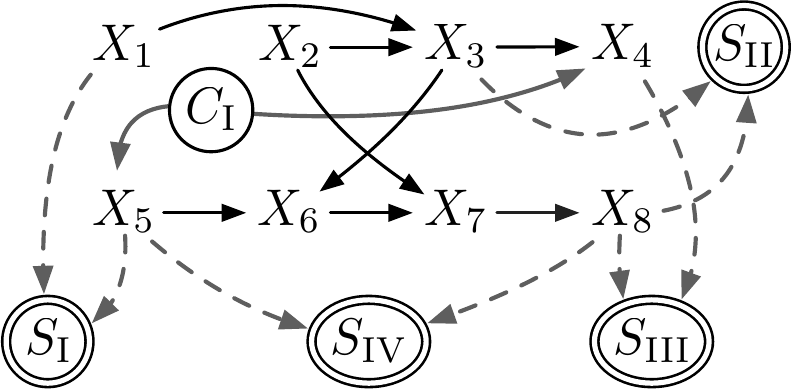}
        \caption{\emph{Hotel California} guitar interplay}
        \label{fig:hc_guitar_2}
    \end{subfigure}%
    \begin{subfigure}{.34\textwidth}
        \centering
        \includegraphics[height=6em]{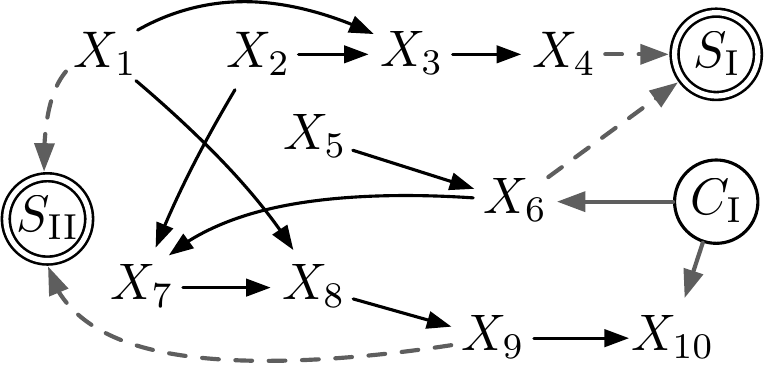}
        \caption{\emph{Panis Angelicus} string section}
        \label{fig:pa_string}
    \end{subfigure}
    \caption{
        Experimental results on real-world music data.
    }
    \label{fig:music_results}
    \vspace{-0.6em}
\end{figure*}


\textbf{Setup.} \ \
\looseness=-1
In this subsection, we present experimental results on real-world music data.
Music can be represented in different formats, and one can consider three main classes of representations: sheet music, symbolic representation, and audio \citep{muller2015fundamentals}.
In our experiments, we consider guitar soundtrack clips from \emph{Hotel California} performed by Eagles, and string section soundtrack clips from \emph{Panis Angelicus}, a verse from the hymn \emph{Sacris solemniis} and set to music by C\'esar Franck.

\textbf{Data Processing.} \ \
\looseness=-1
We use a single node to represent each music segment of a length of one bar on sheet music for \emph{Hotel California} guitar soundtracks (approximately $3.25$ seconds per bar), and a length of two bars for \emph{Panis Angelicus} (approximately $3$ seconds per bar).

The music feature analysis literature has established significant connections between the characteristics in the frequency domain and various musical features, including timbre, pitch and harmony, emotion and mood \citep{kim2010music,yang2011music,muller2015fundamentals}.
Therefore, in order to obtain different samples from the same music piece, we first perform Fast Fourier Transform (FFT) on the audio waveform data (e.g., a \texttt{.wav} or \texttt{.mp3} file).
Then, we apply lower and upper cutoff frequencies, 20Hz and 20kHz, respectively, followed by the Mel scale conversion to better align with the human hearing range \citep{o1999speech}.
We sample one-dimensional features in the frequency domain for the music piece corresponding to each node.
We demonstrate the causal relations among nodes from different music clips in Figure \ref{fig:music_results}.

\textbf{Results.} \ \
Figure \ref{fig:music_results}(a) presents the result corresponding to the soundtrack clip of one repetition of the \emph{Hotel California} guitar solo motif, which is a short pattern of chord movements that are characteristic of the song.
Figure \ref{fig:music_results}(b) presents the result corresponding to the soundtrack clip of another guitar in the interplay.
This clip is synchronized with the guitar solo motif presented in Figure \ref{fig:music_results}(a).
As we can see from the comparison between Figures \ref{fig:music_results}(a) and \ref{fig:music_results}(b), although the two causal graphs are not identical, one can observe similar selection and causal dependence patterns.
Particularly, the selection based on the 1st and 5th bar of the iconic \emph{Hotel California} harmonic motif, denoted by $X_1$ and $X_5$ forming a selected pair $X_1 \rightarrow S_{\mathrm{I}} \leftarrow X_5$ in both Figures \ref{fig:music_results}(a) and \ref{fig:music_results}(b), closely aligns with the overall progression of the Andalusian cadence (a popular musical progression often associated with Flamenco music).
Such alignment is also observed via $X_1 \rightarrow S_{\mathrm{III}} \leftarrow X_8$ in Figure \ref{fig:music_results}(a), and via $X_5 \rightarrow S_{\mathrm{IV}} \leftarrow X_8$ in Figure \ref{fig:music_results}(b).

\looseness=-1
Figure \ref{fig:music_results}(c) presents the result corresponding to the soundtrack clip of the string section in the hymn \emph{Panis Angelicus}.
We do not directly utilize the strophe (in Latin) of the hymn in our experiments.
Here, we reference it to facilitate a more intuitive interpretation and validation of our results.
We observe that the identified selection pattern based on the soundtrack clip exhibits alignment with the strophe to a certain extent.
For instance, for the selected pair $X_4 \rightarrow S_{\mathrm{I}} \leftarrow X_6$, the corresponding lines are ``figuris terminum'' (for $X_4$) and ``Manducat Dominum'' (for $X_6$), holding significant meanings within the context of the hymn's religious and sacramental themes.




\section{Conclusion}
\looseness=-1
We argue that the phenomenon of sample selection may be involved in many types of data. Overlooking the selection process or confusing it either with latent confounding effects or direct causal relations may lead to an incomplete or overcomplicated inductive bias, especially in sequential data. 
Motivated by this, we establish a set of theoretical results demonstrating the identifiability of selection structure in sequential data. Instead of adopting the traditional view that mainly treats selection as a bias to be mitigated, we consider it as an informative mechanism underlying the data, seeking to delve deeper into the causal structure of selection. Specifically, we first prove that the selection structure in sequential data is identifiable without any parametric assumptions or interventional experiments. Moreover, even in the presence of latent confounders, we show that the nonparametric identifiability still holds true under appropriate structural conditions. At the same time, we propose a provably correct algorithm for the identification of selection structures as well as other types of dependencies. The validity of the theoretical claims and the efficacy of the algorithm have been rigorously evaluated under various settings, utilizing both synthetic and real-world data. 

\looseness=-1
Since the structure of selection may provide an often-overlooked insight into the world underlying the data, the discovery results with an identifiability guarantee could act as a new and more veridical inductive bias for machine learning tasks, suggesting exciting directions on both efficiently utilizing the data and reliably uncovering the underlying truth. Currently, the lack of exploration beyond understanding is a limitation; accordingly, future work could be leveraging the uncovered structure to solve open problems such as building a trustworthy foundation model.

\section*{Impact Statement}
This paper presents work whose goal is to advance the field of Machine Learning. There are many potential societal consequences of our work, none of which we feel must be specifically highlighted here.

\section*{Acknowledgement}
We thank Peter Spirtes and anonymous reviewers for insightful comments and discussion. We would also like to acknowledge the support from NSF Grant 2229881, the National Institutes of Health (NIH) under Contract R01HL159805, and grants from Apple Inc., KDDI Research Inc., Quris AI, and Florin Court Capital.
\nocite{langley00}

\bibliography{bibliography}

\begin{thebibliography}{22}
\providecommand{\natexlab}[1]{#1}
\providecommand{\url}[1]{\texttt{#1}}
\expandafter\ifx\csname urlstyle\endcsname\relax
  \providecommand{\doi}[1]{doi: #1}\else
  \providecommand{\doi}{doi: \begingroup \urlstyle{rm}\Url}\fi

\bibitem[Bareinboim et~al.(2014)Bareinboim, Tian, and Pearl]{bareinboim2014recovering}
Bareinboim, E., Tian, J., and Pearl, J.
\newblock Recovering from selection bias in causal and statistical inference.
\newblock In \emph{Proceedings of the AAAI Conference on Artificial Intelligence}, volume~28, 2014.

\bibitem[Chen et~al.(2024)Chen, Zoeter, and Mooij]{chen2024modeling}
Chen, L., Zoeter, O., and Mooij, J.~M.
\newblock Modeling latent selection with structural causal models.
\newblock \emph{arXiv preprint arXiv:2401.06925}, 2024.

\bibitem[Correa et~al.(2019)Correa, Tian, and Bareinboim]{CorreaTianBareinboim2019}
Correa, J.~D., Tian, J., and Bareinboim, E.
\newblock Identification of causal effects in the presence of selection bias.
\newblock In \emph{Proceedings of the AAAI Conference on Artificial Intelligence}, volume~33, pp.\  2744--2751, 2019.

\bibitem[Fisher et~al.(1921)]{fisher1921014}
Fisher, R.~A. et~al.
\newblock 014: On the" probable error" of a coefficient of correlation deduced from a small sample.
\newblock 1921.

\bibitem[Forr{\'e} \& Mooij(2020)Forr{\'e} and Mooij]{forre2020causal}
Forr{\'e}, P. and Mooij, J.~M.
\newblock Causal calculus in the presence of cycles, latent confounders and selection bias.
\newblock In \emph{Uncertainty in Artificial Intelligence}, pp.\  71--80. PMLR, 2020.

\bibitem[Heckman(1979)]{heckman1979sample}
Heckman, J.~J.
\newblock Sample selection bias as a specification error.
\newblock \emph{Econometrica: Journal of the econometric society}, pp.\  153--161, 1979.

\bibitem[Hern{\'a}n et~al.(2004)Hern{\'a}n, Hern{\'a}ndez-D{\'\i}az, and Robins]{hernan2004structural}
Hern{\'a}n, M.~A., Hern{\'a}ndez-D{\'\i}az, S., and Robins, J.~M.
\newblock A structural approach to selection bias.
\newblock \emph{Epidemiology}, pp.\  615--625, 2004.

\bibitem[Kaltenpoth \& Vreeken(2023)Kaltenpoth and Vreeken]{kaltenpoth2023identify}
Kaltenpoth, D. and Vreeken, J.
\newblock Identifying selection bias from observational data.
\newblock In \emph{Proceedings of the AAAI Conference on Artificial Intelligence}, 2023.

\bibitem[Kim et~al.(2010)Kim, Schmidt, Migneco, Morton, Richardson, Scott, Speck, and Turnbull]{kim2010music}
Kim, Y.~E., Schmidt, E.~M., Migneco, R., Morton, B.~G., Richardson, P., Scott, J., Speck, J.~A., and Turnbull, D.
\newblock Music emotion recognition: A state of the art review.
\newblock In \emph{Proc. ismir}, volume~86, pp.\  937--952, 2010.

\bibitem[M{\"u}ller(2015)]{muller2015fundamentals}
M{\"u}ller, M.
\newblock \emph{Fundamentals of music processing: Audio, analysis, algorithms, applications}, volume~5.
\newblock Springer, 2015.

\bibitem[O'Shaughnessy \& Douglas(1999)O'Shaughnessy and Douglas]{o1999speech}
O'Shaughnessy and Douglas.
\newblock \emph{Speech Communications: Human and Machine}.
\newblock 1999.

\bibitem[Pearl et~al.(2000)]{pearl2000models}
Pearl, J. et~al.
\newblock Models, reasoning and inference.
\newblock \emph{Cambridge, UK: CambridgeUniversityPress}, 19\penalty0 (2):\penalty0 3, 2000.

\bibitem[Radford et~al.(2018)Radford, Narasimhan, Salimans, Sutskever, et~al.]{radford2018improving}
Radford, A., Narasimhan, K., Salimans, T., Sutskever, I., et~al.
\newblock Improving language understanding by generative pre-training.
\newblock 2018.

\bibitem[Schoenberg et~al.(1967)Schoenberg, Strang, and Stein]{schoenberg1967musiccomp}
Schoenberg, A., Strang, G., and Stein, L.
\newblock \emph{Fundamentals of musical composition}.
\newblock Faber and Faber, 1967.
\newblock URL \url{https://cir.nii.ac.jp/crid/1130282269094586112}.

\bibitem[Spirtes et~al.(1995)Spirtes, Meek, and Richardson]{spirtes1995causal}
Spirtes, P., Meek, C., and Richardson, T.
\newblock Causal inference in the presence of latent variables and selection bias.
\newblock In \emph{Proceedings of the Eleventh conference on Uncertainty in artificial intelligence}, pp.\  499--506, 1995.

\bibitem[Spirtes et~al.(2000)Spirtes, Glymour, and Scheines]{spirtes2000causation}
Spirtes, P., Glymour, C.~N., and Scheines, R.
\newblock \emph{Causation, prediction, and search}.
\newblock 2000.

\bibitem[Versteeg et~al.(2022)Versteeg, Mooij, and Zhang]{versteeg2022local}
Versteeg, P., Mooij, J., and Zhang, C.
\newblock Local constraint-based causal discovery under selection bias.
\newblock In \emph{Conference on Causal Learning and Reasoning}, pp.\  840--860. PMLR, 2022.

\bibitem[Yang \& Chen(2011)Yang and Chen]{yang2011music}
Yang, Y.-H. and Chen, H.~H.
\newblock \emph{Music emotion recognition}.
\newblock CRC Press, 2011.

\bibitem[Zhang(2008)]{zhang2008completeness}
Zhang, J.
\newblock On the completeness of orientation rules for causal discovery in the presence of latent confounders and selection bias.
\newblock \emph{Artificial Intelligence}, 172\penalty0 (16-17):\penalty0 1873--1896, 2008.

\bibitem[Zhang(2013)]{zhang2013comparison}
Zhang, J.
\newblock A comparison of three {Occam}'s razors for markovian causal models.
\newblock \emph{The British journal for the philosophy of science}, 2013.

\bibitem[Zhang et~al.(2016)Zhang, Zhang, Huang, Sch{\"o}lkopf, and Glymour]{zhang2016identifiability}
Zhang, K., Zhang, J., Huang, B., Sch{\"o}lkopf, B., and Glymour, C.
\newblock On the identifiability and estimation of functional causal models in the presence of outcome-dependent selection.
\newblock In \emph{UAI}, 2016.

\bibitem[Zheng et~al.(2024)Zheng, Huang, Chen, Ramsey, Gong, Cai, Shimizu, Spirtes, and Zhang]{zheng2024causal}
Zheng, Y., Huang, B., Chen, W., Ramsey, J., Gong, M., Cai, R., Shimizu, S., Spirtes, P., and Zhang, K.
\newblock Causal-learn: Causal discovery in python.
\newblock \emph{Journal of Machine Learning Research}, 25\penalty0 (60):\penalty0 1--8, 2024.

\end{thebibliography}
\bibliographystyle{icml2024}

\newpage
\appendix
\onecolumn


\section{Complete Algorithm}
\label{sec:complete_algorithm}

\begin{algorithm}[H]
	\caption{Identification of the Selection Structure}
	\label{alg:global}
	\Input{Data $\mathbf{X}$}
	\Output{A causal graph $\mathcal{G}$}
    \vspace{0.3em}
    \begin{mdframed}[innerleftmargin=20pt,innertopmargin=3pt,linewidth=0.1pt,pstrickssetting={linestyle=dashed,middlelinewidth=10pt}]
    \vspace{-0.3em}
    \Comment{\textbf{Initialization}}
    Set $N$ as the number of observed variables in $\X$\; \label{l1}
    Initialize two empty lists as $\mathcal{L} \coloneqq [\cdot]$ and $\mathcal{R} \coloneqq [\cdot]$, an empty edge set $\mathcal{E}$, and a set of vertices $\mathcal{V} \coloneqq \{1,2,\cdots,N\}$\;
    \end{mdframed}

    \begin{mdframed}[innerleftmargin=20pt,skipabove=-12pt,linewidth=0.1pt]
    \vspace{-0.3em}
    \Comment{\textbf{Stage One}}
    \ForEach{
        $l$ in $\{N-1, N-2, \cdots, 1\}$
        \label{l3}
    }{
        Set $i \coloneqq 1$, $j \coloneqq i+l$\;
        \While{$j \leq N$}{
            \uIf{$X_i \notindependent X_j | \mathbf{X}_{\text{Pre}(j) \setminus i}$\label{ls1_test1}}{
                \uIf{$\forall k \in \mathcal{R}[i], X_i \notindependent X_j | \{\mathbf{X}_{\text{Pre}(j) \setminus i}, X_{j+1}, X_k\}$ $(X_{N+1} = \emptyset)$\label{ls1_test2}} 
                {
                    $\mathcal{L}$.append$(i)$\;
                    $\mathcal{R}[i]$.append$(j)$\;
                }
            }
            Set $i \coloneqq i+1$, $j \coloneqq i+l$\; \label{l10}
        }
    }
    \end{mdframed}

    \begin{mdframed}[innerleftmargin=20pt,skipabove=-12pt,linewidth=0.1pt]
    \vspace{-0.3em}
    \Comment{\textbf{Stage Two}}
    Set $u \coloneqq 0$\; \label{l11}
    \ForEach{$i \in ReverseSorted(\mathcal{L})$}{ \label{l12}
        \ForEach{$j \in ReverseSorted(\mathcal{R}[i])$}{ \label{l13}
            Set $k = j-1$\;
            \While{$k \in \mathcal{R}[i]$ and $k > i+1$}{ \label{l15}
                $k \coloneqq k-1$\;
            } \label{l16}
            Let $\mathbf{T} = \{X_v$, where $(X_p,X_v)_s \in \mathcal{E}$\}\; \label{l17}
            \eIf{$X_i \notindependent X_k | \{\mathbf{X}_{\text{Pre}(k) \setminus i}, X_j, \mathbf{T}\}$ or $k=i$ \label{ls2_test1}}{
                \eIf{$X_i \indep X_k | \{\mathbf{X}_{\text{Pre}(k) \setminus i}, X_{j+1}, \mathbf{T}\}$ $(X_{N+1} = \emptyset)$ or $k=i$ \label{ls2_test2}}{
                    Add to $\mathcal{E}$ an edge $i \rightarrow j$\;
                }{ \label{l19}
                    Add to $\mathcal{E}$ an edge $i \rightarrow j$\;
                    Add to $\mathcal{E}$ edges $i \rightarrow s_u$ and $j \rightarrow s_u$\;
                    Set $u \coloneqq u+1$\; \label{l22}
                    \uIf{$j-1 \in \mathcal{R}[i]$}{
                        $\mathcal{R}[i]$.remove$(j-1)$\; \label{l24}
                    }
                }
            }{
                Add to $\mathcal{E}$ edges $i \rightarrow s_u$ and $j \rightarrow s_u$\;
                Set $u \coloneqq u+1$\; \label{l29}
            }
        }
    }
    \end{mdframed}
    
    \begin{mdframed}[innerleftmargin=20pt,skipabove=-12pt,linewidth=0.1pt]
    \vspace{-0.3em}
    \Comment{\textbf{Stage Three} (for latent confounders)}
    Set $u \coloneqq 0$\;
    \ForEach{$j$ in $\{N, N-1 \cdots, 3\}$}{ \label{l31}
        \ForEach{$i$ in $\{N-1, N-2 \cdots, 2\}$}{ \label{l32}
            Let $\mathbf{T} = \{X_v$, where $(X_{i-1},X_v)_\mathrm{s}$, $(X_{i},X_v)_\mathrm{s}$, or $(X_{i+1},X_v)_\mathrm{s}$, $v>j$\}\label{l33}\;
            \uIf{$i \rightarrow j$ and $i-1 \rightarrow j$ \label{ls3_if}}{
                \eIf{$X_{i-1} \notindependent X_j | \{\mathbf{X}_{\text{Pre}(j) \setminus \{i-1,i\}}, X_{j+1},\mathbf{T}\}$ $(X_{N+1} = \emptyset)$}{
                    Add to $\mathcal{E}$ edges $i-1 \rightarrow j$ and $i \rightarrow j$\;
                   
                }{
                    Add to $\mathcal{E}$ edges $c_u \rightarrow i$ and $c_u \rightarrow j$\;\label{ls3_38}
                    Remove $i-1 \rightarrow j$ and $i \rightarrow j$ from $\mathcal{E}$\;
                    Set $u \coloneqq u+1$\;\label{ls3_40}
                }
            }
        }
    }
    \end{mdframed}

	\Return{$\mathcal{G} = \{\mathcal{V}, \mathcal{E}\}$.}
\end{algorithm}
\clearpage

\section{Illustration of the Algorithm}
\label{sec:example}

\begin{figure}[t]
    \centering
    \includegraphics[width=0.7\linewidth]{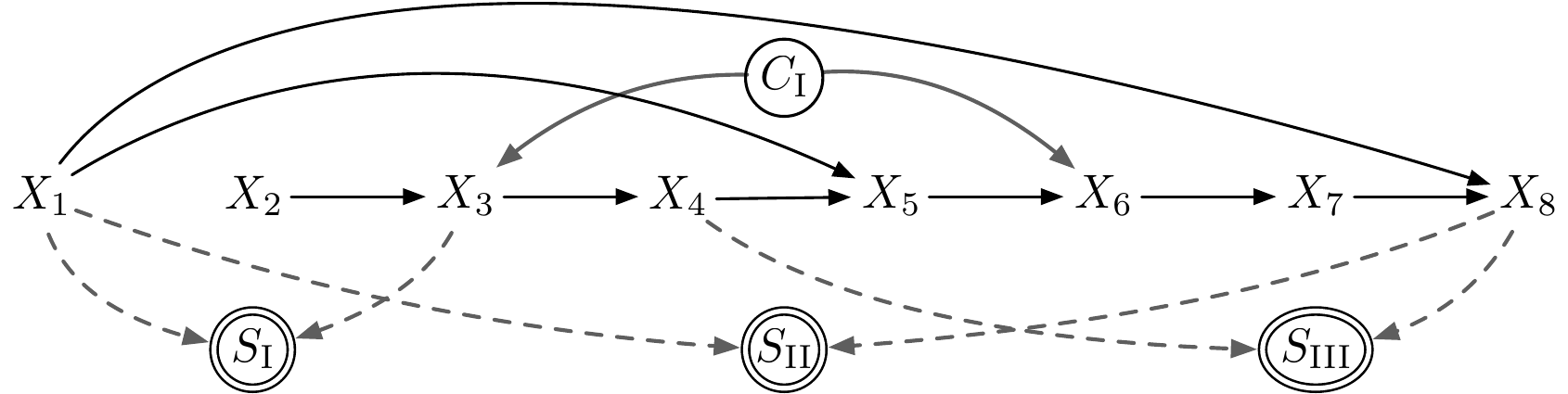}
    \caption{Example for the illustration of the algorithm.}
    \label{fig:alg_example}
\end{figure}

In this section, we illustrate the complete algorithm (Algorithm \ref{alg:global}) with a step-by-step example. Suppose our data $\mathbf{X} = \{X_1, X_2, \cdots, X_8\}$ is generated from the causal structure shown in Figure \ref{fig:alg_example}. Note that the complete proof of the correctness of the algorithm is included in Appendices \ref{sec:proof_thm_1} and \ref{sec:proof_thm_2}. Here we just explain the procedure of the three stages based on the example in Figure \ref{fig:alg_example}:
\begin{itemize}
    \item \textbf{Stage One:} For each pair of variables, the algorithm first tests if $X_i \notindependent X_j | \mathbf{X}_{\text{Pre}(j) \setminus i}$ (line \ref{ls1_test1}). According to the order of tests (lines \ref{l3} and \ref{l10}), we first test $(X_1, X_8)$. For line \ref{ls1_test1}, we get the results $X_1 \notindependent X_8 | \mathbf{X}_{\text{Pre}(8) \setminus 1}$. Since this is the first pair, $\mathcal{R}[1]$ is empty, and we also have $X_{j+1} = X_9 = \emptyset$. Thus, for line \ref{ls1_test2}, we still have  $X_1 \notindependent X_8 | \mathbf{X}_{\text{Pre}(8) \setminus 1}$, and then we update $\mathcal{R}[1]$ as $\mathcal{R}[1] = \{8\}$.

    However, for $i=1$ and $j=7$, there is $X_1 \notindependent X_7 | \mathbf{X}_{\text{Pre}(7) \setminus 1}$ even though $(X_1, X_7)$ is not a dependent pair. This is because there exists a selection pair $(X_1, X_8)_\mathrm{s}$, and if $X_8$ is not conditioned, the path $\{X_1 \rightarrow S_\mathrm{II} \leftarrow X_8 \leftarrow X_7\}$ is d-connected, leading to the conditional dependence $X_1 \notindependent X_7 | \mathbf{X}_{\text{Pre}(7) \setminus 1}$. Moreover, when we move to line \ref{ls1_test2}, we still have $X_1 \indep X_7 | \{\mathbf{X}_{\text{Pre}(7) \setminus 1}, X_8\}$ since $(X_1, X_8)$ is also a direct relation on the path. The conditioning on $X_8$ although blocks the path $\{X_1 \rightarrow S_\mathrm{II} \leftarrow X_8 \leftarrow X_7\}$, but also opens the path $\{X_1 \rightarrow X_8 \leftarrow X_7\}$. Therefore, we have $\mathcal{R}[1] = \{8,7\}$, and include $(X_1, X_7)$ as a spurious dependent pair.

    Then we move to $(X_2, X_8)$, which is similar to $(X_1, X_8)$ and will be included as a dependent pair. However, it gets complicated again when it comes to $(X_1, X_6)$. Similarly, because of $\{X_1 \rightarrow S_\mathrm{II} \leftarrow X_8 \leftarrow X_7 \leftarrow X_6\}$, there is $X_1 \notindependent X_6 | \mathbf{X}_{\text{Pre}(6) \setminus 1}$ even though $(X_1, X_6)$ is not a dependent pair. Nevertheless, $(X_1, X_6)$ will not be included this time. This is because, for $(X_1, X_6)$, the test result in line \ref{ls1_test2} is $X_1 \indep X_6 | \{\mathbf{X}_{\text{Pre}(6) \setminus 1}, X_7, X_8\}$ ($X_{j+1} = 7$ and $8 \in \mathcal{R}[1]$), which is different from the case for $(X_1, X_7)$. The reason is that, even if $X_8$ is both a collider and a non-collider, we have $X_6$ in the conditioning set, which blocks the path  $\{X_1 \rightarrow X_8 \leftarrow X_7 \leftarrow X_6\}$. The path $\{X_1 \rightarrow S_\mathrm{II} \leftarrow X_8 \leftarrow X_7 \leftarrow X_6\}$ is blocked by conditioning on $X_8$.

    The preceding pairs of variables are all similar to the cases discussed above until we meet $(X_2, X_6)$. This pair is special since we have $(X_3, X_6)$ as a confounded pair in the ground-truth structure. Thus, when conditioning on $X_3$, $X_2$ and $X_6$ are always d-connected. Therefore, we first have $X_2 \notindependent X_6 | \mathbf{X}_{\text{Pre}(6) \setminus 1}$. Then, we also have $X_2 \notindependent X_6 | \{\mathbf{X}_{\text{Pre}(6) \setminus 1}, X_7\}$, which indicates that $(X_2, X_6)$ is included as a spurious dependent pair.

    The other pairs are all similar to the cases we have discussed. At the end of Stage One, we have the following results: $\mathcal{R}[1] = \{8,7,5,3\}$, $\mathcal{R}[2] = \{6,3\}$, $\mathcal{R}[3] = \{6,4\}$, $\mathcal{R}[4] = \{8,5\}$, $\mathcal{R}[5] = \{6\}$. $\mathcal{R}[6] = \{7\}$, and $\mathcal{R}[7] = \{8\}$.

    \item \textbf{Stage Two:} Then we move to the second stage with $\mathcal{L}$ and $\mathcal{R}$ from the first stage. We start from a descending order for both $\mathcal{L}$ and $\mathcal{R}$. The first pair is $(X_7, X_8)$, which is identified as a direct relation since $k=i$. This is similar for $(X_6, X_7)$ and $(X_5, X_6)$, and both of them are identified as (first-order) direct relations.

    Now we move to $(X_4, X_8)$, for which we have $k=7$. For line \ref{ls2_test1}, we have $X_4 \indep X_7 | \{\mathbf{X}_{\text{Pre}(7) \setminus 4}, X_8\}$, where $\mathbf{T}=\emptyset$. Thus, $(X_4, X_8)$ is identified as a selection pair. Then we identify $(X_4, X_5)$ as a direct relation similar to the case discussed above. 

    Next we move to $(X_3, X_6)$, for which we have $k=5$ and $\mathbf{T} = \{X_8\}$. Because $X_3 \notindependent X_5 | \{\mathbf{X}_{\text{Pre}(5) \setminus 3}, X_6, X_8\}$ (line \ref{ls2_test1}, $\mathbf{T}=\{X_8\}$) and $X_3 \indep X_5 | \{\mathbf{X}_{\text{Pre}(5) \setminus 3}, X_7\}$ (line \ref{ls2_test2}), we identify $(X_3, X_6)$ as a direct relation. However, since $(X_3, X_6)$ is a confounded pair, we now include a spurious direct relation in $\mathcal{E}$, which will be identified in Stage Three.

    After that, we identify $(X_3, X_4)$ as a first-order direct relation, then we consider $(X_2, X_6)$, for which we have $k=5$ and $\mathbf{T} = \{X_8\}$. Note that, $(X_2, X_6)$ is actually not a dependent pair. This is included in Stage Two only because $(X_3, X_6)$ is a confounded pair, and the collider $X_3$ on the path was always included in the conditioning set in Stage One. For Stage Two, because $X_2 \notindependent X_5 | \{\mathbf{X}_{\text{Pre}(5) \setminus 2}, X_6, X_8\}$ (line \ref{ls2_test1}) and $X_2 \indep X_5 | \{\mathbf{X}_{\text{Pre}(5) \setminus 2}, X_7, X_8\}$ (line \ref{ls2_test2}), $(X_2, X_6)$ is still identified as a direct relation as a spurious one again.

    Then after identifying $(X_2, X_3)$ as a first-order direct relation, we move to $(X_1, X_8)$. For this pair, we have $K=6$ since $7 \in \mathcal{R}[1]$. Because $X_1 \notindependent X_6 | \{\mathbf{X}_{\text{Pre}(6) \setminus 1}, X_8\}$ (line \ref{ls2_test1}) and $X_1 \notindependent X_6 | \{\mathbf{X}_{\text{Pre}(6) \setminus 1}, X_8\}$ (line \ref{ls2_test2}), we identify $(X_1, X_8)$ as both a direct relation and a selection pair. As a result, we also remove $7$ from $\mathcal{R}[1]$ (line \ref{l24}), so now we have $\mathcal{R}[1] = \{8, 5, 3\}$, and we remove to $(X_1, X_5)$.

    For $(X_1, X_5)$, we have $k=4$ and $\mathbf{T} = \{X_8\}$. Consequentially, we have $X_1 \notindependent X_4 | \{\mathbf{X}_{\text{Pre}(4) \setminus 1}, X_5, X_8\}$ (line \ref{ls2_test1}) and $X_1 \indep X_4 | \{\mathbf{X}_{\text{Pre}(4) \setminus 1}, X_6, X_8\}$ (line \ref{ls2_test2}). Thus, $(X_1, X_5)$ is identified as a direct relation. Then we move to  $(X_1, X_3)$ with $k=2$ and $\mathbf{T} = \{X_8\}$. For line \ref{ls1_test1}, we have $X_1 \indep X_2 | \{\mathbf{X}_{\text{Pre}(2) \setminus 1}, X_3, X_8\}$, which implies that we identify it as a selection pair.

    Finally, we finished all tests in Stage Two. In summary, we have identified all selection pairs and direct relations, while misidentifying a confounded pair $(X_3, X_6)_\mathrm{c}$ as a direct relation and introducing a spurious direct relation for $(X_2, X_6)$ because of the confounded pair $(X_3, X_6)_\mathrm{c}$.

    \item \textbf{Stage Three:} At this stage, we need to identify all confounded pairs while removing all spurious structures. It is worth noting that, according to the previous discussion, all spurious structures after Stage Two are caused by confounded pairs. Thus, if our setting does not allow the existence of latent confounders (Theorem \ref{thm:1}), Stage Three will be removed from the algorithm.

    According to line \ref{ls3_if}, we only need to test $(X_3, X_6)$ since we have both $(X_3, X_6)$ and $(X_2, X_6)$ are identified as direct relations in the previous stage. At this point, we have $\mathbf{T}= \{X_8\}$ (line \ref{l33}) and $X_{2} \indep X_6 | \{\mathbf{X}_{\text{Pre}(6) \setminus \{2,3\}}, X_{7}, X_8\}$. Note that both  $X_{7}$ and $X_8$ are essential in the conditioning set. If we do not condition on both, there will be a d-connected path $\{X_2 \rightarrow X_3 \rightarrow S_\mathrm{I} \leftarrow X_1 \rightarrow S_\mathrm{II} \leftarrow X_8 \leftarrow X_7 \leftarrow X_6 \}$ that makes $X_2$ and $X_6$ conditionally dependent. This is because, if we only add $X_8$ to the conditioning set together with $\mathbf{X}_{\text{Pre}(6) \setminus \{2,3\}}$, there will be another d-connected path $\{X_2 \rightarrow X_3 \rightarrow S_\mathrm{I} \leftarrow X_1 \rightarrow X_8 \leftarrow X_7 \leftarrow X_6 \}$ between $X_2$ and $X_6$ since $X_8$ is a collider on the path. Only when we also add $X_7$ in the condition set, we will have $X_{2} \indep X_6 | \{\mathbf{X}_{\text{Pre}(6) \setminus \{2,3\}}, X_{7}, X_8\}$. Thus, according to lines \ref{ls3_38}-\ref{ls3_40}, we identify $(X_3, X_6)$ as a confounded pair, as well as remove $X_3 \rightarrow X_6$ and $X_2 \rightarrow X_6$ from the edge set $\mathcal{E}$, which are the only remaining spurious direct relations. After that, we have identified all confounded pairs while removing all spurious structures. 
\end{itemize}

Now, we have identified all selection pairs, direct relations, and confounded pairs in Figure \ref{fig:alg_example}. The implementation of tests is based on \textit{causal-learn} \citep{zheng2024causal}.

\subsection{Additional Discussion on the Algorithm}

\textbf{Beyond Structural Conditions.} \ \ In the above example, we have shown how to identify different types of dependencies based on purely observational data, even when there may exist hidden confounders and selection variables. In real-world scenarios, it is important to acknowledge that some structural conditions may not consistently apply universally. The current algorithm is proposed to uncover the complete structure under the premise of identifiability, where all conditions are met. However, for practical applications, we can integrate condition checkers into the algorithm to allow potential ambiguities in the output when certain conditions are not satisfied. Meanwhile, the algorithm could be designed to raise a warning when it is unsure about certain parts of the output structure, and thus users could still make use of the remaining parts and deal with the uncertainty with their desired strategies (e.g., additional experiments).

\textbf{Beyond Sequential Data.} \ \ 
By discovering hidden selection structures from observational data, a more veridical inductive bias can be unveiled to apply in various tasks. In particular, the modeling of sequential data benefits a lot from that selection structure, since the commonly assumed autoregressive structure is usually overcomplicated and dense, and there could be latent confounders and selection variables that introduce different types of dependencies. However, the idea behind our proposed algorithm is also applicable beyond the setting where data has a sequential nature. The sequential structure is leveraged to distinguish selection from other types of dependencies in certain structures and does not need to be satisfied universally. For instance, if there is no selection among a subset of observed variables, their structure can be identified using previous constraint-based methods without assuming the sequential nature. Moreover, even when selection exists in the presence of potential latent confounders in certain structures, we can still identify it without the sequentiality assumption. Consider a non-sequential case with $A \rightarrow B$, $C \leftarrow D$, and an unknown dependence between $B$ and $C$. If this dependence arises from $B$ causing $C$, $C$ causing $B$, or latent confounders for them, there must be at least one edge along the path between $B$ and $C$ that has an arrowhead into $B$ or $C$. This contrasts with scenarios where latent selection variables influence $B$ and $C$, as these introduce additional colliders, distinguishable by testing for conditional dependence given the colliders under the faithfulness assumption. For certain dependencies that cannot be distinguished without the sequential structure, the algorithm may just remain and alert the indeterminacies in the output, or other conditions could be introduced if needed.

\section{Proofs}

\subsection{Proof of Theorem \ref{thm:1}}
\label{sec:proof_thm_1}

\thmI*

\begin{proof}
    We prove the identifiability of the causal graph by showing that the algorithm \ref{alg:global} (only lines \ref{l1}-\ref{l29} since there are no latent confounders according to assumption \ref{assum:t1a2}) identifies all selection pairs and direct relations. Consistent with the two stages of the algorithm, the proof is divided into the following three stages:
    \begin{itemize}
        \item \textbf{Stage One}: Identifying dependent pairs together with several spurious structures.
        \item \textbf{Stage Two}: Identifying selection pairs, dependent pairs that are both selection pairs and direct relations, and direct relations while removing spurious structures.
    \end{itemize}

    \textbf{Stage One.} \ \
    We first show that, after Stage One (lines \ref{l3}-\ref{l10}), the identified sets of variable pairs, denoted as $\mathbf{E}_1$, only consist of the following types of cases:
    \begin{itemize}
        \item All selection pairs $(X_i, X_j)_\mathrm{s}$ in the ground-truth causal graph. 
        \item All direct relations $(X_i, X_j)_\mathrm{x}$ in the ground-truth causal graph.
        \item All dependent pairs that are both selection pairs and direct relations, i.e., $(X_i, X_j)_{\{\mathrm{s,x}\}}$, in the ground-truth causal graph.
        \item All pairs of variables $(X_i, X_j)$ that there exists $(X_i, X_{j+1})$ that is both a selection pair and a direct relation in the ground-truth causal graph.
    \end{itemize}
    For a variable pair $(X_i, X_j)$, we discuss all potential cases for the paths between $X_i$ and $X_j$ as follows. We denote $X_k$ as the observed variable closest to $X_i$.  
    
    \begin{enumerate}[label=\Roman*:]
        \item $(X_i, X_j)$ is a dependent pair. In this case, $X_i$ is d-connected with $X_j$ given any conditioning set, thus  $X_i \notindependent X_j | \mathbf{X}_{\text{Pre}(j) \setminus i}$ and $(X_i, X_j)_\mathrm{d} \in \mathbf{E}_1$. This covers the cases where $(X_i, X_j)$ is a selection pair, a direct relation, and a dependent pair which is both a selection pair and a direct relation. Note that there do not exist any confounded pairs according to assumption \ref{assum:t1a2}.
        
        \item $\{X_i \cdots \leftarrow X_k \cdots X_j\}$, where $k<i$. Since $X_k \in \mathbf{X}_{\text{Pre}(j) \setminus i}$, this path is d-connected only if $X_k$ is a collider on the path. However, $X_k$ cannot be a collider because of $\{X_i \leftarrow X_k\}$. Thus, $X_i \indep X_j | \mathbf{X}_{\text{Pre}(j) \setminus i}$ and $(X_i, X_j)_\mathrm{d} \notin \mathbf{E}_1$.
        
        \item $\{X_i \cdots \rightarrow X_k \cdots X_j\}$, where $k<i$. Because of assumption \ref{assum:t2a1}, $\{X_i \rightarrow X_k\}$ is impossible. Thus, there must be a confounded pair $\{X_k, X_i\}_\mathrm{c}$. Because of assumption \ref{assum:t1a2}, this is impossible and thus $(X_i, X_j)_\mathrm{d} \notin \mathbf{E}_1$.
        
        \item $\{X_i \cdots \leftrightarrow X_k \cdots X_j\}$, where $k<i$ and $\leftrightarrow$ means both tails and arrow heads exist. Because of assumption \ref{assum:t1a1}, $\{X_i \rightarrow X_k\}$ is impossible. Thus, there must be a confounded pair $\{X_k, X_i\}_\mathrm{c}$. Because of assumption \ref{assum:t1a2}, this is impossible and thus $(X_i, X_j)_\mathrm{d} \notin \mathbf{E}_1$.
        
        \item $\{X_i \cdots \leftarrow X_k \cdots X_j\}$, where $k>i$. Based on this, we discuss the following cases: 
        \begin{enumerate}[label=V.\roman*:]
            \item The case where $i < k < j$. Because of assumption \ref{assum:t1a1}, $\{X_i \leftarrow X_k\}$ is impossible. Thus, there must be a selection pair $\{X_i, X_k\}_\mathrm{s}$, and $X_k$ must not be a collider on the path. Since $\mathbf{X}_{\text{Pre}(j) \setminus i}$ is the condition set and $X_k \in \mathbf{X}_{\text{Pre}(j) \setminus i}$, we have $X_i \indep X_j | \mathbf{X}_{\text{Pre}(j) \setminus i}$ ($X_k \in \mathbf{X}_{\text{Pre}(j) \setminus i})$ and thus $(X_i, X_j)_\mathrm{d} \notin \mathbf{E}_1$.
            \item The case where $k > j$. Because of assumption \ref{assum:t1a1}, $\{X_i \leftarrow X_k\}$ is impossible. Thus, there must be a selection pair $\{X_i, X_k\}_\mathrm{s}$, and $X_k$ must not be a collider. Since $k > j$, $X_k$ is not in the conditioning set $\mathbf{X}_{\text{Pre}(j) \setminus i}$. However, the index $k$ for any selection pair $\{X_i, X_k\}_\mathrm{s}$, where $k>j$, has already been included in $\mathcal{R}[i]$ according to the order of tests in the algorithm (lines \ref{l3}-\ref{l10}). Therefore, there is the conditional independence $X_i \indep X_j | \{\mathbf{X}_{\text{Pre}(j) \setminus i}, X_{j+1}, X_k\}$, which implies $(X_i, X_j)_\mathrm{d} \notin \mathbf{E}_1$.
        \end{enumerate}

        \item $\{X_i \cdots \rightarrow X_k \cdots X_j\}$, where $k>i$. Based on this, we discuss the following cases:
        \begin{enumerate}[label=VI.\roman*:]
            \item The case where $i < k < j$. Since $\mathbf{X}_{\text{Pre}(j) \setminus i}$ is the condition set and $X_k \in \mathbf{X}_{\text{Pre}(j) \setminus i}$, $X_k$ must be a collider for the path to be d-connected. On the path, let us denote the $X_p$ as the observed variable closest to $X_k$ other than $X_i$. Thus, we must have $\{X_k \leftarrow \cdots X_p\}$ on the path. We further have the following cases:
                \begin{itemize}
                    \item The case where $p < k$. If $p < k$, we have $p < i$ as $p \neq i$. Since $\{X_k \leftarrow \cdots X_p\}$, $(X_k, X_p)$ is either a confounded pair or direct relation. Because of assumption \ref{assum:t1a2}, $(X_k, X_p)$ can only be a direct relation. Therefore, $X_{p}$ must not be a collider. So we have $X_i \indep X_j | \mathbf{X}_{\text{Pre}(j) \setminus i}$, which implies $(X_i, X_j)_\mathrm{d} \notin \mathbf{E}_1$.
                    \item The case where $p > k$. In this case, $(X_k, X_p)$ can only be a confounded pair. Because of assumption \ref{assum:t1a2}, this is impossible and thus $(X_i, X_j)_\mathrm{d} \notin \mathbf{E}_1$.                  
                \end{itemize}
                
            \item The case where $k > j$. If $k > j$ ($X_k \notin \mathbf{X}_{\text{Pre}(j) \setminus i}$), the only case where the path between $X_i$ and $X_j$ is not d-separated by $X_k$ is the case where $X_k$ is the cause of a selection variable. However, since we have $\{X_i \cdots \rightarrow X_k\}$, it is not possible for $(X_i, X_k)$ to be a selection pair. For any other selection pairs, in order to have $X_i \notindependent X_j | \{\mathbf{X}_{\text{Pre}(j) \setminus i}\}$, $(X_j, X_k)$ must be a selection pair. According to assumption \ref{assum:t1a3}, we have $k \neq j+1$. Therefore, the path is d-separated by either $k$, where $k \in \mathcal{R}[i]$, or $j+1$. Consequently, there must be the conditional independence $X_i \notindependent X_j | \{\mathbf{X}_{\text{Pre}(j) \setminus i}, X_{j+1}, X_k\}$ and thus $(X_i, X_j)_\mathrm{d} \notin \mathbf{E}_1$.
        \end{enumerate}

        \item $\{X_i \cdots \leftrightarrow X_k \cdots X_j\}$, where $k>i$. Based on this, we discuss the following cases:
        \begin{enumerate}[label=VII.\roman*:]
            \item The case where $i < k < j$. Since $\mathbf{X}_{\text{Pre}(j) \setminus i}$ is the condition set and $X_k \in \mathbf{X}_{\text{Pre}(j) \setminus i}$, $X_k$ must be a collider for the path to be d-connected. At the same time, $(X_i, X_k)$ must be a selection pair. We further discuss the following cases:
                \begin{itemize}
                    \item The case where $k = i+1$. According to assumption \ref{assum:t1a3}, $(X_i, X_k)$ cannot be a selection pair if $k = i+1$, which is a contradiction.
                    \item The case where $k \neq i+1$. On the path, let us denote the $X_p$ as the observed variable closest to $X_k$ other than $X_i$. Thus, we must have $\{X_k \leftarrow \cdots X_p\}$ on the path. We further have the following cases:
                     \begin{itemize}
                        \item The case where $p < k$. If $p < k$, we have $p < i$ as $p \neq i$. Since $\{X_k \leftarrow \cdots X_p\}$, $(X_p, X_k)$ is either a confounded pair or a direct relation. Because of assumption \ref{assum:t1a2}, $(X_p, X_k)$ can only be a direct relation. Therefore, $X_{p}$ must not be a collider, and $X_i \indep X_j | \{\mathbf{X}_{\text{Pre}(j) \setminus i}\}$, which implies $(X_i, X_j)_\mathrm{d} \notin \mathbf{E}_1$.          
                        \item The case where $p > k$. In this case, $(X_k, X_p)$ can only be a confounded pair. Because of assumption \ref{assum:t1a2}, this is impossible and thus $(X_i, X_j)_\mathrm{d} \notin \mathbf{E}_1$.
                    \end{itemize}
                \end{itemize}
            \item The case where $k > j$. If $k > j$ ($X_k \notin \mathbf{X}_{\text{Pre}(j) \setminus i}$), the only case where the path between $X_i$ and $X_j$ is not d-separated by $X_k$ is the case where $X_k$ is the cause of a selection variable. However, the index $k$ for any selection pair $\{X_{i}, X_k\}_\mathrm{s}$, where $k>j$, has already been included in $R[i]$ according to the order of tests in the algorithm (lines \ref{l3}-\ref{l10}). According to assumption \ref{assum:t1a3}, $X_k$ cannot be caused by more than two observed variables via higher-order direct relations. Therefore, the only case for $X_i$ to be conditionally dependent of $X_j$ given $\{\mathbf{X}_{\text{Pre}(j) \setminus i}, X_{j+1}, X_k\}$ for all $k \in \mathcal{R}[i]$ is the structure that $(X_i, X_{j+1})_{\{s,x\}}$. At the same time, we also have $X_i \notindependent X_j | \{\mathbf{X}_{\text{Pre}(j) \setminus i}\}$. As a result, we have $(X_i, X_j)_\mathrm{d} \in \mathbf{E}_1$ where $(X_i, X_{j+1})$ is both a selection pair and a direct relation.
        \end{enumerate}
    \end{enumerate}

    \textbf{Stage Two.} \ \ 
    Next, we prove that we can identify selection pairs, direct relations, and dependent pairs that are both selection pairs and direct relations in $\mathcal{E}_1$ after lines \ref{l11}-\ref{l29} in the algorithm.
    
    We discuss the following cases for all types in $\mathcal{E}_1$:
        \begin{enumerate}[label=\Roman*:]
            \item $(X_i, X_j)$ is a selection pair but not a direct relation. According to the algorithm, it will be identified as a selection pair but not a direct relation if and only if $X_i \indep X_k | \{\mathbf{X}_{\text{Pre}(k) \setminus i}, X_j, \mathbf{T}\}$ (the definition of $\mathbf{T}$ is in line \ref{l17} in the algorithm).
            

            We first show that all these selection pairs will be identified. If $(X_i, X_j)$ is a selection pair but not a direct relation, $X_j$ must be a non-collider on the path since there is no latent confounder according to assumption \ref{assum:t1a2}, which may lead to the d-separation and thus the conditional independence. Thus, we need to show that there exists no other path that leads to conditional dependence. We prove it by contradiction. 
            
            Suppose that there exist a path other than $\{X_i \rightarrow S \leftarrow X_j \cdots X_k\}$ that makes $X_i \notindependent X_k | \{\mathbf{X}_{\text{Pre}(k) \setminus i}, X_j, \mathbf{T}\}$. We discuss all potential cases for that path:

            \begin{enumerate}[label=I.\roman*:]
                \item $\{X_i \cdots \leftarrow X_p \cdots X_k\}$, where $p<i$ and $X_p$ denotes the observed variable that is the closest to $X_i$ on the path. Since $X_p \in \mathbf{X}_{\text{Pre}(k) \setminus i}$, this path is d-connected if and only if $X_p$ is a collider on the path. However, $X_p$ cannot be a collider because of $\{X_i \leftarrow X_p\}$. Thus, this case is impossible.
                \item $\{X_i \cdots \rightarrow X_p \cdots X_k\}$, where $p<i$ and $X_p$ denotes the observed variable that is the closest to $X_i$ on the path. Because of assumption \ref{assum:t1a1}, $\{X_i \rightarrow X_p\}$ is impossible. Thus, there must be a confounded pair $\{X_p, X_i\}_\mathrm{c}$. Because of assumption \ref{assum:t1a2}, there are no latent confounders and thus this case is impossible.
                \item $\{X_i \cdots \leftrightarrow X_p \cdots X_k\}$, where $p<i$ and $X_p$ denotes the observed variable that is the closest to $X_i$ on the path. Because of assumption \ref{assum:t1a1}, $\{X_p \rightarrow X_i\}$ is impossible. Thus, there must be a confounded pair $\{X_p, X_i\}_\mathrm{c}$. Because of assumption \ref{assum:t1a2}, there are no latent confounders and thus this case is impossible.
           
                \item $\{X_i \cdots \leftarrow X_p \cdots X_k\}$, where $i < p < k$ and $X_p$ denotes the observed variable that is the closest to $X_i$ on the path. Because of assumption \ref{assum:t1a1}, $\{X_i \leftarrow X_p\}$ is impossible. Thus, there must be a selection pair $(X_i, X_p)_\mathrm{s}$, and $X_p$ must not be a collider. Since $\{\mathbf{X}_{\text{Pre}(k) \setminus i}, X_j, \mathbf{T}\}$ is the condition set and $X_p \in \mathbf{X}_{\text{Pre}(k) \setminus i}$, we have $X_i \indep X_k | \{\mathbf{X}_{\text{Pre}(k) \setminus i}, X_j, \mathbf{T}\}$ ($X_p \in \mathbf{X}_{\text{Pre}(k) \setminus i}$) and thus this case is impossible.

                \item $\{X_i \cdots \rightarrow X_p \cdots X_k\}$, where $i < p < k$. Since $\mathbf{X}_{\text{Pre}(k) \setminus i}$ is in the condition set and $X_p \in \mathbf{X}_{\text{Pre}(k) \setminus i}$, $X_p$ must be a collider for the path to be d-connected. On the path, let us denote the $X_q$ as the observed variable closest to $X_p$ other than $X_i$. Thus, we must have $\{X_p \leftarrow \cdots X_q\}$ on the path. We further have the following cases:
                    \begin{itemize}
                    \item The case where $q < p$. If $q < p$, we have $q < i$ as $q \neq i$. Since $\{X_p \leftarrow \cdots X_q\}$, $(X_q, X_p)$ is either a confounded pair or direct relation. Because of assumption \ref{assum:t1a2}, it can only be a direct relation. If $(X_q, X_p)$ is a direct relation, $X_{q}$ must not be a collider, and $X_i \indep X_k | \{\mathbf{X}_{\text{Pre}(k) \setminus i}, X_j, \mathbf{T}\}$ ($X_q \in \mathbf{X}_{\text{Pre}(k) \setminus i}$) and thus this case is impossible.
                    \item The case where $q > p$. In this case, $(X_p, X_q)$ can only be a confounded pair. Because of assumption \ref{assum:t1a2}, there are no latent confounders and thus this case is impossible.
                \end{itemize}

                \item $\{X_i \cdots \leftrightarrow X_p \cdots X_k\}$, where $i < p < k$. Since $\mathbf{X}_{\text{Pre}(k) \setminus i}$ is in the condition set and $X_p \in \mathbf{X}_{\text{Pre}(k) \setminus i}$, $X_p$ must be a collider for the path to be d-connected. At the same time, $(X_i, X_p)$ must be a selection pair. We further discuss the following cases:
                \begin{itemize}
                    \item The case where $p = i+1$. According to assumption \ref{assum:t1a3}, $(X_i, X_p)$ cannot be a selection pair if $p = i+1$, which is a contradiction.
                    \item The case where $p \neq i+1$. On the path, let us denote the $X_q$ as the observed variable closest to $X_p$ other than $X_i$. Thus, we must have $\{X_p \leftarrow \cdots X_q\}$ on the path. We further have the following cases:
                     \begin{itemize}
                        \item The case where $q < p$. If $q < p$, we have $q < i$ as $q \neq i$. Since $\{X_p \leftarrow \cdots X_q\}$, $(X_q, X_p)$ is either a confounded pair or a direct relation. Because of assumption \ref{assum:t1a2}, $(X_q, X_p)$ must be a direct relation, $X_{q}$ must not be a collider, and the path is, again, not d-connected. Thus, this case is impossible.
                        \item The case where $q > p$. In this case, $(X_p, X_q)$ can only be a confounded pair. Because of assumption \ref{assum:t1a2}, there should not be any latent confounders. Thus, this case is impossible.
                    \end{itemize}               
                \end{itemize}
                
                \item $\{X_i \cdots \leftarrow X_p\}$, where $k=p$. This is impossible because of the algorithm, specifically, how $k$ is selected (lines 20-25).

                \item $\{X_i \cdots \rightarrow X_p\}$, where $k=p$. This is also impossible because of the algorithm.

                \item $\{X_i \cdots \leftrightarrow X_p\}$, where $k=p$. Similarly, this is impossible according to the algorithm.

                \item $\{X_i \cdots \leftarrow X_p \cdots X_k\}$, $k < p < j$. Because of assumption \ref{assum:t1a1}, $\{X_i \leftarrow X_p\}$ is impossible. Thus, there must be a selection pair $(X_i, X_p)_\mathrm{s}$. 
                Together with the search procedure (lines \ref{l11}-\ref{l16}), it is thus impossible for the selected $k$ to be smaller than $p$, which is a contradiction.

                \item $\{X_i \cdots \rightarrow X_p \cdots X_k\}$, $k < p < j$. Because of $\{X_i \cdots \rightarrow X_p\}$, $(X_i, X_p)$ is a higher-order direct relation. Since $X_p$ is not in the condition set, $(X_i, X_p)$ must also be a selection pair for the path to be d-connected. However, according to assumption \ref{assum:t1a3}, there exits no $(X_i, X_{p+1})_\mathrm{d}$. This contradicts the search procedure to select $k$ (lines \ref{l11}-\ref{l16}), and thus this case is impossible.

                \item $\{X_i \cdots \leftrightarrow X_p \cdots X_k\}$, $k < p < j$. Because of assumption \ref{assum:t1a1}, $\{X_i \leftarrow X_p\}$ is impossible. Thus, there must be a selection pair $(X_i, X_p)_\mathrm{s}$. Similarly, it is thus impossible for the selected $k$ to be smaller than $p$, which is a contradiction.

                \item $\{X_i \cdots \leftarrow X_p \cdots X_k\}$, $j < p$. Because of assumption \ref{assum:t1a1}, $\{X_i \leftarrow X_p\}$ is impossible. Thus, there must be a selection pair $(X_i, X_p)_\mathrm{s}$. Moreover, according to assumption \ref{assum:t1a3}, $X_p$ cannot be the effect of multiple higher-order directions. Thus, $X_p$ must not be a collider on the path, so we can d-separate the path by conditioning on it. Since $X_p$ is already in $\mathcal{T}$ according to the algorithm, we have $X_i \indep X_k | \{\mathbf{X}_{\text{Pre}(k) \setminus i}, X_j, \mathbf{T}\}$ and thus this case is impossible.

                \item $\{X_i \cdots \rightarrow X_p \cdots X_k\}$, $j < p$. In this case, $(X_i, X_p)$ is either a higher-order direct relation or a confounded pair. Because of assumption \ref{assum:t1a2}, there are no latent confounders and thus it can only be a higher-order direct relation. We further discuss the following case:
                \begin{itemize}
                    \item $X_p$ is not a cause of a selection variable. In this case, $X_p$ must be a collider on the path, and the path is d-separated when not conditioning on $X_p$. So we have $X_i \indep X_k | \{\mathbf{X}_{\text{Pre}(k) \setminus i}, X_j, \mathbf{T}\}$, which indicates that this case is impossible.
                    \item $X_p$ is a cause of a selection variable. According to assumption \ref{assum:t1a3}, it cannot be the effect of multiple higher-order directions. Therefore, if $X_p$ is the cause of another selection variable together with any observed variable with index $u \in [k, j)$, the path is always d-separated by conditioning on both $X_p$ and $X_j$. At the same time, if $X_p$ is the cause of another selection variable together with any observed variable with an index larger than $j$, that variable will also be in the condition set ($\mathbf{T}$) according to the algorithm (line \ref{l17}). Recursively, all these variables will be in the condition set, and the path will be d-separated, which indicates that this case is impossible.               
                \end{itemize}

                \item $\{X_i \cdots \leftrightarrow X_p \cdots X_k\}$, $j < p$. Because of assumption \ref{assum:t1a1}, $\{X_i \leftarrow X_p\}$ is impossible. Thus, there must be a selection pair $(X_i, X_p)_\mathrm{s}$. At the same time, we have $(X_i, X_p)_{\mathrm{d} \neq \mathrm{c}}$, i.e., it is both a selection pair and a direct relation. According to assumption \ref{assum:t1a3}, it cannot be the effect of multiple higher-order directions. Therefore, if $X_p$ is the cause of another selection variable together with any observed variable with index $u \in [k, j)$, the path is always d-separated by conditioning on both $X_p$ and $X_j$. At the same time, if $X_p$ is the cause of another selection variable together with any observed variable with an index larger than $j$, that variable will also be in the condition set ($\mathbf{T}$) according to the algorithm (line \ref{l17}). Recursively, all these variables will be in the condition set, and the path will be d-separated, which indicates that this case is impossible.               
                
            \end{enumerate}
            Therefore, there does not exist any path other than $\{X_i \rightarrow S \leftarrow X_j \cdots X_k\}$ that makes $X_i \notindependent X_k | \{\mathbf{X}_{\text{Pre}(k) \setminus i}, X_j, \mathbf{T}\}$. So all selection pairs $(X_i, X_j)_\mathrm{s}$ are identified.
            
            We then discuss the other cases in $\mathcal{E}_1$ and show that it is impossible for any other types of variable pairs in $\mathcal{E}_1$ to have the test results that $X_i \indep X_k | \{\mathbf{X}_{\text{Pre}(k) \setminus i}, X_j, \mathbf{T}\}$.

            \begin{itemize}
                \item $(X_i, X_j)$ is a direct relation but not a selection pair. In this case, since $X_j$ is collider on the path $\{X_i \cdots \rightarrow X_j\}$, for any $v \in \mathbf{T}$, there is always $X_i \notindependent X_k | \{\mathbf{X}_{\text{Pre}(k) \setminus i}, X_j, \mathbf{T}\}$. Thus, this case is impossible.
                \item $(X_i, X_j)$ is both a selection pair and a direct relation. In this case, since $X_j$ is still a collider on the path $\{X_i \cdots \rightarrow X_j\}$, for any $v \in \mathbf{T}$, there is always $X_i \notindependent X_k | \{\mathbf{X}_{\text{Pre}(k) \setminus i}, X_j, \mathbf{T}\}$, which indicates that this case is impossible.
                \item $(X_i, X_j)$ is a spurious dependent pair that there exists $(X_i, X_{j+1})$ that is both a selection pair and a direct relation in the ground-truth causal graph. In this case, $(X_i, X_j)$ is not a dependent pair according to assumption \ref{assum:t1a3}, and will be removed by the algorithm (line \ref{l24}). Therefore, this case is impossible.
            \end{itemize}
       
            \item $(X_i, X_j)$ is a direct relation but not a selection pair. We first show that all these variable pairs will be identified. In this case, $X_j$ is a collider on the path. Thus, we have $X_i \notindependent X_k | \{\mathbf{X}_{\text{Pre}(k) \setminus i}, X_j, \mathbf{T}\}$. However, when $X_j$ is not in the condition set, we may have the d-separation and thus the conditional independence $X_i \indep X_k | \{\mathbf{X}_{\text{Pre}(k) \setminus i}, X_{j+1}, \mathbf{T}\}$ (the definition of $\mathbf{T}$ is in line \ref{l17} in the algorithm). Thus, we need to show that there exists no other path that leads to conditional dependence. We prove it by contradiction. 

            Suppose that there exist a path other than $\{X_i \rightarrow  X_j \cdots X_k\}$ that makes  $X_i \notindependent X_k | \{\mathbf{X}_{\text{Pre}(k) \setminus i}, X_{j+1}, \mathbf{T}\}$. Denote $X_p$ as the observed variable that is the closest to $X_i$ on the path, we discuss all potential cases for that path:

            \begin{enumerate}[label=II.\roman*:]
                \item The discussions for the following cases are similar to those in the cases where $(X_i, X_j)$ is a selection pair but not a direct relation:
                    \begin{itemize}
                        \item $\{X_i \cdots \leftarrow X_p \cdots X_k\}$, where $p<i$, $i < p < k$, or $k < p < j$.
                        \item $\{X_i \cdots \rightarrow X_p \cdots X_k\}$, where $p<i$ or $k < p < j$ .
                        \item $\{X_i \cdots \leftrightarrow X_p \cdots X_k\}$, where $p<i$, $i < p < k$, or $k < p < j$.
                        \item $\{X_i \cdot X_p\}$, where $p = k$. 
                    \end{itemize}
                    Thus, we mainly focus on the other cases as follows.

                \item $\{X_i \cdots \rightarrow X_p \cdots X_k\}$, where $i < p < k$. Since $\mathbf{X}_{\text{Pre}(k) \setminus i}$ is in the condition set and $X_p \in \mathbf{X}_{\text{Pre}(k) \setminus i}$, $X_p$ must be a collider for the path to be d-connected. On the path, let us denote the $X_q$ as the observed variable closest to $X_p$ other than $X_i$. Thus, we must have $\{X_p \leftarrow \cdots X_q\}$ on the path. We further have the following cases:
                \begin{itemize}
                    \item The case where $q < p$. If $q < p$, we have $q < i$ as $q \neq i$. Since $\{X_p \leftarrow \cdots X_q\}$, $(X_p, X_q)$ is either a confounded pair or a direct relation. Because of assumption \ref{assum:t1a2}, it can only be a direct relation. Therefore, $X_{q}$ must not be a collider, and $X_i \indep X_k | \{\mathbf{X}_{\text{Pre}(k) \setminus i}, \mathbf{T}\}$ ($X_{q-1} \in \mathbf{X}_{\text{Pre}(k) \setminus i}$) and thus this case is impossible.
                    \item The case where $q > p$. In this case, $(X_p, X_q)$ can only be a confounded pair. Because of assumption \ref{assum:t1a2}, there are no latent confounders and thus this case is impossible.
                \end{itemize}

                \item $\{X_i \cdots \leftarrow X_p \cdots X_k\}$, $j < p$. Because of assumption \ref{assum:t1a3}, $\{X_i \leftarrow X_p\}$ is impossible. Thus, there must be a selection pair $(X_i, X_p)_\mathrm{s}$. Moreover, according to assumption \ref{assum:t1a3}, $X_p$ cannot be the effect of multiple higher-order directions. Thus, $X_p$ must not be a collider on the path, so we can d-separate the path by conditioning on it. Since $X_p$ is already in $\mathbf{T}$ according to the algorithm, we have $X_i \indep X_k | \{\mathbf{X}_{\text{Pre}(k) \setminus i}, X_{j+1}, \mathbf{T}\}$ and thus this case is impossible.

                \item $\{X_i \cdots \rightarrow X_p \cdots X_k\}$, $j < p$. In this case, $(X_i, X_p)$ is either a higher-order direct relation or a confounded pair. Because of assumption \ref{assum:t1a2}, there are no latent confounders, so it can only be a higher-order direct relation. We further discuss the following case:
                \begin{itemize}
                    \item $X_p$ is not a cause of a selection variable. In this case, $X_p$ must be a collider on the path, and the path is d-separated when not conditioning on $X_p$. So we have $X_i \indep X_k | \{\mathbf{X}_{\text{Pre}(k) \setminus i}, X_{j+1}, \mathbf{T}\}$, which indicates that this case is impossible.
                    \item $X_p$ is a cause of a selection variable. According to assumption \ref{assum:t1a3}, it cannot be the effect of multiple higher-order directions. Meanwhile, the index $p$ has already been included in $R[i]$ according to the order of tests in the algorithm (lines \ref{l12} and \ref{l13}). At the same time, it is impossible for $p$ to be $j+1$ according to assumption \ref{assum:t1a3}. This is because if $X_{j+1}$ is a cause of a selection variable while $(X_i, X_{j+1})$ is a higher-order direct relation, it is impossible to have $(X_i, X_j)_\mathrm{d}$. Therefore, the path is always d-separated by conditioning on $\{\mathbf{X}_{\text{Pre}(k) \setminus i}, X_{j+1}, \mathbf{T}\}$ and thus this case is impossible.            
                \end{itemize}

                \item $\{X_i \cdots \leftrightarrow X_p \cdots X_k\}$, $j < p$. Because of assumption \ref{assum:t1a1}, $\{X_i \leftarrow X_p\}$ is impossible. Thus, there must be a selection pair $(X_i, X_p)_\mathrm{s}$. At the same time, we have $(X_i, X_p)_{\mathrm{d} \neq \mathrm{c}}$, i.e., it is both a selection pair and a direct relation. According to assumption \ref{assum:t1a3}, it cannot be the effect of multiple higher-order directions. Meanwhile, it is impossible for $p$ to be $j+1$ according to assumption \ref{assum:t1a3}. This is because if $X_{j+1}$ is a cause of a selection variable while $(X_i, X_{j+1})$ is a higher-order direct relation, it is impossible to have $(X_i, X_j)_\mathrm{d}$. Therefore, the path is always d-separated by conditioning on $\{\mathbf{X}_{\text{Pre}(k) \setminus i}, X_{j+1}, \mathbf{T}\}$ and thus this case is impossible.            
            \end{enumerate}


            
        We then discuss the other cases in $\mathcal{E}_1$ and show that it is impossible for any other types of variable pairs in $\mathcal{E}_1$ to be identified as a direct relation.

        \begin{itemize}
            \item $(X_i, X_j)$ is a selection pair but not a directed relation. As already shown before, it will always be identified as a selection pair but not a direct relation. Thus, this case is impossible
            
            \item $(X_i, X_j)$ is both a selection pair and a direct relation. In this case, since $X_j$ is still a collider on the path $\{X_i \cdots \rightarrow X_k\}$, there is always $X_i \notindependent X_k | \{\mathbf{X}_{\text{Pre}(k) \setminus i}, X_j, \mathbf{T}\}$. At the same time, since $(X_i, X_j)$ is also a selection pair, not conditioning on $X_j$ will also open the path. Thus, we also have $X_i \notindependent X_k | \{\mathbf{X}_{\text{Pre}(k) \setminus i}, X_{j+1}, \mathbf{T}\}$, which implies that this variable pair will not be identified as just a direct relation according to the algorithm (lines \ref{l19}-\ref{l22}).

            \item $(X_i, X_j)$ is a spurious dependent pair that there exists $(X_i, X_{j+1})$ that is both a selection pair and a direct relation in the ground-truth causal graph. In this case, $(X_i, X_j)$ is not a dependent pair according to assumption \ref{assum:t1a3}, and will be removed by the algorithm (line \ref{l24}). Therefore, this case is impossible.

        \end{itemize}

        \item $(X_i, X_j)$ is both a selection pair and a direct relation. In this case, since $X_j$ is still a collider on the path $\{X_i \cdots \rightarrow X_k\}$, there is always $X_i \notindependent X_k | \{\mathbf{X}_{\text{Pre}(k) \setminus i}, X_j, \mathbf{T}\}$. At the same time, since $(X_i, X_j)$ is also a selection pair, not conditioning on $X_j$ will also open the path. Thus, we also have $X_i \notindependent X_k | \{\mathbf{X}_{\text{Pre}(k) \setminus i}, X_{j+1}, \mathbf{T}\}$. We then discuss the other cases in $\mathcal{E}_1$ and show that, except for those variable pairs that are both a selection pair and a direct relation, it is impossible for any other types of variable pairs in $\mathcal{E}_1$ to be identified as it.

        \begin{itemize}
            \item $(X_i, X_j)$ is a selection pair but not a directed relation. As already shown before, it will always be identified as a selection pair but not a direct relation. Thus, this case is impossible
            
            \item $(X_i, X_j)$ is a direct relation but not a selection pair. In this case, according to the discussion before, there is always the conditional independence $X_i \notindependent X_k | \{\mathbf{X}_{\text{Pre}(k) \setminus i}, X_{j+1}, \mathbf{T}\}$. Thus, it is impossible for this case to be identified as a variable pair i.e. both a selection pair and a direct relation.

             \item $(X_i, X_j)$ is a spurious dependent pair that there exists $(X_i, X_{j+1})$ that is both a selection pair and a direct relation in the ground-truth causal graph. In this case, $(X_i, X_j)$ is not a dependent pair according to assumption \ref{assum:t1a3}, and will be removed by the algorithm (line \ref{l24}). Therefore, this case is impossible.
        \end{itemize}

        \item $(X_i, X_j)$ is a spurious dependent pair that there exists $(X_i, X_{j+1})$ that is both a selection pair and a direct relation in the ground-truth causal graph. In this case, $(X_i, X_j)$ is not a dependent pair according to assumption \ref{assum:t1a3}, and will be removed by the algorithm (line \ref{l24}).
        \end{enumerate}
    Therefore, after Stage Two, we identify all selection pairs and direct relations. As a result, we have shown that all selection pairs and direct relations in the ground-truth causal graph are identifiable.
\end{proof}

\subsection{Proof of Theorem \ref{thm:2}}
\label{sec:proof_thm_2}

\thmII*

\begin{proof}
    We prove the identifiability of the causal graph by showing that the algorithm \ref{alg:global} identifies all selection pairs, direct relations, and confounded pairs. Consistent with the three stages of the algorithm, the proof is divided into the following three stages:
    \begin{itemize}
        \item \textbf{Stage One}: Identifying dependent pairs (selection pairs, direct relations, and confounded pairs) together with several spurious structures.
        \item \textbf{Stage Two}: Identifying selection pairs, dependent pairs that are both selection pairs and direct relations, and direct relations, while remaining some spurious direct relations corresponding to unidentified confounded pairs.
        \item \textbf{Stage Three}: Identifying confounded pairs while removing all spurious structures.
    \end{itemize}

    \textbf{Stage One.} \ \
    We first show that, after lines \ref{l1}-\ref{l10}, the identified sets of variable pairs, denoted as $\mathbf{E}_1$, only consists of the following types of cases:
    \begin{itemize}
        \item All selection pairs $(X_i, X_j)_\mathrm{s}$ in the ground-truth causal graph. 
        \item All direct relations $(X_i, X_j)_\mathrm{x}$ in the ground-truth causal graph.
        \item All confounded pairs $(X_i, X_j)_\mathrm{c}$ in the ground-truth causal graph.
        \item All pairs of variables $(X_i, X_j)$ that there exist confounded pairs $(X_{i+1}, X_j)_\mathrm{c}$ in the ground-truth causal graph.
        \item All dependent pairs that are both selection pairs and direct relations, i.e., $(X_i, X_j)_{\{s,x\}}$, in the ground-truth causal graph.
        \item All pairs of variables $(X_i, X_j)$ that there exists $(X_i, X_{j+1})$ that is both a selection pair and a direct relation in the ground-truth causal graph.
    \end{itemize}
    For a variable pair $(X_i, X_j)$, we discuss all potential cases for the paths between $X_i$ and $X_j$ as follows. We denote $X_k$ as the observed variable closest to $X_i$.  
    
    \begin{enumerate}[label=\Roman*:]
        \item $(X_i, X_j)$ is a dependent pair. In this case, $X_i$ is d-connected with $X_j$ given any conditioning set, thus  $X_i \notindependent X_j | \mathbf{X}_{\text{Pre}(j) \setminus i}$ and $(X_i, X_j)_\mathrm{d} \in \mathbf{E}_1$. This covers the cases where $(X_i, X_j)$ is a selection pair, a direct relation, a confounded pair, and a dependent pair which is both a selection pair and a direct relation.
        
        \item $\{X_i \cdots \leftarrow X_k \cdots X_j\}$, where $k<i$. Since $X_k \in \mathbf{X}_{\text{Pre}(j) \setminus i}$, this path is d-connected if and only if $X_k$ is a collider on the path. However, $X_k$ cannot be a collider because of $\{X_i \leftarrow X_k\}$. Thus, $X_i \indep X_j | \mathbf{X}_{\text{Pre}(j) \setminus i}$ and $(X_i, X_j)_\mathrm{d} \notin \mathbf{E}_1$.
        
        \item $\{X_i \cdots \rightarrow X_k \cdots X_j\}$, where $k<i$. Because of assumption \ref{assum:t2a1}, $\{X_i \rightarrow X_k\}$ is impossible. Thus, there must be a confounded pair $\{X_k, X_i\}_\mathrm{c}$. Because of assumption \ref{con:2}, $X_k$ cannot be caused by another confounder or observed variable via higher-order direct relation. Therefore, $X_k$ is not a collider on the path, and we have $X_i \indep X_j | \mathbf{X}_{\text{Pre}(j) \setminus i}$ ($X_k \in \mathbf{X}_{\text{Pre}(j) \setminus i}$) and thus $(X_i, X_j)_\mathrm{d} \notin \mathbf{E}_1$.
        
        \item $\{X_i \cdots \leftrightarrow X_k \cdots X_j\}$, where $k<i$ and $\leftrightarrow$ means both tails and arrow heads exist. Because of assumption \ref{assum:t2a1}, $\{X_i \rightarrow X_k\}$ is impossible. Thus, there must be a confounded pair $\{X_k, X_i\}_\mathrm{c}$. Because of assumption \ref{con:2}, $(X_k, X_i)$ cannot be a selection pair or direct relation, and $X_k$ cannot be caused by another confounder or observed variable via higher-order direct relation. We further discuss the following cases:
        \begin{enumerate}[label=IV.\roman*:]
            \item $X_k$ is a collider. For $X_k$ to be a collider, the only case is $\{X_{k-1} \rightarrow X_k \leftarrow C \rightarrow X_i\}$, where $C$ as the corresponding latent confounder in $\mathbf{C}$. In that case, $X_{k-1} \in \mathbf{X}_{\text{Pre}(j) \setminus i}$ cannot be a collider on the path. Thus, we have $X_i \indep X_j | \mathbf{X}_{\text{Pre}(j) \setminus i}$, which implies $(X_i, X_j)_\mathrm{d} \notin \mathbf{E}_1$.
            \item $X_k$ is not a collider. Then we have $X_i \indep X_j | \mathbf{X}_{\text{Pre}(j) \setminus i}$ ($X_k \in \mathbf{X}_{\text{Pre}(j) \setminus i}$) and thus $(X_i, X_j)_\mathrm{d} \notin \mathbf{E}_1$.
        \end{enumerate}
        
        \item $\{X_i \cdots \leftarrow X_k \cdots X_j\}$, where $k>i$. Based on this, we discuss the following cases: 
        \begin{enumerate}[label=V.\roman*:]
            \item The case where $i < k < j$. Because of assumption \ref{assum:t2a1}, $\{X_i \leftarrow X_k\}$ is impossible. Thus, there must be a selection pair $\{X_i, X_k\}_\mathrm{s}$, and $X_k$ must not be a collider on the path. Since $\mathbf{X}_{\text{Pre}(j) \setminus i}$ is the condition set and $X_k \in \mathbf{X}_{\text{Pre}(j) \setminus i}$, we have $X_i \indep X_j | \mathbf{X}_{\text{Pre}(j) \setminus i}$ ($X_k \in \mathbf{X}_{\text{Pre}(j) \setminus i})$ and thus $(X_i, X_j)_\mathrm{d} \notin \mathbf{E}_1$.
            \item The case where $k > j$. Because of assumption \ref{assum:t2a1}, $\{X_i \leftarrow X_k\}$ is impossible. Thus, there must be a selection pair $\{X_i, X_k\}_\mathrm{s}$, and $X_k$ must not be a collider. Since $k > j$, $X_k$ is not in the conditioning set $\mathbf{X}_{\text{Pre}(j) \setminus i}$. However, the index $k$ for any selection pair $\{X_i, X_k\}_\mathrm{s}$, where $k>j$, has already been included in $\mathcal{R}[i]$ according to the order of tests in the algorithm (lines \ref{l3}-\ref{l10}). Therefore, there is the conditional independence $X_i \indep X_j | \{\mathbf{X}_{\text{Pre}(j) \setminus i}, X_{j+1}, X_k\}$, which implies $(X_i, X_j)_\mathrm{d} \notin \mathbf{E}_1$.
        \end{enumerate}

        \item $\{X_i \cdots \rightarrow X_k \cdots X_j\}$, where $k>i$. Based on this, we discuss the following cases:
        \begin{enumerate}[label=VI.\roman*:]
            \item The case where $i < k < j$. Since $\mathbf{X}_{\text{Pre}(j) \setminus i}$ is the condition set and $X_k \in \mathbf{X}_{\text{Pre}(j) \setminus i}$, $X_k$ must be a collider for the path to be d-connected. On the path, let us denote the $X_p$ as the observed variable closest to $X_k$ other than $X_i$. Thus, we must have $\{X_k \leftarrow \cdots X_p\}$ on the path. We further have the following cases:
                \begin{itemize}
                    \item The case where $p < k$. If $p < k$, we have $p < i$ as $p \neq i$. Since $\{X_k \leftarrow \cdots X_p\}$, $(X_k, X_p)$ is either a confounded pair or direct relation. We further discuss the following cases:
                    \begin{itemize}
                        \item If $(X_p, X_k)$ is a confounded pair, because $X_p$ cannot be caused by another confounder (assumption \ref{con:2}), the only case for $X_p$ to also be a collider is $\{X_k \leftarrow C \rightarrow X_p \leftarrow X_{p-1}$. Thus, $X_{p-1}$ must not be a collider on the path, and $X_i \indep X_j | \{\mathbf{X}_{\text{Pre}(j) \setminus i}\}$, which implies $(X_i, X_j)_\mathrm{d} \notin \mathbf{E}_1$.
                        \item If $(X_p, X_k)$ is a direct relation, $X_{p}$ must not be a collider. So we have $X_i \indep X_j | \mathbf{X}_{\text{Pre}(j) \setminus i}$, which implies $(X_i, X_j)_\mathrm{d} \notin \mathbf{E}_1$.
                    \end{itemize}
                
                    \item The case where $p > k$. In this case, $(X_k, X_p)$ can only be a confounded pair. Because of assumption \ref{con:2}, $X_k$ and $X_p$ cannot be caused by another confounder or higher-order direct relation. Therefore, there must be $\{X_i \rightarrow X_k\}$ and $k = i+1$ for the path to be d-connected. We further discuss the following cases:
                    \begin{itemize}
                        \item If $k< p < j$, because of assumption \ref{con:2}, $X_p$ cannot be caused by another confounder or higher-order direct relation. If $X_p$ is a collider, $X_{p-1}$ must not be a collider on the path as $\{X_{p-1} \rightarrow X_p\}$. Thus, we have $X_i \indep X_j | \mathbf{X}_{\text{Pre}(j) \setminus i}$, which implies $(X_i, X_j)_\mathrm{d} \notin \mathbf{E}_1$.
                        \item If $p = j$, then $X_i \notindependent X_j | \{\mathbf{X}_{\text{Pre}(j) \setminus i}\}$ because of $(X_{i+1}, X_{j})_\mathrm{c}$, which implies $(X_i, X_j)_\mathrm{d} \in \mathbf{E}_1$.
                        \item If $p > j$ ($X_p \notin \mathbf{X}_{\text{Pre}(j) \setminus i}$), the only case where the path between $X_i$ and $X_j$ is not d-separated by $X_p$ is the case where $X_p$ is the cause of a selection variable. However, the index $p$ has already been included in $R[i]$ according to the order of tests in the algorithm (lines \ref{l3}-\ref{l10}). According to assumption \ref{con:1}, $X_p$ cannot be caused by more than two observed variables via higher-order direct relations. Therefore, the only case for $X_i$ to be conditionally dependent of $X_j$ given $\{\mathbf{X}_{\text{Pre}(j) \setminus i}, X_{j+1}, X_k\}$ for all $k \in \mathcal{R}[i]$ is the structure that $\{X_i \rightarrow S \leftarrow X_p\}$ and $p = j+1$. However, this contradicts with assumption \ref{con:2} since $\{X_{i+1}, X_p\}$ is a confounded pair.
                        
                    \end{itemize}
                    
                \end{itemize}
                
            \item The case where $k > j$. If $k > j$ ($X_k \notin \mathbf{X}_{\text{Pre}(j) \setminus i}$), the only case where the path between $X_i$ and $X_j$ is not d-separated by $X_k$ is the case where $X_k$ is the cause of a selection variable. However, since we have $\{X_i \cdots \rightarrow X_k\}$, it is not possible for $(X_i, X_k)$ to be a selection pair. For any other selection pairs, in order to have $X_i \notindependent X_j | \{\mathbf{X}_{\text{Pre}(j) \setminus i}\}$, $(X_j, X_k)$ must be a selection pair. According to assumption \ref{con:1}, we have $k \neq j+1$. Therefore, the path is d-separated by either $k$, where $k \in \mathcal{R}[i]$, or $j+1$. Consequently, there must be the conditional independence $X_i \notindependent X_j | \{\mathbf{X}_{\text{Pre}(j) \setminus i}, X_{j+1}, X_k\}$ and thus $(X_i, X_j)_\mathrm{d} \notin \mathbf{E}_1$.
        \end{enumerate}

        \item $\{X_i \cdots \leftrightarrow X_k \cdots X_j\}$, where $k>i$. Based on this, we discuss the following cases:
        \begin{enumerate}[label=VII.\roman*:]
            \item The case where $i < k < j$. Since $\mathbf{X}_{\text{Pre}(j) \setminus i}$ is the condition set and $X_k \in \mathbf{X}_{\text{Pre}(j) \setminus i}$, $X_k$ must be a collider for the path to be d-connected. At the same time, $(X_i, X_k)$ must be a selection pair. We further discuss the following cases:
                \begin{itemize}
                    \item The case where $k = i+1$. According to assumption \ref{con:1}, $(X_i, X_k)$ cannot be a selection pair if $k = i+1$, which is a contradiction.
                    \item The case where $k \neq i+1$. On the path, let us denote the $X_p$ as the observed variable closest to $X_k$ other than $X_i$. Thus, we must have $\{X_k \leftarrow \cdots X_p\}$ on the path. We further have the following cases:
                     \begin{itemize}
                        \item The case where $p < k$. If $p < k$, we have $p < i$ as $p \neq i$. Since $\{X_k \leftarrow \cdots X_p\}$, $(X_p, X_k)$ is either a confounded pair or a direct relation. If $(X_p, X_k)$ is a confounded pair, because $X_p$ cannot be caused by another confounder (assumption \ref{con:2}), the only case for $X_p$ to also be a collider is $\{X_k \leftarrow C \rightarrow X_p \leftarrow X_{p-1}\}$. Thus, $X_{p-1}$ must not be a collider, and $X_i \indep X_j |\{\mathbf{X}_{\text{Pre}(j) \setminus i}\}$, which implies $(X_i, X_j)_\mathrm{d} \notin \mathbf{E}_1$; If $(X_p, X_k)$ is a direct relation, $X_{p}$ must not be a collider, and $X_i \indep X_j | \{\mathbf{X}_{\text{Pre}(j) \setminus i}\}$, which implies $(X_i, X_j)_\mathrm{d} \notin \mathbf{E}_1$.
                        \item The case where $p > k$. In this case, $(X_k, X_p)$ can only be a confounded pair. However, because $k \neq i+1$ and $\{X_i \cdots \leftrightarrow X_k \}$, $(X_i, X_k)$ is either a confounded pair or higher-order direct relation, both of which are impossible according to assumption \ref{con:2}. Thus, $(X_i, X_j)_\mathrm{d} \notin \mathbf{E}_1$.
                    \end{itemize}               
                \end{itemize}
            \item The case where $k > j$. If $k > j$ ($X_k \notin \mathbf{X}_{\text{Pre}(j) \setminus i}$), the only case where the path between $X_i$ and $X_j$ is not d-separated by $X_k$ is the case where $X_k$ is the cause of a selection variable. However, the index $k$ for any selection pair $\{X_{i}, X_k\}_\mathrm{s}$, where $k>j$, has already been included in $R[i]$ according to the order of tests in the algorithm (lines \ref{l3}-\ref{l10}). According to assumption \ref{con:1}, $X_k$ cannot be caused by more than two observed variables via higher-order direct relations. Therefore, the only case for $X_i$ to be conditionally dependent of $X_j$ given $\{\mathbf{X}_{\text{Pre}(j) \setminus i}, X_{j+1}, X_k\}$ for all $k \in \mathcal{R}[i]$ is the structure that $(X_i, X_{j+1})_{\{s,x\}}$. At the same time, we also have $X_i \notindependent X_j | \{\mathbf{X}_{\text{Pre}(j) \setminus i}\}$. As a result, we have $(X_i, X_j)_\mathrm{d} \in \mathbf{E}_1$ where $(X_i, X_{j+1})$ is both a selection pair and a direct relation.
        \end{enumerate}
    \end{enumerate}

    \textbf{Stage Two.} \ \ 
    Next, we prove that we can identify selection pairs, direct relations, and dependent pairs that are both selection pairs and direct relations in $\mathcal{E}_1$ after lines \ref{l11}-\ref{l29} in the algorithm. After this stage, there are still some spurious dependent pairs because of unidentified confounded pairs. Note that, according to the algorithm, for any $X_i$, the tests start from variables in $\mathcal{R}[i]$ with the largest index and continue in a descending order. In addition, $X_k$ for $k$ after lines \ref{l15} and \ref{l16} is a variable where $(X_i, X_k)$ is not a pair in $\mathcal{E}_1$. 
    
    We discuss the following cases for all types in $\mathcal{E}_1$.
        \begin{enumerate}[label=\Roman*:]
            \item $(X_i, X_j)$ is a selection pair but not a direct relation. According to the algorithm, it will be identified as a selection pair but not a direct relation if and only if $X_i \indep X_k | \{\mathbf{X}_{\text{Pre}(k) \setminus i}, X_j, \mathbf{T}\}$ (the definition of $\mathbf{T}$ is in line \ref{l17} in the algorithm).
            

            We first show that all these selection pairs will be identified. If $(X_i, X_j)$ is a selection pair but not a direct relation, according to assumption \ref{con:2}, if $(X_i, X_j)$ is a selection pair, it cannot be a confounded pair. Thus, $X_j$ must be a non-collider on the path, which may lead to the d-separation and thus the conditional independence. Thus, we need to show that there exists no other path that leads to conditional dependence. We prove it by contradiction. 
            
            Suppose that there exist a path other than $\{X_i \rightarrow S \leftarrow X_j \cdots X_k\}$ that makes $X_i \notindependent X_k | \{\mathbf{X}_{\text{Pre}(k) \setminus i}, X_j, \mathbf{T}\}$. We discuss all potential cases for that path:

            \begin{enumerate}[label=I.\roman*:]
                \item $\{X_i \cdots \leftarrow X_p \cdots X_k\}$, where $p<i$ and $X_p$ denotes the observed variable that is the closest to $X_i$ on the path. Since $X_p \in \mathbf{X}_{\text{Pre}(k) \setminus i}$, this path is d-connected if and only if $X_p$ is a collider on the path. However, $X_p$ cannot be a collider because of $\{X_i \leftarrow X_p\}$. Thus, this case is impossible.
                \item $\{X_i \cdots \rightarrow X_p \cdots X_k\}$, where $p<i$ and $X_p$ denotes the observed variable that is the closest to $X_i$ on the path. Because of assumption \ref{assum:t2a1}, $\{X_i \rightarrow X_p\}$ is impossible. Thus, there must be a confounded pair $\{X_p, X_i\}_\mathrm{c}$. Because of assumption \ref{con:2}, $X_p$ cannot be caused by another confounder or observed variable via higher-order direct relation. Therefore, $X_p$ is not a collider and this case is impossible
                \item $\{X_i \cdots \leftrightarrow X_p \cdots X_k\}$, where $p<i$ and $X_p$ denotes the observed variable that is the closest to $X_i$ on the path. Because of assumption \ref{assum:t2a1}, $\{X_p \rightarrow X_i\}$ is impossible. Thus, there must be a confounded pair $\{X_p, X_i\}_\mathrm{c}$. Because of assumption \ref{con:2}, $(X_p, X_i)$ cannot be a selection pair or direct relation. Thus, it is impossible to have $\{X_i \cdots \leftrightarrow X_p\}$ on the path. Consequently, this case is impossible.
                
           
                \item $\{X_i \cdots \leftarrow X_p \cdots X_k\}$, where $i < p < k$ and $X_p$ denotes the observed variable that is the closest to $X_i$ on the path. Because of assumption \ref{assum:t2a1}, $\{X_i \leftarrow X_p\}$ is impossible. Thus, there must be a selection pair $(X_i, X_p)_\mathrm{s}$, and $X_p$ must not be a collider. Since $\{\mathbf{X}_{\text{Pre}(k) \setminus i}, X_j, \mathbf{T}\}$ is the condition set and $X_p \in \mathbf{X}_{\text{Pre}(k) \setminus i}$, we have $X_i \indep X_k | \{\mathbf{X}_{\text{Pre}(k) \setminus i}, X_j, \mathbf{T}\}$ ($X_p \in \mathbf{X}_{\text{Pre}(k) \setminus i}$) and thus this case is impossible.

                \item $\{X_i \cdots \rightarrow X_p \cdots X_k\}$, where $i < p < k$. Since $\mathbf{X}_{\text{Pre}(k) \setminus i}$ is in the condition set and $X_p \in \mathbf{X}_{\text{Pre}(k) \setminus i}$, $X_p$ must be a collider for the path to be d-connected. On the path, let us denote the $X_q$ as the observed variable closest to $X_p$ other than $X_i$. Thus, we must have $\{X_p \leftarrow \cdots X_q\}$ on the path. We further have the following cases:
                    \begin{itemize}
                    \item The case where $q < p$. If $q < p$, we have $q < i$ as $q \neq i$. Since $\{X_p \leftarrow \cdots X_q\}$, $(X_q, X_p)$ is either a confounded pair or direct relation. We further discuss the following cases:
                    \begin{itemize}
                        \item If $(X_q, X_p)$ is a confounded pair, because $X_q$ cannot be caused by another confounder (assumption \ref{con:2}), the only case for $X_q$ to also be a collider is $\{X_p \leftarrow C \rightarrow X_q \leftarrow X_{q-1}$. Thus, $X_{q-1}$ must not be a collider, and $X_i \indep X_k | \{\mathbf{X}_{\text{Pre}(k) \setminus i}, X_j, \mathbf{T}\}$ ($X_{q-1} \in \mathbf{X}_{\text{Pre}(k) \setminus i}$) and thus this case is impossible.
                        \item If $(X_q, X_p)$ is a direct relation, $X_{q}$ must not be a collider, and $X_i \indep X_k | \{\mathbf{X}_{\text{Pre}(k) \setminus i}, X_j, \mathbf{T}\}$ ($X_q \in \mathbf{X}_{\text{Pre}(k) \setminus i}$) and thus this case is impossible.
                    \end{itemize}
                
                    \item The case where $q > p$. In this case, $(X_p, X_q)$ can only be a confounded pair. Because of assumption \ref{con:2}, $X_p$ and $X_q$ cannot be caused by another confounder or higher-order direct relation. Therefore, there must be $\{X_i \rightarrow X_p\}$ and $p = i+1$ for the path to be d-connected. We further discuss the following cases:
                    \begin{itemize}
                        \item If $p < q < k$, because of assumption \ref{con:2}, $X_q$ cannot be caused by another confounder or higher-order direct relation. If $X_q$ is a collider, $X_{q-1}$ must not be a collider on the path as $\{X_{q-1} \rightarrow X_q\}$, and $X_i \indep X_k | \{\mathbf{X}_{\text{Pre}(k) \setminus i}, X_j, \mathbf{T}\}$ ($X_{q-1} \in \mathbf{X}_{\text{Pre}(k) \setminus i}$) and thus this case is impossible.
                        \item If $q = k$, then we have $(X_{i+1}, X_{k})_\mathrm{c}$, which is a contradiction to the definition of $k$ ($k \in \mathcal{R}[i]$ according to the algorithm)
                        \item If $k < q < j$, because of assumption \ref{con:2}, $(X_{p-1}, X_{q+1})_{d\neq c}$ should not exist. However, since $p-1 = i$, this contradicts the definition of $k$ ($k \in \mathcal{R}[i]$ according to the algorithm) or the existing selection pair $(X_i, X_j)_\mathrm{s}$.
                        \item If $q = j$, this contradicts assumption \ref{con:2} that there exists no $(X_i, X_j)_\mathrm{s}$ ($i = p-1$).
                        \item If $q > j$, if not conditioning on $X_q$, the only case where the path between $X_i$ and $X_k$ is not d-separated by $X_q$ is the case where $X_q$ is the cause of a selection variable. However, the index $q$ has already been included in $R[i]$ according to the order of tests in the algorithm (Lines \ref{l12} and \ref{l13}). According to assumption \ref{con:1}, $X_q$ cannot be caused by more than two observed variables via higher-order direct relations. As a result, we have $X_i \indep X_k | \{\mathbf{X}_{\text{Pre}(k) \setminus i}, X_j, \mathbf{T}\}$ and thus this case is impossible.
                    \end{itemize}   
                \end{itemize}

                \item $\{X_i \cdots \leftrightarrow X_p \cdots X_k\}$, where $i < p < k$. Since $\mathbf{X}_{\text{Pre}(k) \setminus i}$ is in the condition set and $X_p \in \mathbf{X}_{\text{Pre}(k) \setminus i}$, $X_p$ must be a collider for the path to be d-connected. At the same time, $(X_i, X_p)$ must be a selection pair. We further discuss the following cases:
                \begin{itemize}
                    \item The case where $p = i+1$. According to assumption \ref{con:1}, $(X_i, X_p)$ cannot be a selection pair if $p = i+1$, which is a contradiction.
                    \item The case where $p \neq i+1$. On the path, let us denote the $X_q$ as the observed variable closest to $X_p$ other than $X_i$. Thus, we must have $\{X_p \leftarrow \cdots X_q\}$ on the path. We further have the following cases:
                     \begin{itemize}
                        \item The case where $q < p$. If $q < p$, we have $q < i$ as $q \neq i$. Since $\{X_p \leftarrow \cdots X_q\}$, $(X_q, X_p)$ is either a confounded pair or direct relation. If $(X_q, X_p)$ is a confounded pair, because $X_q$ cannot be caused by another confounder (assumption \ref{con:2}), the only case for $X_q$ to also be a collider is $\{X_p \leftarrow C \rightarrow X_q \leftarrow X_{q-1}\}$. Thus, $X_{q-1}$ must not be a collider, and the path is not d-connected; If $(X_q, X_p)$ is a direct relation, $X_{q}$ must not be a collider, and the path is, again, not d-connected. Thus, this case is impossible.
                        \item The case where $q > p$. In this case, $(X_p, X_q)$ can only be a confounded pair. However, because $p \neq i+1$ and $\{X_i \cdots \leftrightarrow X_p \}$, $(X_i, X_p)$ is either a confounded pair or higher-order direct relation, both of which are impossible according to assumption \ref{con:2}. Thus, this case is impossible.
                    \end{itemize}               
                \end{itemize}
                
                \item $\{X_i \cdots \leftarrow X_p\}$, where $k=p$. This is impossible because of the algorithm, specifically, how $k$ is selected (lines \ref{l15} and \ref{l16}).

                \item $\{X_i \cdots \rightarrow X_p\}$, where $k=p$. This is also impossible because of the algorithm.

                \item $\{X_i \cdots \leftrightarrow X_p\}$, where $k=p$. Similarly, this is impossible according to the algorithm.

                \item $\{X_i \cdots \leftarrow X_p \cdots X_k\}$, $k < p < j$. Because of assumption \ref{assum:t2a1}, $\{X_i \leftarrow X_p\}$ is impossible. Thus, there must be a selection pair $(X_i, X_p)_\mathrm{s}$. 
                Together with the search procedure (lines 20-25), it is thus impossible for the selected $k$ to be smaller than $p$, which is a contradiction.

                \item $\{X_i \cdots \rightarrow X_p \cdots X_k\}$, $k < p < j$. Because of $\{X_i \cdots \rightarrow X_p\}$, $(X_i, X_p)$ is either a higher-order direct relation or a confounded pair. Since $X_p$ is not in the condition set, $(X_i, X_p)$ must also be a selection pair for the path to be d-connected. However, according to assumption \ref{con:1}, there exits no $(X_i, X_{p+1})_\mathrm{d}$. This contradicts the search procedure to select $k$ (lines \ref{l15} and \ref{l16}), and thus this case is impossible.

                \item $\{X_i \cdots \leftrightarrow X_p \cdots X_k\}$, $k < p < j$. Because of assumption \ref{assum:t2a1}, $\{X_i \leftarrow X_p\}$ is impossible. Thus, there must be a selection pair $(X_i, X_p)_\mathrm{s}$. Similarly, it is thus impossible for the selected $k$ to be smaller than $p$, which is a contradiction.

                \item $\{X_i \cdots \leftarrow X_p \cdots X_k\}$, $j < p$. Because of assumption \ref{assum:t2a1}, $\{X_i \leftarrow X_p\}$ is impossible. Thus, there must be a selection pair $(X_i, X_p)_\mathrm{s}$. Moreover, according to assumption \ref{con:1}, $X_p$ cannot be the effect of multiple higher-order directions. Thus, $X_p$ must not be a collider on the path, so we can d-separate the path by conditioning on it. Since $X_p$ is already in $\mathcal{T}$ according to the algorithm, we have $X_i \indep X_k | \{\mathbf{X}_{\text{Pre}(k) \setminus i}, X_j, \mathbf{T}\}$ and thus this case is impossible.

                \item $\{X_i \cdots \rightarrow X_p \cdots X_k\}$, $j < p$. In this case, $(X_i, X_p)$ is either a higher-order direct relation or a confounded pair. We further discuss the following case:
                \begin{itemize}
                    \item $X_p$ is not a cause of a selection variable. In this case, $X_p$ must be a collider on the path, and the path is d-separated when not conditioning on $X_p$. So we have $X_i \indep X_k | \{\mathbf{X}_{\text{Pre}(k) \setminus i}, X_j, \mathbf{T}\}$, which indicates that this case is impossible.
                    \item $X_p$ is a cause of a selection variable. According to assumption \ref{con:1}, it cannot be the effect of multiple higher-order directions. Therefore, if $X_p$ is the cause of another selection variable together with any observed variable with index $u \in [k, j)$, the path is always d-separated by conditioning on both $X_p$ and $X_j$. At the same time, if $X_p$ is the cause of another selection variable together with any observed variable with an index larger than $j$, that variable will also be in the condition set ($\mathbf{T}$) according to the algorithm (line \ref{l17}). Recursively, all these variables will be in the condition set, and the path will be d-separated, which indicates that this case is impossible.               
                \end{itemize}

                \item $\{X_i \cdots \leftrightarrow X_p \cdots X_k\}$, $j < p$. Because of assumption \ref{assum:t2a1}, $\{X_i \leftarrow X_p\}$ is impossible. Thus, there must be a selection pair $(X_i, X_p)_\mathrm{s}$. At the same time, since $(X_i, X_p)$ cannot be both a selection pair and confounded pair, we have $(X_i, X_p)_{\mathrm{d} \neq \mathrm{c}}$, i.e., it is both a selection pair and a direct relation. According to assumption \ref{con:1}, it cannot be the effect of multiple higher-order directions. Therefore, if $X_p$ is the cause of another selection variable together with any observed variable with index $u \in [k, j)$, the path is always d-separated by conditioning on both $X_p$ and $X_j$. At the same time, if $X_p$ is the cause of another selection variable together with any observed variable with an index larger than $j$, that variable will also be in the condition set ($\mathbf{T}$) according to the algorithm (line \ref{l17}). Recursively, all these variables will be in the condition set, and the path will be d-separated, which indicates that this case is impossible.               
                
            \end{enumerate}
            Therefore, there does not exist any path other than $\{X_i \rightarrow S \leftarrow X_j \cdots X_k\}$ that makes $X_i \notindependent X_k | \{\mathbf{X}_{\text{Pre}(k) \setminus i}, X_j, \mathbf{T}\}$. So all selection pairs $(X_i, X_j)_\mathrm{s}$ are identified.
            
            We then discuss the other cases in $\mathcal{E}_1$ and show that it is impossible for any other types of variable pairs in $\mathcal{E}_1$ to have the test results that $X_i \indep X_k | \{\mathbf{X}_{\text{Pre}(k) \setminus i}, X_j, \mathbf{T}\}$.

            \begin{itemize}
                \item $(X_i, X_j)$ is a direct relation but not a selection pair. In this case, since $X_j$ is collider on the path $\{X_i \cdots \rightarrow X_j\}$, for any $v \in \mathbf{T}$, there is always $X_i \notindependent X_k | \{\mathbf{X}_{\text{Pre}(k) \setminus i}, X_j, \mathbf{T}\}$. Thus, this case is impossible.
                \item $(X_i, X_j)$ is both a selection pair and a direct relation. In this case, since $X_j$ is still a collider on the path $\{X_i \cdots \rightarrow X_j\}$, for any $v \in \mathbf{T}$, there is always $X_i \notindependent X_k | \{\mathbf{X}_{\text{Pre}(k) \setminus i}, X_j, \mathbf{T}\}$, which indicates that this case is impossible.
                \item $(X_i, X_j)$ is a confounded pair. In this case, $X_j$ is a collider on the path (all confounders are latent and thus will never be conditioned). Similar to the previous two cases, for any $v \in \mathbf{T}$, there is always $X_i \notindependent X_k | \{\mathbf{X}_{\text{Pre}(k) \setminus i}, X_j, \mathbf{T}\}$, and this case is impossible.
                \item $(X_i, X_j)$ is a spurious dependent pair that there exists a confounded pair $(X_{i+1}, X_j)_\mathrm{c}$ in the ground-truth causal graph. In this case, $X_{i+1}$ is a collider on the path, and thus conditioning on it does not d-separate the path. At the same time, $X_j$ is also a collider. Since both $X_{i+1}$ and $X_j$ are in the condition set, there is always $X_i \notindependent X_k | \{\mathbf{X}_{\text{Pre}(k) \setminus i}, X_j, \mathbf{T}\}$, which indicates that this case is impossible.
                \item $(X_i, X_j)$ is a spurious dependent pair that there exists $(X_i, X_{j+1})$ that is both a selection pair and a direct relation in the ground-truth causal graph. In this case, $(X_i, X_j)$ is not a dependent pair according to assumption \ref{con:1}, and will be removed by the algorithm (line \ref{l24}). Therefore, this case is impossible.
            \end{itemize}
       
            \item $(X_i, X_j)$ is a direct relation but not a selection pair. We first show that all these variable pairs will be identified. In this case, $X_j$ is a collider on the path. Thus, we have $X_i \notindependent X_k | \{\mathbf{X}_{\text{Pre}(k) \setminus i}, X_j, \mathbf{T}\}$. However, when $X_j$ is not in the condition set, we may have the d-separation and thus the conditional independence $X_i \indep X_k | \{\mathbf{X}_{\text{Pre}(k) \setminus i}, X_{j+1}, \mathbf{T}\}$ (the definition of $\mathbf{T}$ is in line \ref{l17} in the algorithm). Thus, we need to show that there exists no other path that leads to conditional dependence. We prove it by contradiction. 

            Suppose that there exist a path other than $\{X_i \rightarrow  X_j \cdots X_k\}$ that makes  $X_i \notindependent X_k | \{\mathbf{X}_{\text{Pre}(k) \setminus i}, X_{j+1}, \mathbf{T}\}$. Denote $X_p$ as the observed variable that is the closest to $X_i$ on the path, we discuss all potential cases for that path:

            \begin{enumerate}[label=II.\roman*:]
                \item The discussions for the following cases are similar to those in the cases where $(X_i, X_j)$ is a selection pair but not a direct relation:
                    \begin{itemize}
                        \item $\{X_i \cdots \leftarrow X_p \cdots X_k\}$, where $p<i$, $i < p < k$, or $k < p < j$.
                        \item $\{X_i \cdots \rightarrow X_p \cdots X_k\}$, where $p<i$ or $k < p < j$ .
                        \item $\{X_i \cdots \leftrightarrow X_p \cdots X_k\}$, where $p<i$, $i < p < k$, or $k < p < j$.
                        \item $\{X_i \cdot X_p\}$, where $p = k$. 
                    \end{itemize}
                    Thus, we mainly focus on the other cases as follows.

                \item $\{X_i \cdots \rightarrow X_p \cdots X_k\}$, where $i < p < k$. Since $\mathbf{X}_{\text{Pre}(k) \setminus i}$ is in the condition set and $X_p \in \mathbf{X}_{\text{Pre}(k) \setminus i}$, $X_p$ must be a collider for the path to be d-connected. On the path, let us denote the $X_q$ as the observed variable closest to $X_p$ other than $X_i$. Thus, we must have $\{X_p \leftarrow \cdots X_q\}$ on the path. We further have the following cases:
                    \begin{itemize}
                    \item The case where $q < p$. If $q < p$, we have $q < i$ as $q \neq i$. Since $\{X_p \leftarrow \cdots X_q\}$, $(X_p, X_q)$ is either a confounded pair or direct relation. We further discuss the following cases:
                    \begin{itemize}
                        \item If $(X_p, X_q)$ is a confounded pair, because $X_q$ cannot be caused by another confounder (assumption \ref{con:2}), the only case for $X_q$ to also be a collider is $\{X_p \leftarrow C \rightarrow X_q \leftarrow X_{q-1}$. Thus, $X_{q-1}$ must not be a collider, and $X_i \indep X_k | \{\mathbf{X}_{\text{Pre}(k) \setminus i}, \mathbf{T}\}$ ($X_{q-1} \in \mathbf{X}_{\text{Pre}(k) \setminus i}$) and thus this case is impossible.
                        \item If $(X_p, X_q)$ is a direct relation, $X_{q}$ must not be a collider, and $X_i \indep X_k | \{\mathbf{X}_{\text{Pre}(k) \setminus i}, \mathbf{T}\}$ ($X_{q-1} \in \mathbf{X}_{\text{Pre}(k) \setminus i}$) and thus this case is impossible.
                    \end{itemize}
                
                    \item The case where $q > p$. In this case, $(X_p, X_q)$ can only be a confounded pair. Because of assumption \ref{con:2}, $X_p$ and $X_q$ cannot be caused by another confounder or higher-order direct relation. Therefore, there must be $\{X_i \rightarrow X_p\}$ and $p = i+1$ for the path to be d-connected. We further discuss the following cases:
                    \begin{itemize}
                        \item If $p < q < k$, because of assumption \ref{con:2}, $X_q$ cannot be caused by another confounder or higher-order direct relation. If $X_q$ is a collider, $X_{q-1}$ must not be a collider on the path as $\{X_{q-1} \rightarrow X_q\}$, and $X_i \indep X_k | \{\mathbf{X}_{\text{Pre}(k) \setminus i}, \mathbf{T}\}$ ($X_{q-1} \in \mathbf{X}_{\text{Pre}(k) \setminus i}$) and thus this case is impossible.
                        \item If $q = k$, then we have $(X_{i+1}, X_{k})_\mathrm{c}$, which is a contradiction to the definition of $k$ ($k \in \mathcal{R}[i]$ according to the algorithm)
                        \item If $k < q < j$, because of assumption \ref{con:2}, $(X_{p-1}, X_{q+1})_{d\neq c}$ should not exist. However, since $p-1 = i$, this contradicts the definition of $k$ ($k \in \mathcal{R}[i]$ according to the algorithm) or the existing selection pair $(X_i, X_j)_\mathrm{s}$.
                        \item If $q = j$, this contradicts assumption \ref{con:2} that there exists no $(X_i, X_j)_\mathrm{s}$ ($i = p-1$).
                        \item If $q > j$, if not conditioning on $X_q$, the only case where the path between $X_i$ and $X_k$ is not d-separated by $X_q$ is the case where $X_q$ is the cause of a selection variable. However, the index $q$ has already been included in $R[i]$ according to the order of tests in the algorithm (lines \ref{l12} and \ref{l13}). According to assumption \ref{con:1}, $X_q$ cannot be caused by more than two observed variables via higher-order direct relations. At the same time, it is impossible to have $q=j+1$, since there should not exist $(X_i, X_j)_\mathrm{d}$ if $(X_{i+1}, X_{j+1})$ is a confounded pair according to assumption \ref{assum:t2a2}. Therefore, the path is d-separated by conditioning on $\{\mathbf{X}_{\text{Pre}(k) \setminus i}, X_j, \mathbf{T}\}$. As a result, we have $X_i \indep X_k | \{\mathbf{X}_{\text{Pre}(k) \setminus i}, \mathbf{T}\}$ and thus this case is impossible.
                    \end{itemize}   
                \end{itemize}

                \item $\{X_i \cdots \leftarrow X_p \cdots X_k\}$, $j < p$. Because of assumption \ref{con:1}, $\{X_i \leftarrow X_p\}$ is impossible. Thus, there must be a selection pair $(X_i, X_p)_\mathrm{s}$. Moreover, according to assumption \ref{con:1}, $X_p$ cannot be the effect of multiple higher-order directions. Thus, $X_p$ must not be a collider on the path, so we can d-separate the path by conditioning on it. Since $X_p$ is already in $\mathbf{T}$ according to the algorithm, we have $X_i \indep X_k | \{\mathbf{X}_{\text{Pre}(k) \setminus i}, X_{j+1}, \mathbf{T}\}$ and thus this case is impossible.

                \item $\{X_i \cdots \rightarrow X_p \cdots X_k\}$, $j < p$. In this case, $(X_i, X_p)$ is either a higher-order direct relation or a confounded pair. We further discuss the following case:
                \begin{itemize}
                    \item $X_p$ is not a cause of a selection variable. In this case, $X_p$ must be a collider on the path, and the path is d-separated when not conditioning on $X_p$. So we have $X_i \indep X_k | \{\mathbf{X}_{\text{Pre}(k) \setminus i}, X_{j+1}, \mathbf{T}\}$, which indicates that this case is impossible.
                    \item $X_p$ is a cause of a selection variable. According to assumption \ref{con:1}, it cannot be the effect of multiple higher-order directions. Meanwhile, the index $p$ has already been included in $R[i]$ according to the order of tests in the algorithm (lines \ref{l12} and \ref{l13}). At the same time, it is impossible for $p$ to be $j+1$ according to assumption \ref{con:1}. This is because if $X_{j+1}$ is a cause of a selection variable while $(X_i, X_{j+1})$ is either a higher-order direct relation or a confounded pair, it is impossible to have $(X_i, X_j)_\mathrm{d}$. Therefore, the path is always d-separated by conditioning on $\{\mathbf{X}_{\text{Pre}(k) \setminus i}, X_{j+1}, \mathbf{T}\}$ and thus this case is impossible.            
                \end{itemize}

                \item $\{X_i \cdots \leftrightarrow X_p \cdots X_k\}$, $j < p$. Because of assumption \ref{assum:t2a1}, $\{X_i \leftarrow X_p\}$ is impossible. Thus, there must be a selection pair $(X_i, X_p)_\mathrm{s}$. At the same time, since $(X_i, X_p)$ cannot be both a selection pair and confounded pair, we have $(X_i, X_p)_{\mathrm{d} \neq \mathrm{c}}$, i.e., it is both a selection pair and a direct relation. According to assumption \ref{con:1}, it cannot be the effect of multiple higher-order directions. Meanwhile, it is impossible for $p$ to be $j+1$ according to assumption \ref{con:1}. This is because if $X_{j+1}$ is a cause of a selection variable while $(X_i, X_{j+1})$ is either a higher-order direct relation or a confounded pair, it is impossible to have $(X_i, X_j)_\mathrm{d}$. Therefore, the path is always d-separated by conditioning on $\{\mathbf{X}_{\text{Pre}(k) \setminus i}, X_{j+1}, \mathbf{T}\}$ and thus this case is impossible.            
            \end{enumerate}


            
        We then discuss the other cases in $\mathcal{E}_1$ and show that, except for those spurious direct relations corresponding to unidentified confounded pairs, it is impossible for any other types of variable pairs in $\mathcal{E}_1$ to be identified as a direct relation.

        \begin{itemize}
            \item $(X_i, X_j)$ is a selection pair but not a directed relation. As already shown before, it will always be identified as a selection pair but not a direct relation. Thus, this case is impossible
            
            \item $(X_i, X_j)$ is both a selection pair and a direct relation. In this case, since $X_j$ is still a collider on the path $\{X_i \cdots \rightarrow X_k\}$, there is always $X_i \notindependent X_k | \{\mathbf{X}_{\text{Pre}(k) \setminus i}, X_j, \mathbf{T}\}$. At the same time, since $(X_i, X_j)$ is also a selection pair, not conditioning on $X_j$ will also open the path. Thus, we also have $X_i \notindependent X_k | \{\mathbf{X}_{\text{Pre}(k) \setminus i}, X_{j+1}, \mathbf{T}\}$, which implies that this variable pair will not just be identified as a direct relation according to the algorithm (lines \ref{l19}-\ref{l22}).
                            
            \item $(X_i, X_j)$ is a confounded pair. In this case, $X_j$ is always a collider on the path, and thus it is similar to the case where $(X_i, X_j)$ is a direct relation. According to the discussion above, it will also be identified as a direct relation, and thus there will be spurious direct relations corresponding to unidentified confounded pairs.
            
            \item $(X_i, X_j)$ is a spurious dependent pair that there exists a confounded pair $(X_{i+1}, X_j)_\mathrm{c}$ in the ground-truth causal graph. In this case, $X_{i+1}$ is a collider on the path, and thus conditioning on it does not d-separate the path. At the same time, $X_j$ is also a collider. Since $X_{i+1}$ is always in the condition set, $X_j$ is always a collider on the path. Thus, similar to the previous discussion, $(X_i, X_j)$ will be identified as a spurious direct relation corresponding to the unidentified confounded pair.

            \item $(X_i, X_j)$ is a spurious dependent pair that there exists $(X_i, X_{j+1})$ that is both a selection pair and a direct relation in the ground-truth causal graph. In this case, $(X_i, X_j)$ is not a dependent pair according to assumption \ref{con:1}, and will be removed by the algorithm (line \ref{l24}). Therefore, this case is impossible.
        \end{itemize}

        \item $(X_i, X_j)$ is both a selection pair and a direct relation. In this case, since $X_j$ is still a collider on the path $\{X_i \cdots \rightarrow X_k\}$, there is always $X_i \notindependent X_k | \{\mathbf{X}_{\text{Pre}(k) \setminus i}, X_j, \mathbf{T}\}$. At the same time, since $(X_i, X_j)$ is also a selection pair, not conditioning on $X_j$ will also open the path. Thus, we also have $X_i \notindependent X_k | \{\mathbf{X}_{\text{Pre}(k) \setminus i}, X_{j+1}, \mathbf{T}\}$. We then discuss the other cases in $\mathcal{E}_1$ and show that, except for those variable pairs that are both a selection pair and a direct relation, it is impossible for any other types of variable pairs in $\mathcal{E}_1$ to be identified as it.

        \begin{itemize}
            \item $(X_i, X_j)$ is a selection pair but not a directed relation. As already shown before, it will always be identified as a selection pair but not a direct relation. Thus, this case is impossible
            
            \item $(X_i, X_j)$ is a direct relation but not a selection pair. In this case, according to the discussion before, there is always the conditional independence $X_i \notindependent X_k | \{\mathbf{X}_{\text{Pre}(k) \setminus i}, X_{j+1}, \mathbf{T}\}$. Thus, it is impossible for this case to be identified as a variable pair i.e. both a selection pair and a direct relation.
                            
            \item $(X_i, X_j)$ is a confounded pair. In this case, $X_j$ is always a collider on the path, and thus it is similar to the case where $(X_i, X_j)$ is a direct relation. According to the discussion above, there is always the conditional independence $X_i \notindependent X_k | \{\mathbf{X}_{\text{Pre}(k) \setminus i}, X_{j+1}, \mathbf{T}\}$. Thus, this case is impossible
            
            \item $(X_i, X_j)$ is a spurious dependent pair that there exists a confounded pair $(X_{i+1}, X_j)_\mathrm{c}$ in the ground-truth causal graph. In this case, $X_{i+1}$ is a collider on the path, and thus conditioning on it does not d-separate the path. At the same time, $X_j$ is also a collider. Since $X_{i+1}$ is always in the condition set, $X_j$ is always a collider on the path. Thus, similar to the previous discussion, $(X_i, X_j)$ will be identified as a spurious direct relation corresponding to the unidentified confounded pair.

            \item $(X_i, X_j)$ is a spurious dependent pair that there exists $(X_i, X_{j+1})$ that is both a selection pair and a direct relation in the ground-truth causal graph. In this case, $(X_i, X_j)$ is not a dependent pair according to assumption \ref{con:1}, and will be removed by the algorithm (line \ref{l24}). Therefore, this case is impossible.
        \end{itemize}

        \item $(X_i, X_j)$ is a confounded pair. In this case, $X_j$ is always a collider on the path, and $X_i$ is always a non-collider on the path i.e in the condition set for all tests in the algorithm, Thus, it is similar to the case where $(X_i, X_j)$ is a direct relation. According to the discussion above, it will also be identified as a direct relation, and thus there will be spurious direct relations corresponding to unidentified confounded pairs.
        
        \item $(X_i, X_j)$ is a spurious dependent pair that there exists a confounded pair $(X_{i+1}, X_j)_\mathrm{c}$ in the ground-truth causal graph. In this case, $X_{i+1}$ is a collider on the path, and thus conditioning on it does not d-separate the path. At the same time, $X_j$ is also a collider. Since $X_{i+1}$ is always in the condition set, $X_j$ is always a collider on the path. Thus, similar to the previous discussion, $(X_i, X_j)$ will be identified as a spurious direct relation corresponding to the unidentified confounded pair.

        \item $(X_i, X_j)$ is a spurious dependent pair that there exists $(X_i, X_{j+1})$ that is both a selection pair and a direct relation in the ground-truth causal graph. In this case, $(X_i, X_j)$ is not a dependent pair according to assumption \ref{con:1}, and will be removed by the algorithm (line \ref{l24}).
        \end{enumerate}

    Therefore, after Stage Two, we can identify selection pairs, dependent pairs that are both selection pairs and direct relations, and direct relations, while maintaining some spurious direct relations corresponding to unidentified confounded pairs.
    
    \textbf{Stage Three.} \ \ 
    Finally, we need to identify confounded pairs and remove the spurious direct relations caused by the previously unidentified confounded pairs. After the second stage, the only cases where $(X_i, X_j)$ can be a confounded pair is where we have identified both $(X_{i-1}, X_j)$ and $(X_i, X_j)$ as direct relations. Based on the test order in the algorithm (lines \ref{l31} and \ref{l32}), if $(X_i, X_j)$ is a spurious dependent pair caused by the confounded pair $(X_i, X_{j+1}$), it would be removed before the test for itself. Thus, we discuss the following potential cases:
    \begin{enumerate}[label=\Roman*:]
        \item $(X_{i-1}, X_j)$ is a direct relation in the ground-truth, and $(X_i, X_j)$ is a confounded pair in the ground-truth. This case is impossible according to assumption \ref{con:2}.
        \item $(X_{i-1}, X_j)$ is a confounded pair in the ground-truth, and $(X_i, X_j)$ is a direct relation in the ground-truth. This case is impossible according to assumption \ref{con:2}.
        \item Both $(X_{i-1}, X_j)$ and $(X_i, X_j)$ are direct relations in the ground-truth. In this case, we always have $X_{i-1}$ dependent of $X_j$ given any condition set. Thus, these two direct relations will be identified by the algorithm, 
        \item Both $(X_{i-1}, X_j)$ and $(X_i, X_j)$ are confounded pairs in the ground-truth. This case is impossible according to assumption \ref{con:2}.
        \item $(X_{i-1}, X_j)$ is not a dependent pair in the ground-truth, and $(X_i, X_j)$ is a confounded pair in the ground-truth. In this case, $X_i$ is a collider on the path between $X_{i-1}$ and $X_j$. So if there are no other d-connected paths between them, we will have $X_{i-1} \indep X_j | \{\mathbf{X}_{\text{Pre}(j) \setminus \{i-1,i\}}, X_{j+1}, \mathbf{T}\}$. 
        
        Suppose there exists another path other than $\{X_{i-1} \rightarrow X_i \leftarrow C \rightarrow X_j\}$ that makes $X_{i-1} \notindependent X_j | \{\mathbf{X}_{\text{Pre}(j) \setminus \{i-1,i\}}\}$ or $X_{i-1} \notindependent X_j | \{\mathbf{X}_{\text{Pre}(j) \setminus \{i-1,i\}}, \mathbf{T}\}$. Denote $X_p$ as the observed variable that is the closest to $X_{i-1}$ on the path, we discuss all potential cases for that path:
            \begin{enumerate}[label=V.\roman*:]
                \item $\{X_{i-1} \cdots \leftarrow X_p \cdots X_j\}$, where $p<i-1$. In this case, it is impossible for $X_p$ to be a collider on the path. Since $X_p \in \mathbf{X}_{\text{Pre}(j) \setminus \{i-1,i\}}$, there is $X_{i-1} \indep X_j | \{\mathbf{X}_{\text{Pre}(j) \setminus \{i-1,i\}}, X_{j+1}, \mathbf{T}\}$. Thus, this case is impossible.
                
                \item $\{X_{i-1} \cdots \rightarrow X_p \cdots X_j\}$, where $p<i-1$. In this case, according to assumption \ref{assum:t2a1}, it is impossible to have $\{X_{i-1} \rightarrow X_p\}$. Thus, there must be a confounded pair $(X_p, X_{i-1})_\mathrm{c}$. According to assumption \ref{con:2}, $X_p$ cannot be caused by another latent confounder or observed variable via higher-order direct relation. Therefore, if this path is d-connected, we must have $X_{p-1} \rightarrow X_p$ on the path. However, if this is the case, $X_{p-1}$ cannot be the collider on the path. Because both $X_{p}$ and $X_{p-1}$ are in the condition set, we have $X_{i-1} \indep X_j | \{\mathbf{X}_{\text{Pre}(j) \setminus \{i-1,i\}}, X_{j+1}, \mathbf{T}\}$. Thus, this case is impossible.

                \item $\{X_{i-1} \cdots \leftrightarrow X_p \cdots X_j\}$, where $p<i-1$. In this case, according to assumption \ref{assum:t2a1}, it is impossible to have $\{X_{i-1} \rightarrow X_p\}$. Thus, there must be a confounded pair $(X_p, X_{i-1})_\mathrm{c}$. Then, because of assumption \ref{con:2}, $(X_p, X_{i-1})$ cannot be a selection pair or direct relation. Thus, it is impossible to have $\{X_{i-1} \cdots \leftrightarrow X_p\}$ on the path. Consequently, this case is impossible.

                \item $\{X_{i-1} \cdots \leftarrow X_p \cdots X_j\}$, where $i<p<j$. Because of assumption \ref{assum:t2a1}, $\{X_{i-1} \leftarrow X_p\}$ is impossible. Thus, there must be a selection pair $(X_{i-1}, X_p)_\mathrm{s}$, and $X_p$ must not be a collider. Since $\{\mathbf{X}_{\text{Pre}(j) \setminus \{i-1,i\}}\}$ is the condition set and $X_p \in \mathbf{X}_{\text{Pre}(k) \setminus \{i-1,i\}}$, we have $X_{i-1} \indep X_j | \{\mathbf{X}_{\text{Pre}(j) \setminus \{i-1,i\}}, X_{j+1}, \mathbf{T}\}$ and thus this case is impossible.
                
                \item $\{X_{i-1} \cdots \rightarrow X_p \cdots X_j\}$, where $i<p<j$. Since $\mathbf{X}_{\text{Pre}(j) \setminus \{i-1,i\}}$ is in the condition set and $X_p \in \mathbf{X}_{\text{Pre}(j) \setminus \{i-1,i\}}$, $X_p$ must be a collider on the path for the path to be d-connected. In addition, because of assumption \ref{con:2}, there must be $p \neq i+1$. On the path, let us denote the $X_q$ as the observed variable closest to $X_p$ other than $X_i$. Thus, we must have $\{X_p \leftarrow \cdots X_q\}$ on the path. We further have the following cases:
                    \begin{itemize}
                    \item The case where $q < p$. Since $\{X_p \leftarrow \cdots X_q\}$, $(X_q, X_p)$ is either a confounded pair or direct relation. Since $X_i$ is already involved in a confounded pair, we have $q \neq i$ according to assumption \ref{con:2}. Since there is also $X_q \neq i-1$, we have $q < i-1$. We further discuss the following cases:
                    \begin{itemize}
                        \item If $(X_q, X_p)$ is a confounded pair, because $X_q$ cannot be caused by another confounder (assumption \ref{con:2}), the only case for $X_q$ to also be a collider is $\{X_p \leftarrow C \rightarrow X_q \leftarrow X_{q-1}$. Thus, $X_{q-1}$ must not be a collider on the path, and $X_{i-1} \indep X_j | \{\mathbf{X}_{\text{Pre}(j) \setminus \{i-1,i\}}, X_{j+1}, \mathbf{T}\}$ ($X_{q-1} \in \mathbf{X}_{\text{Pre}(j) \setminus \{i-1,i\}}$) and thus this case is impossible.
                        \item If $(X_q, X_p)$ is a direct relation, $X_{q}$ must not be a collider on the path, and $X_{i-1} \indep X_j | \{\mathbf{X}_{\text{Pre}(j) \setminus \{i-1,i\}}, X_{j+1}, \mathbf{T}\}$ ($X_{q} \in \mathbf{X}_{\text{Pre}(j) \setminus \{i-1,i\}}$) and thus this case is impossible.
                    \end{itemize}
                
                    \item The case where $q > p$. In this case, $(X_p, X_q)$ can only be a confounded pair. Because of assumption \ref{con:2}, $X_p$ and $X_q$ cannot be in another confounded pair or higher-order direct relation. Since $p > i+1$, it is impossible to have $(X_{i-1}, X_p)_\mathrm{d}$ or $(X_{i}, X_p)_\mathrm{d}$. Thus $X_p$ cannot be a collider on the path, which implies $X_{i-1} \indep X_j | \{\mathbf{X}_{\text{Pre}(j) \setminus \{i-1,i\}}, X_{j+1}, \mathbf{T}\}$ ($X_{p} \in \mathbf{X}_{\text{Pre}(j) \setminus \{i-1,i\}}$). As a result, this case is impossible.
                \end{itemize}

                \item $\{X_{i-1} \cdots \leftrightarrow X_p \cdots X_j\}$, where $i<p<j$. Since $\mathbf{X}_{\text{Pre}(j) \setminus \{i-1,i\}}$ is in the condition set and $X_p \in \mathbf{X}_{\text{Pre}(j) \setminus \{i-1,i\}}$, $X_p$ must be a collider on the path for the path to be d-connected. In addition, because of assumption \ref{con:2}, there must be $p \neq i+1$. So we have $p > i+1$. Because there is $\{X_{i-1} \cdots \leftrightarrow X_p\}$, $(X_{i-1}, X_p)$ must be a selection pair. As a result, $(X_{i-1}, X_p)$ cannot be a confounded pair and thus must also be a direct relation. According to assumption \ref{con:1}, $X_p$ cannot be the effect of more than one higher-order direct relation. Therefore, another direct relation can only be $(X_{p-1}, X_{p})_\mathrm{d}$. Then $X_{p-1}$ must not be a collider on the path (note that $p-1 \neq i$), and thus this case is impossible.

                \item $\{X_{i-1} \cdots \leftarrow X_p \cdots X_j\}$, where $j<p$. Because of assumption \ref{assum:t2a1}, $\{X_{i-1} \leftarrow X_p\}$ is impossible. Thus, there must be a selection pair $(X_{i-1}, X_p)_\mathrm{s}$. Moreover, according to assumption \ref{con:1}, $X_p$ cannot be the effect of more than one higher-order direct relation. Thus, $X_p$ must not be a collider on the path, so we can d-separate the path by conditioning on it. Since $X_p$ is already in $\mathcal{T}$ according to the algorithm, we have $X_{i-1} \indep X_j | \{\mathbf{X}_{\text{Pre}(j) \setminus \{i-1,i\}},\mathbf{T}\}$ and thus this case is impossible.
                
                \item $\{X_{i-1} \cdots \rightarrow X_p \cdots X_j\}$, where $j<p$. In this case, $(X_{i-1}, X_p)$ is either a higher-order direct relation or a confounded pair. We further discuss the following case:
                \begin{itemize}
                    \item $X_p$ is not a cause of a selection variable. In this case, $X_p$ must be a collider on the path, and the path is d-separated when not conditioning on $X_p$. So we have $X_{i-1} \indep X_j | \{\mathbf{X}_{\text{Pre}(j) \setminus \{i-1,i\}},\mathbf{T}\}$, which indicates that this case is impossible.
                    \item $X_p$ is a cause of a selection variable. According to assumption \ref{con:1}, it cannot be the effect of multiple higher-order directions. Therefore, if $X_p$ is the cause of another selection variable, the path will be d-separated by conditioning on both $X_p$ and $X_{j+1}$. Note that conditioning on $X_{j+1}$ does not make the path d-connected, since $X_{j+1}$ cannot be d-connected with $X_{i-1}$, $X_i$, or $X_{i+1}$ via a higher-order direct relation or a confounded pair according to assumption \ref{con:2}. Thus, there is always $X_{i-1} \indep X_j | \{\mathbf{X}_{\text{Pre}(j) \setminus \{i-1,i\}},\mathbf{T}\}$, which implies that this case is impossible.            
                \end{itemize}
                
                \item $\{X_{i-1} \cdots \leftrightarrow X_p \cdots X_j\}$, where $j<p$. Because of assumption \ref{assum:t2a1}, $\{X_{i-1} \leftarrow X_p\}$ is impossible. Thus, there must be a selection pair $(X_{i-1}, X_p)_\mathrm{s}$. At the same time, since $(X_{i-1}, X_p)$ cannot be both a selection pair and confounded pair, we have $(X_{i-1}, X_p)_{\mathrm{d} \neq \mathrm{c}}$, i.e., it is both a selection pair and a direct relation. According to assumption \ref{con:1}, it cannot be the effect of multiple higher-order directions. Therefore, if $X_p$ is the cause of another selection variable, the path will be d-separated by conditioning on both $X_p$ and $X_{j+1}$. Note that conditioning on $X_{j+1}$ does not make the path d-connected, since $X_{j+1}$ cannot be d-connected with $X_{i-1}$, $X_i$, or $X_{i+1}$ via a higher-order direct relation or a confounded pair according to assumption \ref{con:2}. Thus, there is always $X_{i-1} \indep X_j | \{\mathbf{X}_{\text{Pre}(j) \setminus \{i-1,i\}},\mathbf{T}\}$, which implies that this case is impossible.

                \item $\{X_{i-1} \cdots \leftarrow X_p \cdots X_j\}$, where $p=i$. In this case, according to assumption \ref{assum:t2a1}, it is impossible to have $\{X_{i-1} \rightarrow X_p\}$. Thus, there must be a confounded pair $(X_p, X_{i-1})_\mathrm{c}$. However, since $p=i$ and $(X_i, X_j)_\mathrm{c}$, $X_p$ cannot be caused by another latent confounder. Thus, this case is impossible.
                
                \item $\{X_{i-1} \cdots \rightarrow X_p \cdots X_j\}$, where $p=i$. In this case, let us denote $X_q$ as the closest observed variable to $X_{i}$ ($X_p$) on the path other than $X_{i-1}$. Since both $X_{i-1}$ and $X_i$ are not in the condition set, the discussion on the potential cases for $X_q$ is similar to that for $X_p$, except for the case where we have $\{X_{i-1} \rightarrow X_i \rightarrow X_{q} \cdots X_j \}$, where $q = i+1$.

                If $q = i+1$, the path is $\{X_{i-1} \rightarrow X_i \rightarrow X_{i+1} \cdots X_j \}$. On this path, if $X_{i+1}$ is caused by a latent confounder, this path can be d-connected even with $X_{i+1}$ in the condition set. However, since $\mathbf{T}$ consists of all variables where $(X_{i-1},X_v)_\mathrm{s}$, $(X_{i},X_v)_\mathrm{s}$, or $(X_{i+1},X_v)_\mathrm{s}$ for $v>j$, following the previous discussions, we still have $X_{i-1} \indep X_j | \{\mathbf{X}_{\text{Pre}(j) \setminus \{i-1,i\}},\mathbf{T}\}$. Thus, this case is impossible.

                \item $\{X_{i-1} \cdots \leftrightarrow X_p \cdots X_j\}$, where $p=i$. In this case, according to assumption \ref{assum:t2a1}, it is impossible to have $\{X_{i-1} \rightarrow X_p\}$. Thus, there must be a confounded pair $(X_p, X_{i-1})_\mathrm{c}$. However, since $p=i$ and $(X_i, X_j)_\mathrm{c}$, $X_p$ cannot be caused by another latent confounder. Thus, this case is impossible.
                
            \end{enumerate}
        
    \end{enumerate}

    Therefore, after Stage Three, we identify all confounded pairs and remove spurious direct relations caused by previously unidentified confounded pairs. As a result, we have shown that all selection pairs, direct relations, and confounded pairs in the ground-truth causal graph are identifiable. 
\end{proof}
\newpage

\section{Additional Experimental Results}

\begin{figure}[ht]
    \centering
    \begin{subfigure}{0.47\linewidth}
        \includegraphics[width=\linewidth]{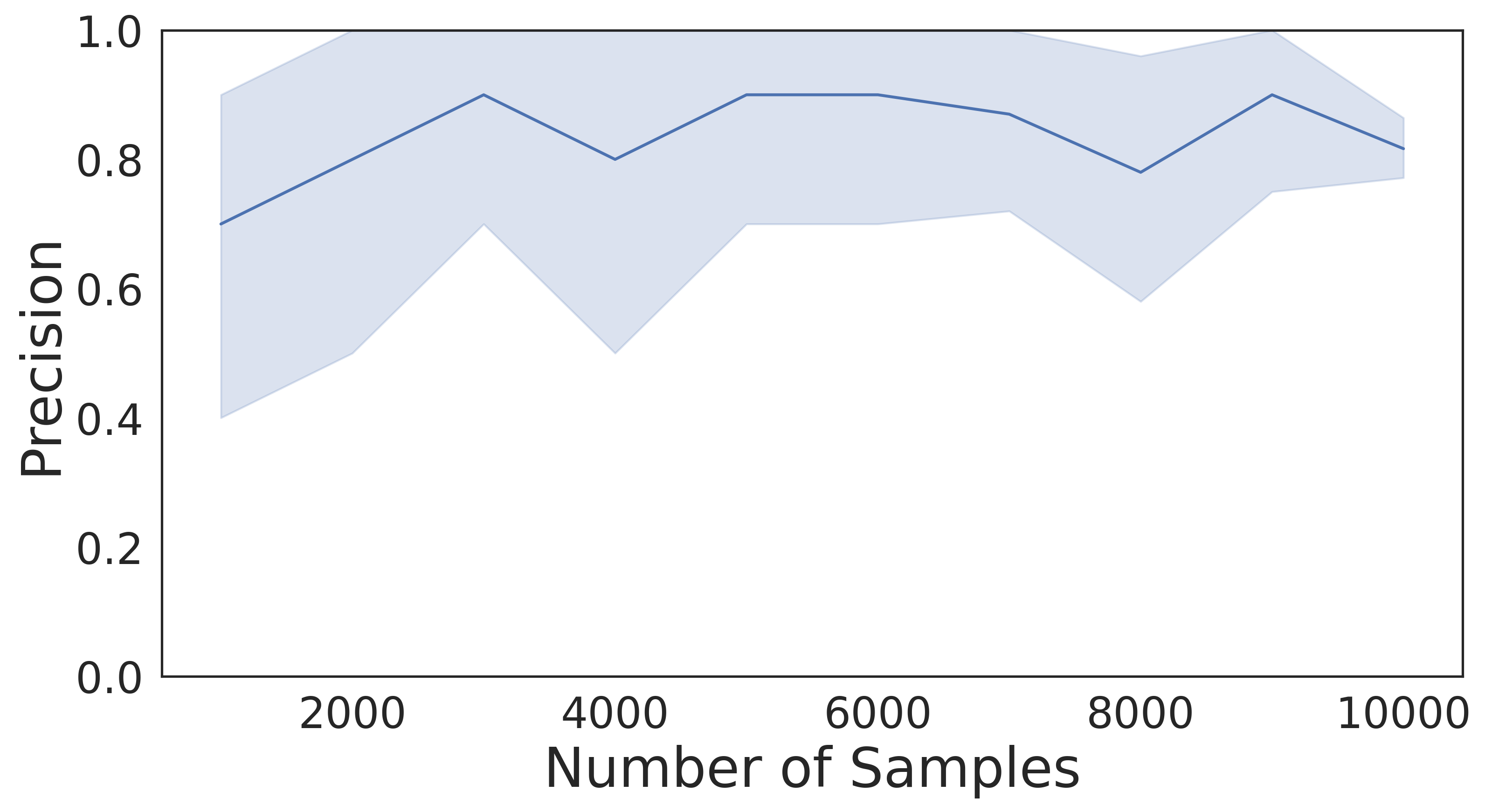}
        \label{fig:sample_precision}
    \end{subfigure}
    \hfill
    \begin{subfigure}{0.47\linewidth}
        \includegraphics[width=\linewidth]{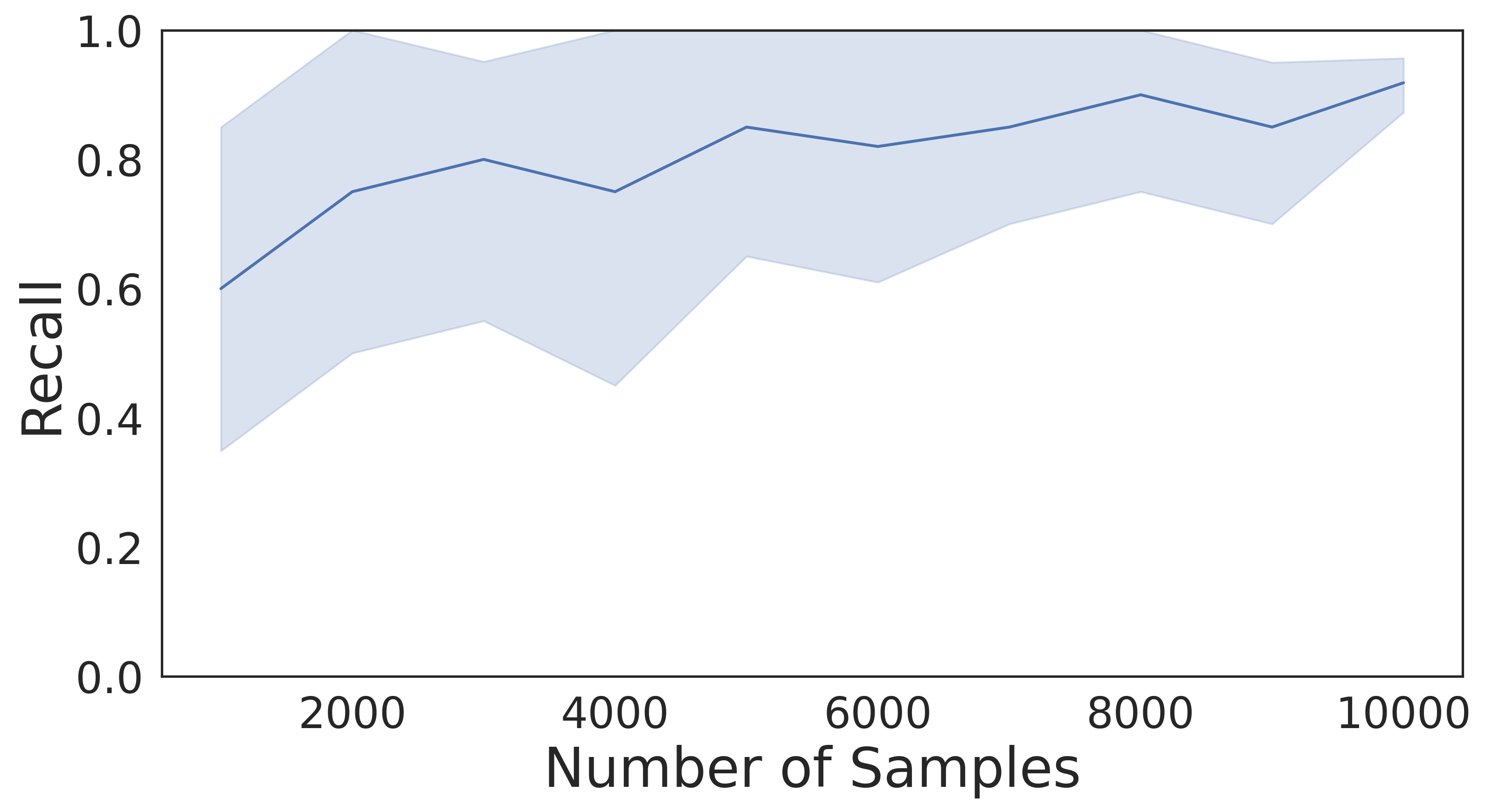}
        \label{fig:sample_recall}
    \end{subfigure}
    \vspace{-0.6em}
    \caption{Precision and recall for selection pairs w.r.t. different sample sizes.}
\label{fig:sample}
\end{figure}

\looseness=-1
In this section, we provide additional experimental results to further explore the proposed framework. We conduct experiments on synthetic datasets with sample size in $\{1000, 2000, \ldots, 10000\}$. All results are from 10 trials with different random seeds. Although our results focus on asymptotic properties, the algorithm can still identify most selection pairs when the sample size is relatively low. As expected, the variance of both precision and recall decreases as the sample size increases.




\end{document}